\newtheorem{theorem}{Theorem}
\newtheorem{lemma}{Lemma}
\theoremstyle{definition}
\newtheorem{definition}{Definition}
\newtheorem{assumption}{Assumption}
\newtheorem*{remark}{Remark}
\newtheorem{corollary}{Corollary}
\mathchardef\mhyphen="2D
\newcommand{\kw}[1]{{\ensuremath {\mathsf{#1}}}\xspace}
\newcommand{\eat}[1]{}
\newcommand{\revision}[1]{{{\textcolor{black}{#1}}}} 
\DeclareRobustCommand\onedot{\futurelet\@let@token\@onedot}
\def\@onedot{\ifx\@let@token.\else.\null\fi\xspace}
\def\eg{\emph{e.g}\onedot} 
\def\ie{\emph{i.e}\onedot} 
\def\etc{\emph{etc}\onedot} 
\def\wrt{w.r.t\onedot} 
\newcommand{\xhdr}[1]{\noindent{{\bf #1.}}}
\begin{document}
%
\title{Adversarial Attack Framework on Graph Embedding Models with Limited Knowledge}
%
%
%
%
\author{Heng~Chang,
        Yu~Rong,
        Tingyang~Xu,
        Wenbing~Huang,
        Honglei~Zhang,
        Peng~Cui,
        Xin Wang,
        Wenwu~Zhu,~\IEEEmembership{Fellow,~IEEE},
        and~Junzhou~Huang,~\IEEEmembership{Fellow,~IEEE}
\IEEEcompsocitemizethanks{

\IEEEcompsocthanksitem H. Chang is with the Tsinghua-Berkeley Shenzhen Institute, Tsinghua University, Beijing, China.
E-mail: changh17@mails.tsinghua.edu.cn
\IEEEcompsocthanksitem Y. Rong, T. Xu and H. Zhang are with the Tencent AI Lab, Tencent, Shenzhen, China.
E-mail: yu.rong@hotmail.com, Tingyangxu@tencent.com
\IEEEcompsocthanksitem W. Huang is with Institute for AI Industry Research (AIR), Tsinghua University, Beijing, China. E-mail: hwenbing@126.com.
\IEEEcompsocthanksitem P. Cui, X. Wang and W. Zhu are with the Department of Computer Science and Technology, Tsinghua University.
E-mail: cuip@tsinghua.edu.cn, xin\_wang@tsinghua.edu.cn, wwzhu@tsinghua.edu.cn
\IEEEcompsocthanksitem J. Huang is with the Department of Computer Science and Engineering, University of Texas at Arlington.
E-mail: jzhuang@uta.edu
\IEEEcompsocthanksitem Yu Rong and Wenwu~Zhu are the corresponding authors. \protect \\
}

}

%
%

\markboth{Journal of \LaTeX\ Class Files,~Vol.~14, No.~8, August~2020}%
{Shell \MakeLowercase{\textit{et al.}}: Bare Demo of IEEEtran.cls for Computer Society Journals}
%



\IEEEtitleabstractindextext{%
\begin{abstract}
With the success of the graph embedding model in both academic and industrial areas, the robustness of graph embeddings against adversarial attacks inevitably becomes a crucial problem in graph learning. Existing works usually perform the attack in a white-box fashion: they need to access the predictions/labels to construct their adversarial losses. However, the inaccessibility of predictions/labels makes the white-box attack impractical for a real graph learning system. 
This paper promotes current frameworks in a more general and flexible sense -- 
we consider the ability of various types of graph embedding models to remain resilient against black-box driven attacks.
We investigate the theoretical connection between graph signal processing and graph embedding models, and formulate the graph embedding model as a general graph signal process with a corresponding graph filter. Therefore, we design a generalized adversarial attack framework: \textit{GF-Attack}. Without accessing any labels and model predictions, \textit{GF-Attack} can perform the attack directly on the graph filter in a black-box fashion. We further prove that \textit{GF-Attack} can perform an effective attack without assumption on the number of layers/window-size of graph embedding models. To validate the generalization of \textit{GF-Attack}, we construct \textit{GF-Attack} on five popular graph embedding models. Extensive experiments validate the effectiveness of \textit{GF-Attack} on several benchmark datasets.

\end{abstract}

\begin{IEEEkeywords}
Adversarial attack, deep graph learning, graph neural networks, graph representation learning.
\end{IEEEkeywords}}

\maketitle

\IEEEdisplaynontitleabstractindextext

%
\IEEEpeerreviewmaketitle

\IEEEraisesectionheading{\section{Introduction}\label{sec:introduction}}

%
%
%
%

\IEEEPARstart{G}{raph} Embedding Models (GEMs)~\cite{cui2018survey,rong2020deep,huang2018adaptive,peng2020graph}, which elaborate the expressive power of deep learning on graph-structured data, have achieved remarkable success in various domains, such as drug discovery~\cite{duvenaud2015convolutional,li2018adaptive,rong2020selfsupervised},  social network analysis~\cite{ma2019detect, bian2020rumor,li2019semi}, computer version~\cite{zeng2019graph,wang2019graph}, medical imaging~\cite{wang2019graph,raju2020graph}, financial surveillance~\cite{paranjape2017motifs}, structural role classification~\cite{chang2021spectral,gu2020implicit} and automated machine learning~\cite{guan2021autogl}.
Given the increasing popularity and success of these methods, several recent papers have investigated the risk of GEMs against adversarial attacks, as other researchers had examined for convolutional neural networks~\cite{akhtar2018threat}. The papers~\cite{ICML2018Adversarial,KDD2018Adversarial,jin2021power} have already shown that various kinds of graph embedding methods, including GCN~\cite{ICLR2017SemiGCN}, DeepWalk~\cite{KDD2014Deepwalk}, \etc, are vulnerable to adversarial attacks. 

\begin{figure*}[htb]
\centering
\includegraphics [width=0.9\textwidth]{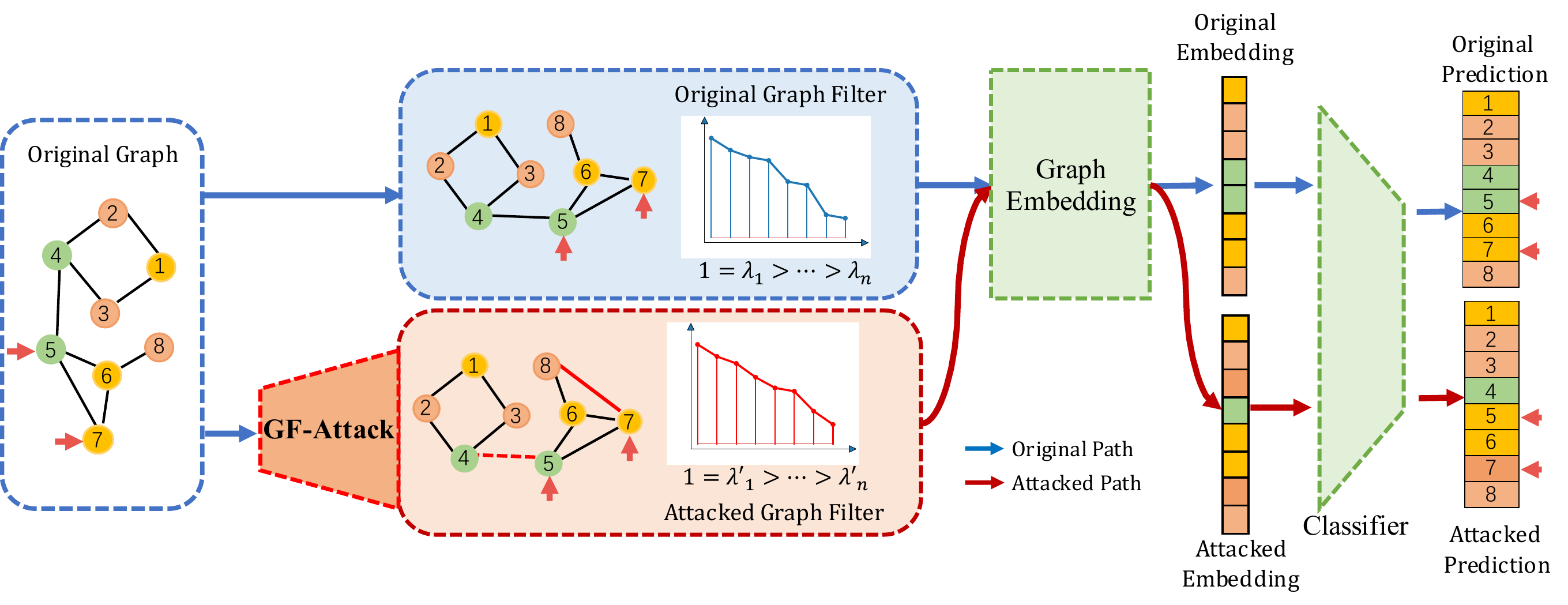}

\caption{The overview of the whole attack procedure of \textit{GF-Attack}. Given target vertices $5$ and $7$, \textit{GF-Attack} aims to misclassify them by attacking the graph filter and producing adversarial edges (edge $e_{45}$ deleted and edge $e_{78}$ added ) on the graph structure. The common graph embedding block refers to the general target GEM and can be any kind of the potential GEMs, illustrating the flexibility and extensibility of GF-Attack. In this vein, \textit{GF-Attack} would not change the target embedding model.}
\label{fig.attackoverview}
\end{figure*}

Undoubtedly, the potential attack risk is rising for modern graph learning systems. For instance, by sophisticated constructed social bots and following connections, it's possible to fool the recommendation system equipped with GEMs to give wrong recommendations. Another example is from the credit prediction model. The model tends to suppose that users connecting with high-credit users also have high credits. By constructing fake connections with high-credit users, fraudsters can easily fool the credit prediction model and lead to severe consequences. \revision{This potential risk calls for the attention on strengthening the security of GEMs. In this vein, the need for a new adversarial attack framework is especially essential for a better understanding of the adversarial examples existing in graphs as well as the design of more robust GEMs.}

Regarding the amount of information from both the target model and data required for the generation of adversarial examples, all graph adversarial attackers fall into three categories (arranged in ascending order of difficulties):
\begin{itemize}
    \item White-box Attack (\textbf{WBA}): the attacker can access any information, namely, input data (e.g., adjacency matrix and feature matrix), labels, gradients, model parameters, model predictions, etc. However, this situation could be impractical since such information usually is well protected or inaccessible in the real world.
    \item Practical White-box Attack (\textbf{PWA}) (or Grey-box Attack): the attacker can access any information except the model gradients and parameters. Still, such information of GEMs is also difficult for attackers to obtain. For example, users in the credit prediction model are usually encoded to be anonymous and the labels of users are hard to be reached.
    \item Restricted Black-box Attack (\textbf{RBA}): the attacker can only access the adjacency matrix and attribute matrix. Access to parameters, labels, and predictions is prohibited. Being the most difficult but most practical setting, RBA is more natural in a real-world scenario, because the input data is always the only information we can easily obtain in most situations.
\end{itemize}

\begin{table}[!t]
\renewcommand{\arraystretch}{1.11}
\caption{Summary of information accessibility under three attack settings.}  \label{tab.settings}%
\vspace{-7mm}
\begin{center}
\resizebox{\columnwidth}{!}{%
\begin{tabular}{|l|l|l|l|l|}
\hline
Settings & Parameters & Predictions & Labels & Input Data  \\
\hline
WBA & \checkmark & \checkmark & \checkmark     & \checkmark \\
\hline
PWA &  & \checkmark & \checkmark  & \checkmark \\
\hline
RBA &  &  &  & \checkmark \\
\hline
\end{tabular}
}
\end{center}
\vspace{-7mm}
\end{table}

Table~\ref{tab.settings} summarizes the information accessibility under different adversarial attack settings.
Despite the fruitful results \cite{sun2018adversarial,KDD2018Adversarial,ICLR2019Meta} which absorb ingredients from exiting adversarial methods on convolutional neural networks, obtained in attacking graph embeddings under both WBA and PWA setting, however, the target model parameters/gradients, the labels, and predictions are seldom accessible in real-life applications. In other words, it is almost impossible for the WBA and PWA attackers to perform a threatening attack on real systems.
Meanwhile, current RBA attackers are either reinforcement learning-based \cite{ICML2018Adversarial}, which has low computational efficiency or derived merely only from the structure information without considering the feature information~\cite{icml2019adversarial}. Therefore, how to perform the effective adversarial attack toward GEM relying on the input adjacency matrix and attribute matrix, a.k.a., RBA setting, is still more challenging yet meaningful in practice.

The core task of the adversarial attack on the GEM is to damage the quality of output embeddings to harm the performance of downstream tasks within the manipulated features or graph structure, i.e., vertex or edge insertion/deletion. Namely, finding the embedding quality measure to evaluate the damage on graph embeddings is vital. For the WBA and PWA attackers, they have enough information to construct this quality measure, such as the loss function of the target model. In this vein, the attack can be performed by simply maximizing the loss function reversely given the known labels, either through gradient ascent \cite{ICML2018Adversarial} or a surrogate model \cite{KDD2018Adversarial,ICLR2019Meta}. However, the RBA attacker cannot employ the limited information to recover the loss function of the target model. In a nutshell, the biggest challenge of the RBA attacker is: how to figure out the goal of the target model barely by the input data. 

In this paper,  we try to understand GEMs from a new perspective and propose an attack framework: \textit{GF-Attack}, which can perform an adversarial attack on various kinds of GEMs. Specifically, we formulate a GEM as a general graph signal processing with a corresponding graph filter which can be computed by the input adjacency matrix. 
Therefore, we employ the graph filter as well as the corresponding feature matrix to construct the embedding quality measure as a $T$-rank approximation problem. In this vein, instead of attacking the loss function, we aim to directly attack the graph filter of given GEMs without knowing the labels and predictions. Therefore, \textit{GF-Attack} can perform an attack in a restricted black-box fashion by only assuming what type is the victim model. Furthermore, 
through evaluating this $T$-rank approximation problem, \textit{GF-Attack} is capable of performing the adversarial attack on any GEM that can be formulated as a general graph signal processing. Figure~\ref{fig.attackoverview} provides an overview of the whole attack procedure of \textit{GF-Attack}. 
\revision{Moreover, by theoretically analyzing the alternate adversarial loss on graph filter, we show that when we construct the attack loss in \textit{GF-Attack} with higher-order polynomial, the generated adversarial edges could perform more effective attacks on GEMs with a smaller number of layers/window-size.}
To demonstrate the effectiveness of \textit{GF-Attack} attacking various kinds of GEMs, we give the quality measure construction for four popular GEMs (GCN, SGC, DeepWalk, LINE). Empirical results show that our general attack method is capable of effectively performing adversarial attacks on popular unsupervised/semi-supervised GEMs on real-world datasets in a restricted black-box fashion.

The primary version has been published in the Thirty-Fourth AAAI Conference on Artificial Intelligence (AAAI-20)~\cite{chang2020restricted}. The contributions of the conference version are summarized as follows:
\begin{itemize}
    \item We construct the theoretical connection between GEM and graph signal processing with the corresponding graph filters. 
    \item We formulate the embedding quality measure as a $T$-rank approximation problem via graph filters, and the RBA setting is satisfied in this way. A general attack framework \textit{GF-Attack} is proposed accordingly.
    \item Experiments towards attacking four popular GEMs on real-world datasets reveal the effectiveness of the proposed framework \textit{GF-Attack}.
\end{itemize}

In the conference version~\cite{chang2020restricted}, \textit{GF-Attack} performs attack with the additional assumption on the number of layers/window-size in GEMs. In this extended version, we further analyze the generalization ability of \textit{GF-Attack} on attacking GEMs with different layers/window-size to further remove the dependency on this assumption, especially from the perspective of theoretical findings. We list the key additional contributions here, independently:
\revision{
\begin{itemize}
    \item By investigating the adversarial loss of \textit{GF-Attack}, we prove that \textit{GF-Attack} can perform the effective attack without additional assumption on the number of layers/window-size of GCNs and sampling-based GEMs,
    which is an important step further to a more ideal black-box attack setting.
    \item A parameterized-filter variant of GCNs, ChebyNet, is included as victim model in experiments to further demonstrate the attack ability of \textit{GF-Attack} aside from fixed-filter GCNs.
    \item We adopt a more black-box setting, \ie using the same attack loss for all victim models, to further empirically validate the effectiveness of \textit{GF-Attack} under both poisoning and evasion settings.
    \item Ablation studies on more benchmarks focusing on computational efficiency and multi-edge attack complete the empirical results. The additional results further demonstrate that \textit{GF-Attack} enjoys both effectiveness and efficiency on all benchmarks.
\end{itemize}
}
\eat{
The paper is organized as follows. Section~\ref{sec.related} discusses the related work and Section~\ref{sec.Background} provides with notations throughout the paper and defines the problem. We describe the methodology and design of \textit{GF-Attack} in Section~\ref{sec.GSPG} and report the experimental results against the state-of-the-art baselines in Section~\ref{sec.exp}. Afterwards, Section~\ref{sec.conclusion} concludes the paper.
}

\section{Related work}\label{sec.related}
\xhdr{Graph Learning and Graph Embedding Models}
Graph embedding models (GEMs)~\cite{goyal2018graph,cui2018survey,zhang2020deep} are essential techniques for graph analytic tasks. A taxonomy of GEMs can be broadly divided into four kinds~\cite{goyal2018graph}: (i) factorization methods, (ii) random walk (sampling-based) techniques, (iii) deep learning, and (iv) other miscellaneous strategies. Among them, random walk and deep learning-based methods are the most representative categories. For random walk techniques, DeepWalk~\cite{KDD2014Deepwalk}, LINE~\cite{WWW2015Line} and node2vec~\cite{grover2016node2vec} adopt SkipGram, a neural language model that aims to maximize the co-occurrence probability among the words that appear within a window, for graph embeddings. These methods then preserve different orders of network proximity with the learned low-dimensional vectors. We denote the methods from this category as sampling-based GEMs. As for deep learning, Graph Convolutional Networks (GCNs) such as GCN~\cite{ICLR2017SemiGCN} and SGC~\cite{sgc_icml19} generalize the deep neural model to non-Euclidean domains and learn the low-dimensional graph embeddings to maintain different scales of structural similarity. 
\revision{Regarding to whether the graph filters in GCNs parameterized, we can category GCNs as fixed-filter (\eg, GCN and SGC), and parameterized-filter (\eg, ChebyNet~\cite{Defferrard2016ChebNet}, GAT~\cite{velickovic2018gat} and GraphHeat~\cite{xu2019graph}) variants. In this work, our theoretical analysis focuses on fixed-filter GCNs, and the empirical experiments are evaluated by viewing both types of variants as victim models.}
For an explanation of GEMs, \citet{WSDM2018NetworkEmbedding} shows some insights on the understanding of sampling-based GEMs. However, they focus on proposing new graph embedding methods rather than building up a theoretical connection.

\xhdr{Adversarial Attacks on Graphs}
Recently, adversarial attacks on deep learning for graphs have drawn unprecedented attention from researchers.
\citet{ICML2018Adversarial} exploits a reinforcement learning-based framework under the RBA setting. However, they restrict their attacks on edge deletions only for vertex classification. Even more, they do not evaluate the \textit{transferability}~\cite{tramer2017space}, which denotes the phenomenon that the adversarial examples generated for a specific model can also be harmful when they are used on another model. Transferability is an important ability of adversarial examples.
\citet{KDD2018Adversarial} proposes attacks based on a surrogate model and they can do both edge insertion/deletion in contrast to \citet{ICML2018Adversarial}. But their method utilizes additional information from labels, which is under the PWA setting. Further, \citet{ICLR2019Meta} utilizes meta-gradients to conduct attacks under black-box setting by assuming the attacker uses a surrogate model same as \citet{KDD2018Adversarial}. Their performance highly depends on the assumption of the surrogate model, and also requires label information. Moreover, they focus on the global attack setting.
\citet{xu2019topology} proposes a gradient-based method under the WBA setting and overcomes the difficulty brought by discrete graph data.
In the meantime, \citet{wu2019adversarial} also suggests using the integrated gradients to search for edges and features as adversarial examples under the WBA setting.

\citet{icml2019adversarial} considers a different adversarial attack task on vertex embeddings under the RBA setting. Inspired by \citet{WSDM2018NetworkEmbedding}, they maximize the loss obtained by DeepWalk with matrix perturbation theory while only considering the information from the adjacency matrix.
Besides, several other works also open doors for interesting research directions in many ways. \citet{li2020adversarial} proposes an iterative learning framework to hide targeted individuals from the community detection task by GEMs in a black-box fashion. \citet{entezari2020all} finds that only the high-rank singular components of the graph are affected by the attack method Nettack~\cite{KDD2018Adversarial}.
Then \citet{entezari2020all} suggests that the power of Nettack can be greatly reduced if a low-rank approximation of the graph is utilized in contrast to the original clean graph. This finding is consistent with our analysis in measuring the embedding quality from Section~\ref{sec.GSPG} that we can optimize the low-rank approximation of the output embeddings reversely to generate adversarial edges.
Meanwhile, 
\citet{ma2020towards} studies the problem of the black-box attacks on graph neural networks by enforcing a novel constraint. In \citet{ma2020towards}, attackers can only have access to a subset of vertices. Meanwhile, only a small number of candidates can be selected as target vertices. At the same time, \citet{vidanage2020graph,zhang2020adversarial} consider the adversarial attack on graph neural networks from a new perspective. They focus on perturbing the graph structure to degrade the quality of the task of deep graph matching. Some efforts~\cite{elinas2020variational,zhang2020gnnguard,wu2020graph,chang2021not} have also been paid on the defense against the adversarial attack on GEMs recently.

Remarkably, despite all the above-introduced works except \citet{ICML2018Adversarial} showing the existence of transferability in GEMs by experiments, they all lack theoretical analysis on this implicit connection. In this work, for the first time, we theoretically connect different kinds of GEMs and propose a general optimization problem from parametric graph signal processing. An effective algorithm is developed afterwards under the RBA setting.

\section{Preliminaries}\label{sec.Background}
Let $G(\mathcal{V},\mathcal{E})$ be an attributed graph, where $\mathcal{V}$ is a vertex set with size $n = |\mathcal{V}|$ and $\mathcal{E}$ is an edge set with $|\mathcal{E}|$ edges.  Denote $A \in \{0,1\}^{n \times n}$ as an adjacency matrix and $X \in \mathbb{R}^{n \times l}$ as a feature matrix with dimension $l$\deleted[id==RR]{ for vertices}. $D_{ii} = \sum_{j}A_{ij}$ refers to the degree matrix. $\text{vol}(G) = \sum_{i}\sum_{j}A_{ij} = \sum_{i}D_{ii}$ denotes the volume of $G$. For consistency, we denote the perturbed adjacency matrix as $A'$ and the normalized adjacency matrix as $\hat{A} = D^{-\frac{1}{2}}AD^{-\frac{1}{2}}$. Symmetric normalized Laplacian and random walk normalized Laplacian are referred as $L^{sym} = I_n - D^{-\frac{1}{2}}AD^{-\frac{1}{2}}$ and $L^{rw} = I_n - D^{-1}A$, respectively. \revision{We also denote the attributed graph after attack as $G'(\mathcal{V'},\mathcal{E'})$, and the corresponding adjacency matrix as $A'$. The other notations of the perturbed graph are defined analogously.}

To cope with the data with graph structure in ML tasks, GEMs aim to encode sufficient features in graphs. Concretely, given a graph $G$, the goal is to learn a mapping function $\mathscr{M}: (A,X) \rightarrow \mathbb{R}^{n \times d}$  on the graph that represent vertex into a $d$-dimensional vector space with the preservation of structural ($A$) and non-structural ($X$) properties. According to the demand of random walk paths (RWs), deep learning-based GEMs generally fall into two categories \cite{cai2018comprehensive}: convolution-based Graph Neural Networks (GCNs), \emph{e.g.} GCN~\cite{ICLR2017SemiGCN}, and sampling-based GEMs, \emph{e.g.} DeepWalk~\cite{perozzi2014deepwalk}.

Given a GEM $\mathscr{M}_\Theta$ parameterized by $\Theta$ and a graph $G(\mathcal{V}, \mathcal{E})$,  the adversarial attack on graph aims to perturb the learned vertex representation $Z = \mathscr{M}_{\Theta}(A, X)$ to damage the performance of the downstream learning tasks. In a summary, three components in graphs can be attacked as targets:
\begin{itemize}
  \item Attack on $\mathcal{V}$: Add/delete vertices in graphs. This operation may change the dimension of the adjacency matrix $A$.
  \item Attack on $\mathcal{E}$: Add/delete edges in graphs. This operation would lead to the changes of entries in the adjacency matrix $A$. This kind of attack is also known as \emph{structural attack}.
  \item Attack on $X$: Modify the attributes attached on vertices.
\end{itemize}
In this paper, we mainly focus on studying the adversarial attacks on the graph structure, i.e., adding/deleting the edges in graphs, since attacking $\mathcal{E}$ is more practical than others in real applications \cite{CIKM2012Gelling}.

\revision{Meanwhile, considering in which stage the adversarial attack happens, we can also category the attack that happens at the test time as \emph{evasion} attack, and at the training time as \emph{poisoning} attack~\cite{KDD2018Adversarial}. In this work, we mainly focus on evasion attack, since it is more realistic in comparison to the accessibility to training data.}

\subsection{Graph Signal Filtering}
Graph Signal Processing (GSP) extends the concepts in Discrete Signal Processing and focuses on the analysis and processing of the data points whose relations are modeled as graphs~\cite{shuman2013GSP,ortega2018graph}.  Similar to DSP, these data points can be treated as \emph{signals}. Thus the definition of \emph{graph signal} is:
\begin{definition}[graph signal]\label{def.gs}
Given a graph $G(\mathcal{V}, \mathcal{E})$, a \emph{graph signal} $\mathbf{x}$ is a mapping from vertex set $\mathcal{V}$ to real numbers:
\begin{align}
    \notag \mathbf{x}: & \mathcal{V} \rightarrow \mathbb{R},\\
               & v_i \mapsto x_i.
\end{align}
\end{definition}
In Definition~\ref{def.gs}, each signal $\mathbf{x}$ is isomorphic in $G$. We can rewrite it into a vector: $\mathbf{v}=[v_1,\ldots,v_n]$. In this sense, the feature matrix $X$ can be treated as graph signals with $l$ channels.

To understand the graph signal $x$, it's essential to consider the graph structure. In general, a graph filter $\mathscr{H}$ is a system that takes a graph signal $\mathbf{x}$ as input and produces a new signal as an output. Namely, $\mathscr{H}$ performs a signal transformation on the original graph signals. 
In traditional DSP, \emph{shift filter ($z$-transform)} is a basic but non-trivial transformation which delays the signals in the time domain. Thus we can extend the definition of \emph{shift filter} to graph signals:
\begin{definition}[graph-shift filter]
Given a graph $G(\mathcal{V}, \mathcal{E})$, a graph-shift filter $S \in \mathbb{R}^{n\times n}$ is a matrix satisfying: $\forall i \neq j$ and $e_{ij} \notin \mathcal{E}$ , $S_{ij} = 0$. 
\end{definition}
The graph-shift filter $S$ reflects the locality property of graphs, i.e., it represents a linear transformation of the signals on one vertex and its neighbors. It's the basic building blocks to construct $\mathscr{H}$. Some common choices of $S$ include the adjacency matrix $A$ and the Laplacian $L=D - A$, where $D$ is the degree matrix $D_{ii} = \sum_{j=1}^{n}A_{ij}$. 




\subsection{Adversarial Attack Definition}
Formally, given a fixed budget $\beta$ indicating that the attacker is only allowed to modify $2\beta$ entries in $A$ (undirected), the adversarial attack on a GEM \revision{$\mathscr{M}$} can be formulated as \cite{icml2019adversarial}:
\begin{align}\label{equ.problemdef-ori}
    \revision{\notag
    \arg\max\limits_{A'}}  & \,\, \revision{\mathcal{L}_{\textrm{attack}}(A', X; \Theta) = \mathcal{L}_{\textrm{attack}}(Z)}\\ 
    \revision{ \notag
    \text{s.t.}\; } & \revision{\notag Z = \mathscr{M}(A', X; \Theta^{*}),}\\ 
    & \revision{ \notag
    \Theta^{*} = \arg\min_{\Theta}\mathcal{L}_{\textrm{model}}(A', X; \Theta), } \\
    & \revision{
    \| A' - A\|_{0} = 2\beta,}
\end{align}
\revision{where $Z = \mathscr{M}(A', X; \Theta^{*})$ is the embedding output of the model $\mathscr{M}$ with the optimal model parameters $\Theta^{*}$. $\mathcal{L}_{\textrm{model}}(A', X; \Theta)$ is the loss function of the victim model minimized by $\Theta$. $\mathcal{L}_{\textrm{attack}}(Z)$ is defined as the attack loss function measuring the damage on output embeddings. 
For the WBA setting, $\mathcal{L}_{\textrm{attack}}(Z)$ can be defined as the minimization of the target loss, i.e.,  $\mathcal{L}_{\textrm{attack}}(Z) = \inf\limits_{\Theta}\mathcal{L}_{\textrm{model}}(A', X; \Theta)$. This is generally a bi-level optimization problem since we need to re-train the model during attack to keep $\Theta^{*}$ as optimal in $Z$. In this work, we consider the evasion attack scenario, where $\Theta^{*} = \arg\min_{\Theta}\mathcal{L}_{\textrm{model}}(A, X; \Theta)$ are learned on the clean graph and remains unchanged during attack. In this way, we can treat the model parameters $\Theta^{*}$ as constants, which eases the construction of the attack loss from $\mathcal{L}_{\textrm{attack}}(Z)$ to $\mathcal{L}_{\textrm{attack}}(A', X)$. }

\revision{Theoretically analyzing poisoning attacks is usually harder since the subsequent learning of $\Theta^{*} = \arg\min_{\Theta}\mathcal{L}_{\textrm{model}}(A, X; \Theta)$ should be considered~\cite{KDD2018Adversarial}, therefore we choose to concentrate on evasion setting and leave the analysis under poisoning setting as future work. Note that though our loss is designed under the evasion setting, our main experimental results are under both settings, which demonstrate that our proposed attack loss can effectively destroy the performance of GEMs in practice.
}

\section{Methodologies}\label{sec.GSPG}
\revision{From the perspective of GSP,  we can formulate the process of generating embeddings $Z = \mathscr{M}(A, X; \Theta^{*})$ as a generalization of signal processing, according to the graph filtering together with feature transformation:} 
\begin{align}
    \revision{\notag\kw{graph}\text{ }\kw{filtering:} } & \revision{\tilde{X} = \mathscr{H}(X) = h(S)X}, \\
    \revision{\kw{feature}\text{ }\kw{transformation:} } & \revision{Z =  \sigma(\tilde{X}\Theta^{*})},
    \label{equ.GF-Attack}
\end{align}
\revision{
where $\sigma(\cdot)$ denotes the activation function, and $\Theta \in \mathbb{R}^{l \times l'}$ denotes the transformation weights from $l$ input channels to $l'$ output channels. 
$\mathscr{H} = h(S)$ denotes a graph signal filter, where $S = f(A)$ is the graph-shift filter and a function of adjacency matrix $A$, where the function is decided by a specific GEM.
$\mathscr{H}$ is usually constructed by a polynomial function $h(x)=\sum_{i=0}^La_ix^i \in \mathbb{R}^{n \times n}$ with graph-shift filter $S$. Many GEMs, including GCN, Deepwalk, etc, can be formulated as Eq.~\eqref{equ.GF-Attack} with different graph signal filter $\mathscr{H}$. Table~\ref{tab:allres} summarizes the graph filter of different GEMs. We can find that the formulation from the process in Eq.~\eqref{equ.GF-Attack} is so general that we can have the following assumption on the victim model:
}

\begin{assumption}\label{ass.GSP Assumption}
\revision{
For a given victim GEM $\mathscr{M}$, the output embedding of $\mathscr{M}$ is learned through the process of the generalization of GSP as analyzed in ~\eqref{equ.GF-Attack}.
}
\end{assumption}

\revision{
Under Assumption~\ref{ass.GSP Assumption}, since the model parameters $\Theta^{*}$ are kept as constant as discussed before, it's intuitively adequate to focus on attacking the process of graph filtering $\tilde{X} = \mathscr{H}(X) = h(S)X$ for most GEMs. As a result, we can directly damage the quality of the output embedding $Z$ through attacking $\tilde{X}$ by destroying the graph signal filter $\mathscr{H} = h(S)$. In this way, the optimization problem under our setting will be collapsed to:
}
\begin{align}\label{equ.problemdef-simp}
    \revision{\notag
    \arg\max\limits_{A'}}  & \,\, \revision{\mathcal{L}_{\textrm{attack}}(A', X) = \mathcal{L}_{\textrm{attack}}(\tilde{X'})}\\ 
    \revision{
    \text{s.t.}\; } & \revision{\tilde{X'} = h(S')X, S' = f(A'), \| A' - A\|_{0} = 2\beta.}
\end{align}

\revision{
We name this way of constructing attack loss $\mathcal{L}_{\textrm{attack}}$ targeting the graph signal filter in the victim GEM under Assumption~\ref{ass.GSP Assumption} as a general framework, \emph{Graph Filter Attack~(\textit{GF-Attack})}. Since the attack loss $\mathcal{L}_{\textrm{attack}}$ in \textit{GF-Attack} does not involve the model parameters $\Theta$ and predictions, \textit{GF-Attack} is a RBA framework for generating adversarial examples as discussed in the Introduction. 
}


\begin{table*}[!t]
  \centering
    \vspace{-1.5mm}
    \caption{From the perspective of general graph signal processing, we can formulate the GEMs with corresponding graph filters.}
    \resizebox{0.9\textwidth}{!}{%
    \begin{tabular}{|l|l|l|l|l|}
    \hline
    Graph Embedding Models & Graph-shift filter $S$ & Polynomial Function $h(x)$ & Input Signal & Parameters $\Theta$  \\
    \hline
    GCN \cite{ICLR2017SemiGCN}  & $L^{sym}-I_n$ & $h(x) = x$ & X & Any \\
    \hline
    SGC \cite{sgc_icml19}  & $L^{sym}-I_n$ & $h(x) = x$ & X     & Any \\
    \hline
    ChebyNet \cite{Defferrard2016ChebNet} & $L^{sym}-I_n$ & $h(x) = \sum_{k=0}^{K}T_k(x)$ & X     & Any \\
    \hline
    LINE \cite{WWW2015Line} & $I_n - L^{rw}$ & $h(x) = x$ & $\frac{1}{b}I_n$ & $vol(G)D^{-1}$ \\
    \hline
    DeepWalk  \cite{perozzi2014deepwalk} & $I_n - L^{rw}$ & $h(x)=\sum_{k=0}^{K}x^k$ & $\frac{1}{b}I_n$ & $vol(G)D^{-1}$ \\
    \hline
    \end{tabular}
    \label{tab:allres}%
    }
    \vspace{-3mm}
\end{table*}%

\subsection{Embedding Quality Measure $\mathcal{L}_{\textrm{attack}}$ of \textit{GF-Attack}}
\revision{Now that we have the formulation~\eqref{equ.problemdef-simp} of the optimization problem under our general framework \textit{GF-Attack}, the next step is to design an effective measure for evaluating the quality of the output embeddings.
Recent works \cite{yang2015network,nar2019cross} demonstrate that the output embeddings of GEMs can have a very low rank.
Therefore, we establish the general measure of embedding quality in ~\eqref{equ.problemdef-simp} accordingly as a $T$-rank approximation problem~\cite{WSDM2018NetworkEmbedding}:}
\begin{align}
\revision{
\mathcal{L}_{\textrm{attack}}(A', X) =  \|\tilde{X'} - \tilde{X'}_T \|_F^2 = \|h(S')X - h(S')_TX \|_F^2, \notag
}
\end{align}
where $h(S')$ is the polynomial graph filter, $S'$ is the graph shift filter constructed from the perturbed adjacency matrix $A'$. $ h(S')_T$ is the $T$-rank approximation of $h(S')$. According to the low-rank approximation, $\mathcal{L}_{\textrm{attack}}(A', X)$ can be rewritten as:
\begin{align}
\mathcal{L}_{\textrm{attack}}(A', X) &= \| \sum_{i = T + 1}^{n} \lambda'_{i} \mathbf{u}_{i}\mathbf{u}_{i}^{T}X \|^{2}_{F} \label{equ.low-rank of loss}\\
\notag & \revision{\leq  \sum_{i = T + 1}^{n}  \|\lambda'_{i}\|_{F}^{2} \|\mathbf{u}_{i}\|_{F}^{2} \|\mathbf{u}_{i}^{T} X\|_{F}^{2}}\\
& \leq \sum_{i = T + 1}^{n} {\lambda'_{i}}^{2} \cdot \sum_{i = T + 1}^{n} \|\mathbf{u}_{i}^{T}X\|_2^2,
\label{equ.loss}
\end{align}
where $n$ is the number of vertices. $h(S)=  U\Lambda U^{\text{T}}$ is the eigen-decomposition of the graph filter $h(S)$. $h(S)$ is a symmetric matrix. $\Lambda= diag(\lambda_1,\cdots,\lambda_n)$, $U= [\mathbf{u}_{1},\cdots,\mathbf{u}_{n}]$ are the eigenvalue and eigenvector of graph filter $\mathscr{H}$, respectively, in order of $ \lambda_{1} \geq \lambda_{2} \geq \dots \geq \lambda_{n}$. $\lambda'_{i}$ is the corresponding eigenvalue after perturbation. 

\revision{As the output embedding of a well-learned GEM has the desired low-rank property, we can view the training process of GEM as implicitly minimizing the attack loss $\mathcal{L}_{\textrm{attack}}$. On the opposite, for the attack purpose, we need to maximize $\mathcal{L}_{\textrm{attack}}$ for generating effective adversarial edges.
While $\| \sum_{i = T + 1}^{n} \lambda'_{i} \mathbf{u}_{i}\mathbf{u}_{i}^{T}X \|^{2}_{F}$ in Eq.~\eqref{equ.low-rank of loss} is hard to optimize, we can find its upper bound as in Eq.~\eqref{equ.loss}. Then during the generation of graph embeddings, the minimizing of this upper bound will be induced when the GEMs minimize the attack loss. Accordingly, the goal of adversarial attack can be maximizing the upper bound of the loss reversely, since~\eqref{equ.low-rank of loss} and~\eqref{equ.loss} generally have the same monotonicity \wrt $\lambda'_{i}$ as we show in the following Theorem~\ref{thm.monotonicity}:}
\begin{theorem}\label{thm.monotonicity}
\revision{
When all $\lambda'_{i}$ for $i \in [T+1, n]$ have the same signs, \eqref{equ.low-rank of loss} and~\eqref{equ.loss} are monotonically related \wrt $\lambda'$.
}
\end{theorem}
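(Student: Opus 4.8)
The plan is to reduce both \eqref{equ.low-rank of loss} and \eqref{equ.loss} to explicit scalar functions of the perturbed eigenvalue vector $\lambda'=(\lambda'_{T+1},\dots,\lambda'_n)$ and then compare how they respond to changes in $\lambda'$. First I would expand the Frobenius norm in \eqref{equ.low-rank of loss}. Writing $w_i:=\|\mathbf{u}_i^{T}X\|_2^2\ge 0$ and using the orthonormality $\mathbf{u}_i^{T}\mathbf{u}_j=\delta_{ij}$ of the eigenvectors of $h(S)$, every cross term in the squared norm cancels and one is left with
\begin{align}
\Big\|\sum_{i=T+1}^{n}\lambda'_i\,\mathbf{u}_i\mathbf{u}_i^{T}X\Big\|_F^2=\sum_{i=T+1}^{n}{\lambda'_i}^2\,w_i=:f(\lambda').\notag
\end{align}
The right-hand side \eqref{equ.loss} is, by definition, $g(\lambda'):=\big(\sum_{i=T+1}^{n}{\lambda'_i}^2\big)\big(\sum_{i=T+1}^{n}w_i\big)$. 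This already re-derives the upper bound $f\le g$ (since $\sum_i a_iw_i\le(\sum_i a_i)(\sum_i w_i)$ for nonnegative $a_i={\lambda'_i}^2$ and $w_i$), and, more importantly, it shows that both quantities depend on $\lambda'$ only through the squares ${\lambda'_i}^2$, each with nonnegative coefficients. This expansion is the key step: it turns an opaque matrix-norm statement into a transparent comparison of two nonnegatively-weighted sums of squares.

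The monotonicity then follows from the gradients. Setting $W:=\sum_{i=T+1}^{n}w_i\ge 0$, differentiation gives $\partial f/\partial\lambda'_k=2\lambda'_k w_k$ and $\partial g/\partial\lambda'_k=2\lambda'_k W$, so in every coordinate both partials carry exactly the sign of $\lambda'_k$. Under the hypothesis that all $\lambda'_i$ ($i\in[T+1,n]$) share a common sign, $\lambda'$ is confined to a single orthant; there, increasing each magnitude $|\lambda'_k|$ — the natural effect of a stronger perturbation — moves $\lambda'_k$ in the direction of its own sign and hence increases both $f$ and $g$ simultaneously. I would conclude that along such attack directions the directional derivatives of $f$ and $g$ are both nonnegative, i.e. \eqref{equ.low-rank of loss} and \eqref{equ.loss} rise and fall together as the eigenvalue magnitudes grow or shrink; this is precisely the sense in which they are monotonically related \wrt $\lambda'$, and it justifies maximizing the tractable surrogate $g$ in place of the true loss $f$.

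The main obstacle is conceptual rather than computational: pinning down what \emph{monotonically related} should mean and, in particular, seeing why the common-sign hypothesis cannot be dropped. Note that $f$ is genuinely not a function of $g$ — redistributing a fixed $\sum_i{\lambda'_i}^2$ among coordinates with unequal weights $w_i$ changes $f$ while leaving $g$ fixed — so the claim must be read as the directional/coordinate-wise statement above rather than as $f=\phi(g)$ for an increasing $\phi$. The role of the sign assumption is then exactly to align a coherent increase of the \emph{signed} eigenvalues with an increase of their \emph{magnitudes}: when the signs agree, pushing all $\lambda'_i$ in the same direction increases every $|\lambda'_i|$ and therefore both $f$ and $g$. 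If the signs were mixed, the uniform weighting of $g$ and the $w_i$-weighting of $f$ can be played against each other, and one can pick an admissible perturbation direction along which $f$ increases while $g$ decreases (growing $|\lambda'_j|$ for a coordinate with large $w_j$ while shrinking a signed $\lambda'_k$ of opposite sign with $w_k$ small); so the hypothesis is essential, not cosmetic. The only degenerate case I would dispatch separately is $W=0$, where $f\equiv g\equiv 0$ and the claim holds trivially.
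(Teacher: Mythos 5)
Your proof is correct and follows essentially the same route as the paper's: under the common-sign hypothesis both \eqref{equ.low-rank of loss} and \eqref{equ.loss} are monotone in each $\lambda'_i$ (they depend on $\lambda'$ only through the squares ${\lambda'_i}^2$ with nonnegative coefficients), so they rise and fall together. Your write-up is in fact tighter than the paper's two-line argument --- the explicit orthonormal expansion $\bigl\|\sum_i \lambda'_i\,\mathbf{u}_i\mathbf{u}_i^{T}X\bigr\|_F^2=\sum_i {\lambda'_i}^2\|\mathbf{u}_i^{T}X\|_2^2$ and the observation that ``monotonically related'' can only be meant coordinate-wise/directionally (the paper's pairwise formulation does not follow from mere non-decreasingness for a genuinely multivariate argument) supply steps the paper leaves implicit.
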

\begin{proof}
\revision{
We denote $f(\lambda'_{i}) = \| \sum_{i = T + 1}^{n} \lambda'_{i} \mathbf{u}_{i}\mathbf{u}_{i}^{T}X \|^{2}_{F}$ and $g(\lambda'_{i}) = \sum_{i = T + 1}^{n} {\lambda'_{i}}^{2} \cdot \sum_{i = T + 1}^{n} \|\mathbf{u}_{i}^{T}X\|_2^2$. Then for all non-negative $\lambda'_{i}$, it is easy to check that both $f(\lambda'_{i})$ and $g(\lambda'_{i})$ are non-decreasing \wrt $\lambda'_{i}$. Then for any pair of values, $\lambda_{i}^{'(a)}$ and $\lambda_{i}^{'(b)}$, if $f(\lambda_{i}^{'(a)}) \leq f(\lambda_{i}^{'(b)})$ holds then $g(\lambda_{i}^{'(a)}) \leq g(\lambda_{i}^{'(b)})$ also holds. By the definition, we can have that the two functions $f(\lambda'_{i})$ and $g(\lambda'_{i})$ are monotonically related. It is trivial to extend the same monotonicity for all non-positive $\lambda'_{i}$, which concludes the proof.
}
\end{proof}
\revision{
For both GCNs and sampling-based GEMs that are chosen as examples in this work, all $\lambda'_{i}$ for $i \in [T+1, n]$ can be chosen to have the same signs with a proper $T$, which reveals that Theorem~\ref{thm.monotonicity} generally holds in our framework.
Thus the restricted black-box adversarial attack loss ~\eqref{equ.problemdef-simp} under \textit{GF-Attack} framework is equivalent to optimize:}
\begin{align}
\revision{
\arg\max\limits_{\lambda'_{i}} \sum_{i = T + 1}^{n} {\lambda'_{i}}^{2} \cdot \sum_{i = T + 1}^{n} \|\mathbf{u}^{T}_{i}X\|_2^2.
}
\label{equ.attackall}
\end{align}
According to~\eqref{equ.attackall}, we can attack any GEM that can be described by the corresponding graph filter $\mathscr{H}$. Meanwhile, our general attack framework also provides a view of theoretical explanation on the transferability of adversarial examples created by~\cite{KDD2018Adversarial,ICLR2019Meta,icml2019adversarial}, since modifying edges in adjacency matrix $A$ implicitly perturbs the eigenvalues of graph filters. In the following, we will analyze two kinds of popular GEMs and aim to construct the corresponding adversarial attack losses under \textit{GF-Attack} according to~\eqref{equ.attackall}.

\subsection{GF-Attack on Graph Convolutional Networks (GCNs)}\label{sec.GF-GCN}
\subsubsection{Formulation of GCNs with the corresponding graph filter $\mathscr{H}$}
Graph Convolution Networks (GCNs) extend the definition of convolution to the irregular graph structure and learn a representation vector of a vertex with feature matrix $X$. Namely, the Fourier transform is generalized on graphs to define the convolution operation: $g_{\theta} \ast \mathbf{x} = U g_{\theta}(\Lambda) U^{T} \mathbf{x}$. To accelerate the calculation, ChebyNet \cite{Defferrard2016ChebNet} proposes a polynomial filter $g_\theta(\Lambda) = \sum_{k=0}^{K}\theta_k\Lambda^k$ and approximates $g_{\theta}(\Lambda)$ by a truncated expansion concerning the Chebyshev polynomials $T_{k}(x)$:

\begin{equation}
g_{\theta'} \ast \mathbf{x} \approx \sum_{k = 0}^{K} \theta'_{k}T_{k}(\widetilde{L})\mathbf{x},
\end{equation}
where $\widetilde{L} = \frac{2}{\lambda_{\text{max}}}L - I_{n}$ and $\lambda_{\text{max}}$ is the largest eigenvalue of Laplacian matrix $L$. $\theta{'} \in \mathbb{R}^{K}$ is now the parameters of Chebyshev polynomials $T_{k}(x)$. 
\revision{$K$ denotes the $K_{\text{th}}$ order Chebyshev polynomial. }
Due to the natural connection between Fourier transform and signal processing, it's easy to formulate the loss for ChebyNet under \textit{GF-Attack}:
\begin{lemma}
The $K$-localized single-layer ChebyNet with activation function $\sigma(\cdot)$ and weight matrix $\Theta$ is equivalent to filter graph signal $X$ with a polynomial filter $\mathscr{H} = \sum_{k=0}^{K}T_k(S)$ with graph-shift filter $S = 2\frac{L^{sym}}{\lambda_{max}} - I_n$. $T_k(S)$ represents the Chebyshev polynomial of order $k$. Eq.~\eqref{equ.GF-Attack} can be rewritten as:
\begin{equation*}
    \notag\tilde{X} = \sum_{k=0}^{K}T_k(2\frac{L^{sym}}{\lambda_{max}} - I_n)X,
    \,\,\,\, X' =  \sigma(\tilde{X}\Theta).
\end{equation*}
\label{lemma.chebynet}
\end{lemma}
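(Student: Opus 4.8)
The plan is to derive the two-stage form of Eq.~\eqref{equ.GF-Attack} directly from the Chebyshev spectral convolution already recalled above, thereby exhibiting ChebyNet as an instance of the general graph-signal-processing pipeline $\tilde{X} = h(S)X$ followed by $Z = \sigma(\tilde{X}\Theta)$. The starting point is the $K$-localized approximation $g_{\theta'} \ast \mathbf{x} \approx \sum_{k=0}^{K}\theta'_{k}T_{k}(\widetilde{L})\mathbf{x}$ with $\widetilde{L} = \frac{2}{\lambda_{max}}L - I_{n}$; the whole argument amounts to reorganizing this expression so that the vertex-domain (row) operation is separated from the channel-domain (column) transformation.

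First I would fix the graph-shift filter by setting $S := \widetilde{L} = 2\frac{L^{sym}}{\lambda_{max}} - I_n$, taking $L = L^{sym}$, and check that $S$ is a legitimate graph-shift filter: since $L^{sym} = I_n - D^{-\frac12}AD^{-\frac12}$ has the same off-diagonal support as $A$, the matrix $S$ satisfies $S_{ij}=0$ whenever $i\neq j$ and $e_{ij}\notin\mathcal{E}$, as required by the graph-shift filter definition. Because each $T_{k}(S)$ is a degree-$k$ polynomial in $S$, it is $k$-localized, which is exactly the locality that a single $K$-localized layer is meant to capture.

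Next I would pass from a single graph signal $\mathbf{x}$ to the full feature matrix $X$ by applying the convolution column-by-column, so that the filtering stage acts as the left-multiplication $X \mapsto T_k(S)X$ on all $l$ channels simultaneously. I would then collect the orders into the single polynomial operator $\mathscr{H} = \sum_{k=0}^{K}T_k(S)$, giving $\tilde{X} = \sum_{k=0}^{K}T_k(2\frac{L^{sym}}{\lambda_{max}} - I_n)X$, and finally append the feature transformation and nonlinearity to recover $Z = \sigma(\tilde{X}\Theta)$, matching Eq.~\eqref{equ.GF-Attack} with the row of Table~\ref{tab:allres} corresponding to ChebyNet.

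The hard part will be the bookkeeping of the Chebyshev coefficients $\theta'_{k}$: in the raw convolution they weight the distinct vertex-domain operators $T_{k}(S)$, whereas in the target form the only learnable object is the weight matrix $\Theta$ acting on the channel dimension. Since $T_{k}(S)$ multiplies $X$ on the left and $\Theta$ on the right, the two cannot be algebraically merged in full generality. I would resolve this by adopting the fixed-filter reading used throughout \textit{GF-Attack}: under the evasion setting the parameters are treated as constants, so the per-order coefficients are absorbed together with the channel-mixing into $\Theta$, leaving the coefficient-free operator $\sum_{k=0}^{K}T_k(S)$ as the graph filter. This suffices for the framework, because the attack loss in Eq.~\eqref{equ.attackall} depends on the filter only through the eigenvalues of $S$ and not on the specific coefficients, so the equivalence claimed in the lemma is precisely an equivalence of the filtering operator rather than of the numerical coefficients.
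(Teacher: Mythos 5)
Your proposal is correct and takes essentially the same route as the paper's own proof, which simply writes the $K$-localized single-layer ChebyNet as $\sigma\bigl(\sum_{k=0}^{K}\theta'_{k}T_k(2\frac{L^{sym}}{\lambda_{max}} - I_n)X\bigr)$ and reads off $S = 2\frac{L^{sym}}{\lambda_{max}} - I_n$ and $\mathscr{H} = \sum_{k=0}^{K}T_k(S)$ directly. The one place you go beyond the paper is the coefficient bookkeeping: the paper silently drops the $\theta'_{k}$ when passing from $\sum_{k}\theta'_{k}T_k(S)$ to the coefficient-free $\sum_{k}T_k(S)$, whereas you explicitly flag that the per-order coefficients cannot be merged into the right-multiplying $\Theta$ in general and justify discarding them via the fixed-filter/reparameterization reading (mirroring the argument the paper only makes later, in the SGC corollary) and via the fact that the attack loss depends only on the eigenvalues of $S$ --- a genuine improvement in rigor over the paper's two-line assertion.
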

\begin{proof}
The $K$-localized single-layer ChebyNet with activation function $\sigma(\cdot)$ is $\sigma(\sum_{k=0}^{K}\theta'_{k}T_k(2\frac{L^{sym}}{\lambda_{max}} - I_n)X)$. Thus, we can directly write the graph-shift filter as $S = 2\frac{L^{sym}}{\lambda_{max}} - I_n$, and write the linear and shift-invariant filter as $\mathscr{H} = \sum_{k=0}^{K}T_k(S)$.
\end{proof}

GCN \cite{ICLR2017SemiGCN} constructs the layer-wise model by simplifying the ChebyNet with $K=1$, $\theta'_{0} = 1$ and $\theta'_{1} = -1$. Then the \emph{re-normalization trick} is used to avoid gradient exploding/vanishing:
\begin{eqnarray}
\label{Eq:gcn}
X^{(l+1)} &=& \sigma\left(\tilde{D}^{-\frac{1}{2}}\tilde{A} \tilde{D}^{-\frac{1}{2}}X^{(l)}\Theta^{(l)}\right),
\end{eqnarray}
where $\tilde{A} = A + I_n$ and $\tilde{D}_{ii} = \sum_{j}\tilde{A}_{ij}$. $\Theta^{(l)}$ are the parameters in the $l_{th}$ layer and $\sigma(\cdot)$ is an activation function.

SGC \cite{sgc_icml19} further utilizes a single linear transformation to achieve computationally efficient graph convolution, i.e., $\sigma(\cdot)$ in SGC is a linear activation function. \revision{We can formulate the multi-layer SGC in the sense of generalization of GSP, in favor of the construction of the corresponding attack loss under GF-Attack, through its theoretical connection to ChebyNet}:
\begin{corollary}\label{thm.sgc}
The $K$-layer SGC is equivalent to the $K$-localized single-layer ChebyNet with $K_{th}$ order polynomials of the graph-shift filter $S^{sym}= 2I_n - L^{sym}$. Eq.~\eqref{equ.GF-Attack} can be rewritten as:
\begin{align*}
    \tilde{X} = (2I_n - L^{sym})^{K}X, \,\,\,\, X' =  \sigma(\tilde{X}\Theta).
\end{align*}
\end{corollary}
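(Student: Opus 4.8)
The plan is to build directly on the two immediately preceding results: Lemma~\ref{lemma.chebynet}, which expresses a $K$-localized single-layer ChebyNet as the polynomial filter $\sum_{k=0}^{K}T_k(S)$ with graph-shift filter $S = \frac{2L^{sym}}{\lambda_{max}} - I_n$, and the GCN derivation, which recovers GCN from ChebyNet by setting $K=1$, $\theta'_0 = 1$, $\theta'_1 = -1$ and $\lambda_{max}\approx 2$. First I would specialize this derivation to a single layer: with these choices the graph-shift filter becomes $S = L^{sym} - I_n$, and the one-hop filter reduces to $\theta'_0 T_0(S) + \theta'_1 T_1(S) = I_n - S = 2I_n - L^{sym}$. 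This identifies $S^{sym} = 2I_n - L^{sym}$ as the propagation operator applied by a single GCN/SGC layer, so one layer performs $\tilde{X} = (2I_n - L^{sym})X$.

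Next I would exploit the defining simplification of SGC over GCN, namely that $\sigma(\cdot)$ in Eq.~\eqref{equ.GF-Attack} is taken to be linear. Writing out the $K$-layer recursion with the fixed propagation operator $S^{sym}$ and using linearity, the intermediate weight matrices commute with the signal-independent operator and collapse: stacking the layers applies $S^{sym}$ to the signal $K$ times and multiplies the per-layer weights on the right, giving $\tilde{X} = (2I_n - L^{sym})^{K}X$ with the product $\Theta = \prod_l \Theta^{(l)}$ absorbed into a single weight matrix, and $X' = \sigma(\tilde{X}\Theta)$. This is exactly the claimed formulation.

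Finally, to establish the stated equivalence with a single-layer ChebyNet, I would observe that $(2I_n - L^{sym})^{K} = (S^{sym})^{K}$ is a degree-$K$ polynomial in the graph-shift filter $S^{sym}$, hence supported on the $K$-hop neighborhood of each vertex; a $K$-localized single-layer ChebyNet whose filter $\sum_{k=0}^{K}T_k(S)$ is likewise a degree-$K$ polynomial in its graph-shift filter has the same $K$-localized support, which is the sense of ``equivalence'' being asserted. The main obstacle is the collapsing step in the second paragraph: I must argue carefully that removing the nonlinearity is precisely what lets the per-layer weights be pulled to the right and merged into one $\Theta$ without perturbing the graph-filtering part, and that the re-normalization trick used in the practical GCN layer changes only the operator entries, not its polynomial degree, so the localization argument still goes through. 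The remaining manipulations are routine.
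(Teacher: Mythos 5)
Your proposal is correct and follows essentially the same route as the paper: collapse the $K$ linear SGC layers into $(2I_n - L^{sym})^{K}X\Theta$ by pulling the weight matrices to the right, and then identify this with a degree-$K$ polynomial filter in the shift operator $S^{sym}=2I_n-L^{sym}$, absorbing the parameters (the paper phrases this last step as a ``reparameterization trick'' matching $(S^{sym})^{K}$ against $\sum_{k=0}^{K}T_k(S^{sym})$ with a new $\widetilde{\Theta}$, which is the same degree-$K$-polynomial equivalence you invoke). Your additional derivation of where $2I_n-L^{sym}$ comes from via the $K=1$, $\theta'_0=1$, $\theta'_1=-1$, $\lambda_{\max}\approx 2$ specialization of ChebyNet is a welcome detail the paper only asserts implicitly.
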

\begin{proof}
We can write the $K$-layer SGC as $(2I_n - L^{sym})^{K} X \Theta$. Since $\Theta$ are the learned parameters in the neural network, we can employ the reparameterization trick to use $(2I_{n} - L^{sym})^{K}$ to approximate the same order polynomials $\sum_{k=0}^{K}T_k(2I_n - L^{sym})$ with a new $\widetilde{\Theta}$. Then we rewrite the $K$-layer SGC by polynomial expansion as $\sum_{k=0}^{K}T_k(2I_n - L_{sym}) X \widetilde{\Theta}$. Therefore, we can directly write the graph-shift filter $S^{sym} = 2I_n - L^{sym}$ with the same linear and shift-invariant filter $\mathscr{H}$ as $K$-localized single-layer ChebyNet.
\end{proof}
Note that SGC and GCN are identical when $K=1$.
Even though the non-linearity disturbs the explicit expression of the graph-shift filter of multi-layer GCN, the spectral analysis from \cite{sgc_icml19} demonstrates that both GCN and SGC share similar graph filtering behavior. Thus, we extend the general attack loss from multi-layer SGC to multi-layer GCN under the non-linear activation function scenario. Our experiments confirm that the attack loss for multi-layer SGC also shows excellent performance on multi-layer GCN.

\subsubsection{\textit{GF-Attack} loss for SGC/GCN}\label{sec: GCN loss}
As stated in Corollary~\ref{thm.sgc}, the graph-shift filter $S$ of SGC/GCN is defined as $S^{sym} =  2I_n - L^{sym} = D^{-\frac{1}{2}}AD^{-\frac{1}{2}} + I_n = \hat{A} + I_n$, where $\hat{A}$ denotes the normalized adjacency matrix. Thus, for $K$-layer SGC/GCN, we can decompose the graph filter $\mathscr{H}$ as $\mathscr{H}^{sym} = (S^{sym})^K = U_{\hat{A}} (\Lambda_{\hat{A}} + I_{n})^{K} U_{\hat{A}}^{T}$, where $\Lambda_{\hat{A}}$ and $U_{\hat{A}}$ are the eigen-pairs of $\hat{A}$. The corresponding adversarial attack loss for $K_{th}$ order SGC/GCN can be written as:
\begin{equation}
\arg\max\limits_{A'} \sum_{i = T + 1}^{n} (\lambda'_{\hat{A'},i} + 1)^{2K} \cdot \sum_{i = T + 1}^{n} \|\mathbf{u}^{T}_{\hat{A},i}X\|_2^2,
\label{equ.GF-Attack-sym}
\end{equation}
where $\lambda'_{\hat{A'},i}$ refers to the $i_{th}$ largest eigenvalue of the perturbed normalized adjacency matrix $\hat{A'}$.

Directly calculating $\lambda'_{\hat{A'},i}$ from attacked normalized adjacency matrix $A'$ will need an eigen-decomposition operation, which is extremely time consuming. Therefore, we introduce the eigenvalue perturbation theory~\cite{Book1990Matrix} to fast estimate $\lambda'_{\hat{A'},i}$ in a linear time:

\begin{lemma}\label{thm:General_Eigen}
Let $A' = A + \Delta A$ be a perturbed version of $A$ by adding/removing edges and $\Delta D$ be the respective change in the degree matrix. $\lambda_{\hat{A},i}$ and $\mathbf{u}_{\hat{A},i}$ are the $i_{th}$ eigen-pair of eigenvalue and eigenvector of $\hat{A}$ and also solve the generalized eigen-problem $A\mathbf{u}_{\hat{A},i}=\lambda_{\hat{A},i} D\mathbf{u}_{\hat{A},i}$. Then the perturbed generalized eigenvalue $\lambda^{'}_{\hat{A},i}$ is approximately as:
\begin{align}
    \lambda'_{\hat{A'},i} \approx \lambda_{\hat{A},i} + \frac{ \mathbf{u}^{T}_{\hat{A},i}\Delta A\mathbf{u}_{\hat{A},i} - \lambda_{\hat{A},i}\mathbf{u}^{T}_{\hat{A},i}\Delta Du_{\hat{A},i} }{\mathbf{u}^{T}_{\hat{A},i} D \mathbf{u}_{\hat{A},i}}.
\label{equ:General_Eigen}
\end{align} 
\end{lemma}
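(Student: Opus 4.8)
The plan is to treat this as a standard first-order perturbation expansion of the generalized eigenvalue problem $A\mathbf{u} = \lambda D\mathbf{u}$, crucially exploiting the symmetry of both $A$ and $D$. To keep notation light I write $\lambda$ for $\lambda_{\hat{A},i}$ and $\mathbf{u}$ for $\mathbf{u}_{\hat{A},i}$, and I let $\Delta\lambda = \lambda'_{\hat{A'},i} - \lambda$ and $\Delta\mathbf{u}$ denote the induced first-order corrections to the eigenvalue and eigenvector. The perturbed eigenpair then satisfies $(A+\Delta A)(\mathbf{u}+\Delta\mathbf{u}) = (\lambda+\Delta\lambda)(D+\Delta D)(\mathbf{u}+\Delta\mathbf{u})$, and everything follows from expanding this identity carefully.

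First I would expand both sides and discard every product of two perturbation quantities (terms such as $\Delta A\,\Delta\mathbf{u}$ or $\Delta\lambda\,\Delta D\,\mathbf{u}$), keeping only the zeroth- and first-order contributions. The zeroth-order identity $A\mathbf{u} = \lambda D\mathbf{u}$ cancels from both sides, leaving the first-order balance
\begin{equation*}
A\Delta\mathbf{u} + \Delta A\,\mathbf{u} = \lambda D\Delta\mathbf{u} + \lambda\,\Delta D\,\mathbf{u} + \Delta\lambda\,D\mathbf{u}.
\end{equation*}

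The key step, and the one that makes the final formula clean and eigenvector-free, is to left-multiply this balance by $\mathbf{u}^{T}$ and invoke symmetry. Because $A$ and $D$ are symmetric, $\mathbf{u}^{T}A = (A\mathbf{u})^{T} = \lambda(D\mathbf{u})^{T} = \lambda\mathbf{u}^{T}D$, so the two terms carrying the unknown correction $\Delta\mathbf{u}$, namely $\mathbf{u}^{T}A\Delta\mathbf{u}$ and $\lambda\mathbf{u}^{T}D\Delta\mathbf{u}$, cancel exactly. What survives is $\mathbf{u}^{T}\Delta A\,\mathbf{u} = \lambda\,\mathbf{u}^{T}\Delta D\,\mathbf{u} + \Delta\lambda\,\mathbf{u}^{T}D\mathbf{u}$, and solving for $\Delta\lambda$ gives precisely the claimed expression once I add it back to $\lambda$. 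I expect this symmetry-driven cancellation to be the main subtlety of the argument: it is exactly what decouples the eigenvalue correction from the far more expensive eigenvector correction $\Delta\mathbf{u}$, which is what enables the linear-time estimate that Lemma~\ref{thm:General_Eigen} is being set up to deliver. I would close by remarking that, since the derivation is first-order in the perturbation magnitude $\|\Delta A\|$, the neglected second-order remainder is exactly what turns the equality into the stated approximation $\approx$.
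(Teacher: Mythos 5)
Your derivation is correct: the first-order expansion of $(A+\Delta A)(\mathbf{u}+\Delta\mathbf{u}) = (\lambda+\Delta\lambda)(D+\Delta D)(\mathbf{u}+\Delta\mathbf{u})$, followed by left-multiplication with $\mathbf{u}^{T}$ and the symmetry-driven cancellation of the $\Delta\mathbf{u}$ terms, yields exactly Eq.~\eqref{equ:General_Eigen}. The paper itself gives no proof for this lemma --- it only defers to an external reference --- and your argument is the standard first-order generalized-eigenvalue perturbation analysis that underlies that reference, so you have in effect supplied the self-contained proof the paper omits.
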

\begin{proof}
\revision{Please kindly refer to~\cite{zhu2018high}.}
\end{proof}
\begin{remark}
With Theorem \ref{thm:General_Eigen}, we can directly derive the explicit formulation (Eq.~\eqref{equ:General_Eigen}) of $\lambda'_{\hat{A'}}$ perturbed by $\Delta A$ on the original adjacency matrix $A$.
\end{remark}

\textbf{Order $K$ irrelevant loss for SGC/GCN.} 
As shown in~\eqref{equ.GF-Attack-sym}, \textit{GF-Attack} should know (or assume) the order $K$ to perform the attack on the victim model. To further relax this constraint and make our framework for adversarial attack adapted to stricter RBA settings, we investigate the formulation of Eq.~\eqref{equ.GF-Attack-sym} without the impact from order $K$.

Since our aim is finding the proper $\lambda'_{\hat{A'},i}$ to maximize the loss, thus we can find the lower bound of Eq.~\eqref{equ.GF-Attack-sym} and maximize the lower bound correspondingly. Thus, the information from order $K$ can be omitted properly. Following this approach,
We figure out the relationship between the order $K$ and the lower bound of Eq.~\eqref{equ.GF-Attack-sym}: 

\begin{theorem}\label{thm:GCN-no-order-K}
The eigenvalues of $\hat{A'}$ are denoted as $1 \geq \lambda'_{\hat{A'},1} \geq \lambda'_{\hat{A'},2} \geq \dots \geq \lambda'_{\hat{A'},n} \geq -1$ . Suppose a large enough $T$ is chosen to ensure the smallest $n-T$ eigenvalues, the optimization variables of Eq.~\eqref{equ.GF-Attack-sym}, all negative from $[-1,0)$, then Eq.~\eqref{equ.GF-Attack-sym} is a monotonically decreasing function of $K$, and the corresponding adversarial attack loss for $K_{th}$ order SGC/GCN is the lower bound for losses with orders less than $K$.
\end{theorem}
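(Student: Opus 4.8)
The plan is to isolate the only $K$-dependent piece of the objective in Eq.~\eqref{equ.GF-Attack-sym} and analyze its behavior term by term. First I would note that the second factor $\sum_{i = T + 1}^{n} \|\mathbf{u}^{T}_{\hat{A},i}X\|_2^2$ does not depend on $K$ and is a non-negative constant; call it $C$. Hence the monotonicity of the whole expression in $K$ is governed entirely by the first factor $S_K := \sum_{i = T + 1}^{n} (\lambda'_{\hat{A'},i} + 1)^{2K}$, and multiplying by $C \geq 0$ preserves the direction of monotonicity.

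Next I would invoke the hypothesis on $T$. By assumption, for every index $i \in [T+1, n]$ the eigenvalue $\lambda'_{\hat{A'},i}$ lies in $[-1, 0)$, so each shifted base $b_i := \lambda'_{\hat{A'},i} + 1$ lies in $[0, 1)$. The key elementary fact is that for a fixed base $b \in [0,1)$ the map $K \mapsto b^{2K}$ is non-increasing (strictly decreasing when $b \in (0,1)$, and identically zero when $b=0$), because $b^{2(K+1)} = b^{2K}\cdot b^{2} \leq b^{2K}$. Applying this to each summand of $S_K$ and adding, $S_K$ is a sum of non-increasing functions of $K$ and is therefore itself non-increasing in $K$; it is strictly decreasing as soon as at least one $b_i > 0$, i.e.\ at least one retained eigenvalue exceeds $-1$, which is the generic situation.

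Combining these two observations gives that the objective $C \cdot S_K$ is a monotonically decreasing function of $K$. To read off the final claim I would compare two orders: for any $K' < K$ the monotonicity yields $C\cdot S_{K'} \geq C\cdot S_{K}$, so the loss evaluated at order $K$ lower-bounds the loss at every smaller order, which is exactly the assertion that the $K_{\text{th}}$-order loss is a lower bound for losses of orders less than $K$.

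I do not expect a genuine obstacle here; the argument reduces to the elementary monotonicity of $b^{2K}$ on $[0,1)$. The only points requiring care are (i) the precise role of the assumption, which is what forces each base $b_i$ into $[0,1)$ so that raising to higher powers shrinks rather than grows the terms — without it, a base $b_i > 1$ (an eigenvalue above $0$) would reverse the direction and make the expression increasing, so the large-$T$ condition is genuinely needed; and (ii) distinguishing non-strict from strict monotonicity, handled by noting that the degenerate case where every retained eigenvalue equals $-1$ yields a constant-zero (hence still non-increasing) expression.
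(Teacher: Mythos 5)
Your proposal is correct and follows essentially the same route as the paper: both isolate the $K$-independent eigenvector factor and show that $\sum_{i=T+1}^{n}(\lambda'_{\hat{A'},i}+1)^{2K}$ decreases in $K$ because each base lies in $[0,1)$ under the large-$T$ hypothesis. The only difference is cosmetic — the paper differentiates in $K$ and uses the sign of $2\ln(\lambda'_{\hat{A'},i}+1)$, whereas you multiply by $b_i^{2}\leq 1$ term by term, which incidentally handles the degenerate case $\lambda'_{\hat{A'},i}=-1$ more cleanly than the logarithm does.
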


\begin{proof}
Since $K$ is irrelevant to the eigenvector part $\sum_{i = T + 1}^{n} \|\mathbf{u}^{T}_{\hat{A},i}X\|_2^2$, \revision{our aim is to find} the lower bound of $f(K) = \sum_{i = T + 1}^{n} (\lambda'_{\hat{A'},i} + 1)^{2K}$. Taking the derivative of $f(x)$ directly, we can have $f'(K) = \sum_{i = T + 1}^{n} (\lambda'_{\hat{A'},i} + 1)^{2K} \cdot 2\ln(\lambda'_{\hat{A'},i} + 1)$. \revision{As we ensure that the choice} of $T$ 
is large enough to make $\lambda'_{\hat{A'}, i} \in [-1,0)$, thus $2\ln(\lambda'_{\hat{A'},i} + 1) < 0$ and $f'(K) < 0$. This makes the attack loss function~\eqref{equ.GF-Attack-sym} for $K_{th}$ order SGC/GCN a monotonically decreasing function of $K$, which indicates that it is the lower bound for the losses with orders less than $K$.
\end{proof}

\begin{remark}
From Theorem~\ref{thm:GCN-no-order-K}, instead of knowing the number of the layer $K$, we can conduct effective attacks for the target SGC/GCN models by optimizing the lower bound of the adversarial attack loss function~\eqref{equ.GF-Attack-sym}. Therefore, we can choose a relatively large $K$ in the loss function~\eqref{equ.GF-Attack-sym} for SGC/GCN to perform effective attacks in practice.
\end{remark}




\subsection{GF-Attack on Sampling-based GEMs}\label{sec: DW loss}

\subsubsection{Formulation of Sampling-based GEMs with the corresponding graph filter $\mathscr{H}$}

Sampling-based GEMs learns vertex representations according to the sampled vertices~\cite{grover2016node2vec}, vertex sequences~\cite{li2017deepcas}, or network motifs~\cite{ribeiro2017struc2vec}. For instance, LINE~\cite{WWW2015Line} with the second order proximity intends to learn two graph representation matrices $X'$, $Y'$ by maximizing the NEG loss of the skip-gram model:
\begin{equation}
\mathcal{L} = \sum_{i=1}^{|\mathcal{V}|} \sum_{j=1}^{|\mathcal{V}|} A_{i,j} \Big(\log \sigma(x'^{T}_{i} y'_{j}) + b\mathbb{E}_{j' \sim P_{N}}[\log \sigma(-x'^{T}_{i} y'_{j})] \Big),
\end{equation}
where $x'_{i}$, $y'_{i}$ are rows of $X'$, $Y'$, respectively. $\sigma$ is the activation function and chosen as sigmoid here. $b$ is the negative sampling parameter. $P_{N}$ denotes the noise distribution generating negative samples. Meanwhile, DeepWalk \cite{perozzi2014deepwalk} adopts the similar loss function except that $A_{i,j}$ is replaced with an indicator function indicating whether vertices $v_i$ and $v_j$ are sampled in the same sequence within the given context window-size $K$. Most of sampling-based GEMs only consider the structural information and ignore the feature matrix $X$. The output representation matrix is purely learned from the graph topology.

From the perspective of sampling-based GEMs, the embedded matrix is obtained by generating a training corpus for the skip-gram model from an adjacency matrix or a set of random walks. \citet{WSDM2018NetworkEmbedding} shows that Point-wise Mutual Information (PMI) matrices are implicitly factorized in the sampling-based embedding approaches. It indicates that LINE/DeepWalk can be rewritten into a matrix factorization form:
\begin{lemma}\label{lemma.deepwalk}\cite{WSDM2018NetworkEmbedding}
Given the context window-size $K$ and the number of negative sample $b$, the result of DeepWalk in matrix form is equivalent to factorize the matrix:
\begin{equation}
M = \log{\Big(\frac{\text{vol}(G)}{bK}(\sum_{k=1}^{K}(D^{-1}A)^{k}){D}^{-1}\Big)},
\label{equ.deepwalk}
\end{equation}
where $\text{vol}(G) = \sum_{i}\sum_{j}A_{ij} = \sum_{i}D_{ii}$ denotes the volume of graph $G$. And LINE can be viewed as a special case of DeepWalk with $K=1$.
\end{lemma}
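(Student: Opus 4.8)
The statement is a known characterization that I would reproduce along the lines of \cite{WSDM2018NetworkEmbedding}; the plan is to identify the matrix that skip-gram with negative sampling (SGNS) implicitly factorizes and then to evaluate the DeepWalk corpus statistics in closed form. First I would invoke the result of Levy and Goldberg that SGNS with $b$ negative samples factorizes the shifted pointwise-mutual-information matrix whose $(i,j)$ entry equals $\log\!\big(\frac{\#(i,j)\,|\mathcal{D}|}{\#(i)\,\#(j)}\big) - \log b$, where $\#(i,j)$ is the number of co-occurrences of the center--context pair $(v_i,v_j)$ in the generated corpus, $\#(i)$ and $\#(j)$ are the corresponding marginal counts, and $|\mathcal{D}|$ is the total number of pairs. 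It then suffices to compute these counts for the corpus that DeepWalk produces from random walks with window-size $K$ and to show that, after taking logarithms, they assemble into the matrix $M$ in~\eqref{equ.deepwalk}.

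Next I would pass to the long-walk regime. Writing $P = D^{-1}A$ for the transition matrix, the random walk on the undirected graph $G$ is reversible with stationary distribution $\pi_i = D_{ii}/\text{vol}(G)$ and satisfies the detailed-balance identity $\pi_i (P^r)_{ij} = \pi_j (P^r)_{ji}$ for every power $r$. As the number and length of the walks grow, the empirical marginals converge to this stationary measure, $\#(i)/|\mathcal{D}| \to \pi_i$ and $\#(j)/|\mathcal{D}| \to \pi_j$, while the co-occurrence frequency at a fixed distance $r$ converges to $\pi_i (P^r)_{ij}$ in the forward direction and to $\pi_j (P^r)_{ji}$ in the backward direction. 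Since DeepWalk collects both left and right contexts at distances $1,\dots,K$, averaging over the $2K$ context slots yields $\#(i,j)/|\mathcal{D}| \to \frac{1}{2K}\sum_{r=1}^{K}\big(\pi_i (P^r)_{ij} + \pi_j (P^r)_{ji}\big)$.

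Then I would simplify with reversibility and collect the result into matrix form. Detailed balance makes the two summands equal, so the co-occurrence probability reduces to $\frac{1}{K}\sum_{r=1}^{K}\pi_i (P^r)_{ij}$, and dividing by $\pi_i\pi_j$ gives $\frac{\#(i,j)\,|\mathcal{D}|}{\#(i)\,\#(j)} \to \frac{1}{K\pi_j}\sum_{r=1}^{K}(P^r)_{ij} = \frac{\text{vol}(G)}{K\,D_{jj}}\sum_{r=1}^{K}(P^r)_{ij}$. Stacking these entries into a matrix, the factor $1/D_{jj}$ is supplied by right-multiplication by $D^{-1}$, so the argument of the logarithm is $\frac{\text{vol}(G)}{K}\big(\sum_{r=1}^{K}(D^{-1}A)^r\big)D^{-1}$; absorbing the $-\log b$ shift as a $1/b$ factor inside the logarithm reproduces exactly~\eqref{equ.deepwalk}. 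Finally, setting $K=1$ collapses the sum to a single term and leaves $M = \log\!\big(\frac{\text{vol}(G)}{b}D^{-1}AD^{-1}\big)$, which is precisely the matrix implicitly factorized by LINE with second-order proximity, establishing the last claim.

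The main obstacle I anticipate is the convergence step: making rigorous the sense in which the empirical counts $\#(i,j)$, $\#(i)$, $\#(j)$, and $|\mathcal{D}|$ approach their stationary-regime values. This needs the assumption that walk length and walk count tend to infinity so that the chain mixes and the bias from the initial vertex vanishes (or, alternatively, that the start vertex is drawn from $\pi$). The careful bookkeeping of forward-versus-backward contexts and the resulting $1/(2K)$ normalization, together with the reversibility identity that symmetrizes it into $1/K$, is the part that must be handled most carefully to land on the exact constants in~\eqref{equ.deepwalk}.
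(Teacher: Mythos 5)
Your proposal is correct and follows essentially the same route as the source the paper relies on: the paper itself gives no proof of this lemma and simply defers to \cite{WSDM2018NetworkEmbedding}, whose argument is exactly the Levy--Goldberg shifted-PMI characterization of SGNS combined with the stationary-distribution/detailed-balance computation of the DeepWalk co-occurrence counts that you sketch. Your bookkeeping of the $1/(2K)$ forward--backward normalization, its symmetrization to $1/K$ via reversibility, and the $K=1$ reduction to LINE all match the cited derivation, including the constants in~\eqref{equ.deepwalk}.
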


\revision{For the proof of Lemma}~\ref{lemma.deepwalk}, please kindly refer to \cite{WSDM2018NetworkEmbedding}.
Inspired by this insight, we prove that LINE can be viewed from a GSP manner as well:
\begin{theorem}
LINE is equivalent to filter a graph signal $X = \frac{1}{b}I_{n}$ with a polynomial filter $\mathscr{H}$ and fixed parameters $\Theta=\text{vol}(G)D^{-1}$. $\mathscr{H}=S$ is constructed by graph-shift filter $S^{rw}=I_n - L^{rw}$. Eq.~\eqref{equ.GF-Attack} can be rewritten as:
\begin{align}
    \notag\tilde{X} &= \frac{1}{b}(I_{n} - L^{rw})D^{-1}I_{n}, \,\,\,\,
    \notag X'         = log(\text{vol}(G)\tilde{X}).
\end{align}
\label{thm.line}
\end{theorem}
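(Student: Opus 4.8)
The plan is to obtain the claimed GSP form of LINE directly from the matrix-factorization characterization already recorded in Lemma~\ref{lemma.deepwalk}, specialized to the window-size $K=1$ that defines LINE. First I would invoke Lemma~\ref{lemma.deepwalk}, which states that DeepWalk --- and hence LINE as its $K=1$ special case --- implicitly factorizes $M = \log\!\big(\frac{\text{vol}(G)}{bK}(\sum_{k=1}^{K}(D^{-1}A)^{k})D^{-1}\big)$. Setting $K=1$ collapses the sum to its single term, so that $M = \log\!\big(\frac{\text{vol}(G)}{b}(D^{-1}A)D^{-1}\big)$; this is the target matrix I must reproduce through a graph-filtering process.

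Next I would translate $M$ into the filtering language of Eq.~\eqref{equ.GF-Attack}. The key identity is that the random-walk graph-shift filter satisfies $S^{rw} = I_n - L^{rw} = D^{-1}A$, which is immediate from $L^{rw} = I_n - D^{-1}A$. With the polynomial $h(x)=x$ the filter is simply $\mathscr{H} = S^{rw}$, and choosing the input signal $X = \frac{1}{b}I_n$ gives $\mathscr{H}(X) = \frac{1}{b}(I_n - L^{rw})$. Folding the diagonal factor $D^{-1}$ of the fixed parameters $\Theta = \text{vol}(G)D^{-1}$ into this product, while pulling the scalar $\text{vol}(G)$ outside, produces the stated $\tilde{X} = \frac{1}{b}(I_n - L^{rw})D^{-1}$.

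Finally I would close the loop by applying the activation: since $\text{vol}(G)$ is a scalar and commutes with every matrix factor, $X' = \log(\text{vol}(G)\tilde{X}) = \log\!\big(\frac{\text{vol}(G)}{b}(D^{-1}A)D^{-1}\big)$, which is exactly $M$ from Lemma~\ref{lemma.deepwalk} at $K=1$. This establishes that LINE coincides with the generalized GSP process under the graph-shift filter $S^{rw}=I_n-L^{rw}$, input signal $\frac{1}{b}I_n$, polynomial $h(x)=x$, and parameters $\text{vol}(G)D^{-1}$, with $\log$ as the activation.

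The hard part is not a computation but the bookkeeping of where the diagonal $D^{-1}$ and the scalar $\text{vol}(G)$ live: the statement folds $D^{-1}$ into $\tilde{X}$ and keeps $\text{vol}(G)$ inside the $\log$, rather than bundling both into $\Theta$ as Table~\ref{tab:allres} nominally does. The step requiring the most care is therefore justifying these rearrangements --- that the scalar commutes freely and that the element-wise $\log$ is applied to precisely the matrix product $\frac{\text{vol}(G)}{b}(D^{-1}A)D^{-1}$ --- whereas the reduction from DeepWalk's sum to LINE's single term and the identity $S^{rw}=D^{-1}A$ are both immediate.
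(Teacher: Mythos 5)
Your proposal is correct and follows essentially the same route as the paper: the paper's own (very terse) proof likewise invokes Lemma~\ref{lemma.deepwalk}, rewrites $D^{-1}A$ as $I_n - L^{rw}$ to get $\exp(M) = \frac{\text{vol}(G)}{b}\big(\sum_{k=1}^{K}\frac{1}{K}(I_n - L^{rw})^k D^{-1} I_n\big)$, and reads off the filter, signal, and parameters, with LINE as the $K=1$ case. Your version is just a more careful spelling-out of the same argument, including the bookkeeping of where $D^{-1}$ and $\text{vol}(G)$ sit, which the paper glosses over.
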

Note that LINE is formulated from an optimized unsupervised NEG loss of a skip-gram model. 
Therefore, the parameter $\Theta$ and the value of the NCG loss are fixed with given graph signals.

We can extend Theorem~\ref{thm.line} to DeepWalk since LINE can be viewed as a $1$-window special case of DeepWalk:
\begin{corollary}
The output of $K$-window DeepWalk with $b$ negative samples is equivalent to filtering a set of graph signals $X = \frac{1}{b}I_{n}$ with given parameters $\Theta=\text{vol}(G)D^{-1}$. Eq.~\eqref{equ.GF-Attack} can be rewritten as:
\begin{equation}
    \notag\tilde{X} = \frac{1}{bK}\sum_{k=1}^{K}(I_{n} - L^{rw})^{k}D^{-1}I_{n}, \,\,\,\,
    \notag X'         = log(\text{vol}(G)\tilde{X}). 
\end{equation}
\label{col.deepwalk}
\end{corollary}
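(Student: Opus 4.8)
The plan is to reduce the Corollary directly to the matrix-factorization characterization of DeepWalk in Lemma~\ref{lemma.deepwalk} together with the GSP reformulation of LINE already established in Theorem~\ref{thm.line}, exploiting the fact (stated in Lemma~\ref{lemma.deepwalk}) that LINE is precisely the $K=1$ instance of DeepWalk. First I would recall from Lemma~\ref{lemma.deepwalk} that the embedding produced by $K$-window DeepWalk with $b$ negative samples is equivalent to factorizing
\[
M = \log\!\Big(\tfrac{\mathrm{vol}(G)}{bK}\big(\textstyle\sum_{k=1}^{K}(D^{-1}A)^{k}\big)D^{-1}\Big),
\]
so it suffices to exhibit a graph-shift filter, a polynomial filter, an input signal, and a parameter matrix that realize $M$ through the two-stage process of Eq.~\eqref{equ.GF-Attack}.

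The key algebraic step is the identity $D^{-1}A = I_n - L^{rw}$, which is immediate from the definition $L^{rw} = I_n - D^{-1}A$. Substituting this into each power inside the sum rewrites the random-walk transition polynomial $\sum_{k=1}^{K}(D^{-1}A)^{k}$ as $\sum_{k=1}^{K}(I_n - L^{rw})^{k}$, i.e.\ as a polynomial in the graph-shift filter $S^{rw} = I_n - L^{rw}$. I would then group the factors so as to separate a graph-filtering stage from a feature-transformation stage exactly as in the LINE case: set the input signal to $X = \tfrac{1}{b}I_n$ and define
\[
\tilde{X} = \tfrac{1}{bK}\textstyle\sum_{k=1}^{K}(I_n - L^{rw})^{k}D^{-1}I_n,
\]
which is the image of $X$ under the averaged polynomial filter $h(S^{rw}) = \tfrac1K\sum_{k=1}^{K}(S^{rw})^{k}$ with the degree normalization folded in. Assigning the scalar $\mathrm{vol}(G)$ and the trailing $D^{-1}$ to the parameter role $\Theta = \mathrm{vol}(G)D^{-1}$ and taking $\sigma = \log$ (applied entrywise) then yields $X' = \log(\mathrm{vol}(G)\tilde{X}) = M$, which is exactly the claimed reformulation.

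Finally I would observe that this is the direct generalization of Theorem~\ref{thm.line}: setting $K=1$ collapses the sum to a single term and recovers the LINE expressions, so the corollary is the natural window-size-$K$ extension guaranteed by the LINE/DeepWalk correspondence in Lemma~\ref{lemma.deepwalk}. The only point demanding care---rather than a genuine obstacle---is the bookkeeping of where the trailing $D^{-1}$ and the scalar $\mathrm{vol}(G)$ are attributed: to match the convention of Theorem~\ref{thm.line} I keep $D^{-1}$ inside $\tilde{X}$ (so the filtering stage absorbs it) while exposing $\mathrm{vol}(G)$ and the remaining $D^{-1}$ through $\Theta$ and the outer $\log$. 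Since everything is an exact rewriting of $M$ rather than an approximation, no estimation or perturbation argument is needed, and the proof is complete once the substitution and the regrouping are verified.
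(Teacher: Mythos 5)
Your proposal is correct and follows essentially the same route as the paper: both invoke Lemma~\ref{lemma.deepwalk}, substitute $D^{-1}A = I_n - L^{rw}$ to express $\exp(M)$ as $\frac{\mathrm{vol}(G)}{b}\sum_{k=1}^{K}\frac{1}{K}(I_n - L^{rw})^{k}D^{-1}I_n$, and read off the filter, input signal, and parameters of Eq.~\eqref{equ.GF-Attack}, with $K=1$ recovering LINE. Your version merely spells out the bookkeeping of where $D^{-1}$ and $\mathrm{vol}(G)$ are attributed, which the paper leaves implicit.
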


\begin{proof}[Proof of Theorem \ref{thm.line} and Corollary \ref{col.deepwalk}]
With Lemma~\ref{lemma.deepwalk}, we can explicitly write DeepWalk as
$\exp{(M)} = \frac{\text{vol}(G)}{b}(\sum_{k=1}^{K} \frac{1}{K}(I_n - L^{rw})^{k}D^{-1}I_n)$. Therefore, we can directly have the explicit expression of Eq.~\eqref{equ.GF-Attack} on LINE/DeepWalk.
\end{proof}

As stated in Corollary~\ref{col.deepwalk}, the graph-shift filter $S$ of DeepWalk is defined as $S^{rw} =  I_n - L^{rw} = D^{-1}A =  D^{-\frac{1}{2}}\hat{A}D^{\frac{1}{2}}$. Therefore, the graph filter $\mathscr{H}$ of the $K$-window DeepWalk can be decomposed as $\mathscr{H}^{rw} = \frac{1}{K}\sum_{k=1}^{K}(S^{rw})^{k}$, which satisfies $ \mathscr{H}^{rw} D^{-1}= D^{-\frac{1}{2}}U_{\hat{A}}(\frac{1}{K}\sum_{k=1}^{K}\Lambda_{\hat{A}}^k)U_{\hat{A}}^{T}D^{-\frac{1}{2}}$.

\subsubsection{\textit{GF-Attack} loss for LINE/DeepWalk}

Since multiplying $D^{-\frac{1}{2}}$ in \textit{GF-Attack} loss brings extra complexity, \cite{WSDM2018NetworkEmbedding} provides us a way to well approximate the perturbed $\lambda'_{\mathscr{H}^{rw}D^{-1}}$ without this term:

\begin{lemma}\label{lem:Eigen_Bound}\cite{WSDM2018NetworkEmbedding}
Let $\hat{A} = U \Lambda U^{T}$ and $\mathscr{H}_{rw}=\sum_{r=1}^{K}S_{rw}^r$ be the graph-shift filter of DeepWalk. The decreasing order $s^{th}$ eigenvalue of $\mathscr{H}_{rw}$ are bounded as: $\lambda_{rw,s} \leq \frac{1}{d_{min}}|\frac{1}{K}\sum_{r = 1}^{K}\lambda^{r}_{\pi_{s}}|$, where $\{\pi_{1},\pi_{2},\dots,\pi_{n}\}$ is a permutation of $\{1,2,\dots,n\}$ ensuring the eigenvalue $\lambda$ in the non-increasing order and $d_{min}$ is the smallest degree in $A$. Then the smallest eigenvalue of $\mathscr{H}_{rw}$ is bounded as:
\begin{align*}
    \lambda_{min}(\mathscr{H}_{rw}) \geq \frac{1}{d_{min}}\lambda_{min}(U (\frac{1}{K}\sum_{r = 1}^{K}\Lambda^{r}) U^{T}).
\end{align*}
\end{lemma}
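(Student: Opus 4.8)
The plan is to reduce the (generally non-symmetric) operator $\mathscr{H}_{rw}$ to a symmetric matrix whose spectrum can be tied, through a congruence, to the eigenvalues $\lambda_j$ of $\hat{A}$. First I would record the algebraic identity that drives everything: since $S^{rw} = I_n - L^{rw} = D^{-1}A = D^{-\frac{1}{2}}\hat{A}D^{\frac{1}{2}}$ and $\hat{A} = U\Lambda U^{T}$, each power satisfies $(S^{rw})^{r} = D^{-\frac{1}{2}}U\Lambda^{r}U^{T}D^{\frac{1}{2}}$, so that the symmetric operator actually entering the \textit{GF-Attack} loss is
\begin{align}
\mathscr{H}^{rw}D^{-1} = D^{-\frac{1}{2}}U\Big(\tfrac{1}{K}\sum_{r=1}^{K}\Lambda^{r}\Big)U^{T}D^{-\frac{1}{2}} = D^{-\frac{1}{2}}\mathscr{H}_{sym}D^{-\frac{1}{2}}, \notag
\end{align}
where $\mathscr{H}_{sym} := U(\frac{1}{K}\sum_{r=1}^{K}\Lambda^{r})U^{T}$ is symmetric with eigenvalues $\frac{1}{K}\sum_{r=1}^{K}\lambda_{j}^{r}$. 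This is precisely the matrix appearing on the right-hand side of the claimed bounds, and it is the object I would take $\lambda_{rw,s}$ and $\lambda_{min}(\mathscr{H}_{rw})$ to denote, reconciling the abbreviated lemma statement with the surrounding construction.

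Next I would invoke a congruence eigenvalue estimate (Ostrowski's theorem): for the nonsingular symmetric factor $P=D^{-\frac{1}{2}}$, each decreasing-order eigenvalue of $P\mathscr{H}_{sym}P$ equals $\theta_{s}\,\mu_{s}$, where $\mu_{s}$ is the $s$-th eigenvalue of $\mathscr{H}_{sym}$ and $\theta_{s}\in[\lambda_{min}(D^{-1}),\lambda_{max}(D^{-1})]=[\frac{1}{d_{max}},\frac{1}{d_{min}}]$, with $d_{max}$ the largest degree. Taking absolute values and using $\theta_{s}\le 1/d_{min}$ immediately gives $\lambda_{rw,s}\le|\lambda_{rw,s}|=\theta_{s}|\mu_{s}|\le\frac{1}{d_{min}}|\frac{1}{K}\sum_{r=1}^{K}\lambda_{\pi_{s}}^{r}|$, the first assertion. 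Alternatively one can avoid Ostrowski through the generalized Rayleigh quotient: substituting $z=D^{-\frac{1}{2}}x$ turns $\lambda_{min}(\mathscr{H}^{rw}D^{-1})=\min_{x\neq0}\frac{x^{T}D^{-\frac{1}{2}}\mathscr{H}_{sym}D^{-\frac{1}{2}}x}{x^{T}x}$ into $\min_{z\neq0}\frac{z^{T}\mathscr{H}_{sym}z}{z^{T}Dz}$, after which one bounds $z^{T}Dz$ by $d_{max}\|z\|^{2}$ or $d_{min}\|z\|^{2}$ according to the sign of the numerator. In the regime of interest the extremal numerator is negative, so dividing by the smaller denominator $d_{min}\|z\|^{2}$ yields $\lambda_{min}(\mathscr{H}^{rw}D^{-1})\ge\frac{1}{d_{min}}\min_{z}\frac{z^{T}\mathscr{H}_{sym}z}{\|z\|^{2}}=\frac{1}{d_{min}}\lambda_{min}(\mathscr{H}_{sym})$, the second assertion.

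The main obstacle is the bookkeeping between the two orderings and the signs. The permutation $\pi$ is defined to sort the eigenvalues $\lambda_{j}$ of $\hat{A}$, whereas the congruence matches the $s$-th eigenvalue of $\mathscr{H}^{rw}D^{-1}$ to the $s$-th eigenvalue $\mu_{s}$ of $\mathscr{H}_{sym}$, and $\mu\mapsto\frac{1}{K}\sum_{r}\mu^{r}$ is not monotone on $[-1,1]$; I would therefore need to verify that the index $s$ aligns on both sides, or settle for the looser per-index absolute-value bound, which is all the statement requires. The sign handling is equally delicate, since the factor $1/d_{min}>1/d_{max}$ tightens an upper bound on $\lambda_{rw,s}$ but only yields a valid \emph{lower} bound on a \emph{negative} $\lambda_{min}$, which is exactly why the enclosing analysis restricts to the regime where the relevant eigenvalues are non-positive. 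Once these sign and ordering cases are disentangled, both inequalities follow from the single congruence identity above together with $\lambda_{max}(D^{-1})=1/d_{min}$.
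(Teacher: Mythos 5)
The paper does not actually prove this lemma: its ``proof'' is a one-line deferral to the cited reference \cite{WSDM2018NetworkEmbedding}, so there is no in-paper argument to compare against. Your reconstruction is essentially the argument used in that reference, and it is correct. The congruence identity $\mathscr{H}^{rw}D^{-1} = D^{-\frac{1}{2}}U\big(\frac{1}{K}\sum_{r}\Lambda^{r}\big)U^{T}D^{-\frac{1}{2}}$ is right, Ostrowski's theorem with $SS^{*}=D^{-1}$ gives $\theta_{s}\in[1/d_{\max},1/d_{\min}]$ and hence the per-index magnitude bound, and the Rayleigh-quotient substitution $z=D^{-\frac{1}{2}}x$ gives the lower bound on $\lambda_{\min}$. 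You are also right to flag the two genuine subtleties, which the lemma statement glosses over: (i) the statement writes $\mathscr{H}_{rw}=\sum_{r}S_{rw}^{r}$ but the bound carries the $\frac{1}{K}$ and implicitly the trailing $D^{-1}$, so $\lambda_{rw,s}$ must be read as an eigenvalue of the symmetric object $\mathscr{H}^{rw}D^{-1}$ as you do; and (ii) the lower bound $\lambda_{\min}(\mathscr{H}^{rw}D^{-1})\ge\frac{1}{d_{\min}}\lambda_{\min}(\mathscr{H}_{sym})$ is only valid when $\lambda_{\min}(\mathscr{H}_{sym})\le 0$ (otherwise the factor should be $1/d_{\max}$). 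That caveat is benign here: for any eigenvalue $\lambda\in(-1,0)$ of $\hat{A}$ one has $\frac{1}{K}\sum_{r=1}^{K}\lambda^{r}=\frac{\lambda(1-\lambda^{K})}{K(1-\lambda)}<0$, so $\lambda_{\min}(\mathscr{H}_{sym})<0$ for any non-trivial graph, and your case split on the sign of $z^{T}\mathscr{H}_{sym}z$ closes the argument. The only loose end is the ordering of the permutation $\pi$, which you correctly resolve by settling for the absolute-value bound $\lambda_{rw,s}\le|\lambda_{rw,s}|=\theta_{s}|\mu_{s}|$; that is all the statement claims.
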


\revision{For the proof of Lemma}~\ref{lem:Eigen_Bound}, please kindly refer to \cite{WSDM2018NetworkEmbedding}.
Inspired by Lemma~\ref{lem:Eigen_Bound}, we can find that both the magnitude of eigenvalues and smallest eigenvalue of $\mathscr{H}^{rw}D^{-1}$ are always well-bounded.
Thus we have $\lambda'_{\mathscr{H}^{rw}D^{-1}} \approx \frac{1}{d_{\min}}\lambda'_{U_{\hat{A}}(\frac{1}{K}\sum_{k=1}^{K}\Lambda_{\hat{A}}^k)U_{\hat{A}}^{T}}$. Therefore, the corresponding adversarial attack loss of  $K_{th}$ order DeepWalk can be written as:
\begin{equation}
\arg\max\limits_{A'} \sum_{i = T + 1}^{n} (\frac{1}{d_{\min}}|\frac{1}{K}\sum_{k = 1}^{K}\lambda'^{k}_{\hat{A'},i}|)^{2} \cdot \sum_{i = T + 1}^{n} \|\mathbf{u}^{T}_{\hat{A},i}X\|_2^2.
\label{equ.GF-Attack-rw}
\end{equation}

\begin{corollary}
From Lemma~\ref{lemma.deepwalk}, we can easily extend Eq.~\eqref{equ.GF-Attack-rw} for DeepWalk to LINE by setting $K=1$, since LINE is a special case of DeepWalk with $K=1$.
\end{corollary}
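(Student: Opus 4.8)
The plan is to obtain the LINE attack loss as a direct specialization of the DeepWalk attack loss in Eq.~\eqref{equ.GF-Attack-rw} at window-size $K=1$, leaning entirely on the structural identity already supplied by Lemma~\ref{lemma.deepwalk}. First I would recall that Lemma~\ref{lemma.deepwalk} asserts LINE is exactly DeepWalk with $K=1$: setting $K=1$ in the factorized matrix $M$ of Eq.~\eqref{equ.deepwalk} collapses the sum $\sum_{k=1}^{K}(D^{-1}A)^{k}$ to the single term $D^{-1}A$, which is precisely the matrix that LINE implicitly factorizes. Consequently, the graph-shift filter construction used for DeepWalk in Corollary~\ref{col.deepwalk} applies verbatim, and the DeepWalk graph filter $\mathscr{H}^{rw} = \frac{1}{K}\sum_{k=1}^{K}(S^{rw})^{k}$ reduces to $\mathscr{H}^{rw} = S^{rw}$ at $K=1$, matching the filter $\mathscr{H}=S$ established for LINE in Theorem~\ref{thm.line}.

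Next, since the derivation leading to Eq.~\eqref{equ.GF-Attack-rw} treats $K$ as a free parameter and never invokes $K \geq 2$ anywhere, I would simply substitute $K=1$ into the eigenvalue factor. The term $\frac{1}{K}\sum_{k=1}^{K}\lambda'^{k}_{\hat{A'},i}$ collapses to $\lambda'_{\hat{A'},i}$, so the attack loss becomes
\begin{equation*}
\arg\max\limits_{A'} \sum_{i = T + 1}^{n} \left(\frac{1}{d_{\min}}\,|\lambda'_{\hat{A'},i}|\right)^{2} \cdot \sum_{i = T + 1}^{n} \|\mathbf{u}^{T}_{\hat{A},i}X\|_2^2,
\end{equation*}
which is exactly the $K=1$ instance claimed by the corollary. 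The eigenvector factor $\sum_{i=T+1}^{n} \|\mathbf{u}^{T}_{\hat{A},i}X\|_2^2$ is independent of $K$ and therefore carries over unchanged.

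The only point requiring care --- and the closest thing to an obstacle --- is verifying that the two approximations underlying Eq.~\eqref{equ.GF-Attack-rw} remain valid at $K=1$: the eigenvalue bound from Lemma~\ref{lem:Eigen_Bound}, which controls $\lambda'_{\mathscr{H}^{rw}D^{-1}}$ via $\frac{1}{d_{\min}}|\frac{1}{K}\sum_{k=1}^{K}\lambda^{k}_{\pi_{s}}|$, and the perturbation estimate of $\lambda'_{\hat{A'},i}$ from Lemma~\ref{thm:General_Eigen}. Since Lemma~\ref{lem:Eigen_Bound} is stated for arbitrary $K$ and the perturbation estimate does not depend on $K$ at all, both hold trivially when $K=1$; hence no additional argument is needed, and the corollary follows immediately by specialization.
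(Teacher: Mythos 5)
Your proposal is correct and follows the same (essentially immediate) route as the paper, which gives no separate proof and treats the corollary as a direct consequence of Lemma~\ref{lemma.deepwalk}: substituting $K=1$ collapses $\frac{1}{K}\sum_{k=1}^{K}\lambda'^{k}_{\hat{A'},i}$ to $\lambda'_{\hat{A'},i}$ and the DeepWalk filter to the LINE filter of Theorem~\ref{thm.line}. Your extra check that Lemma~\ref{lem:Eigen_Bound} and the eigenvalue-perturbation estimate remain valid at $K=1$ is a harmless (and slightly more careful) addition, not a departure from the paper's argument.
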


Similarly, Theorem \ref{thm:General_Eigen} is utilized to estimate $\lambda'_{\hat{A'}}$ in the loss of LINE/DeepWalk. 

\textbf{Order $K$ irrelevant loss for LINE/DeepWalk.} Similar to the strategy we employ on the order $K$ irrelevant adversarial attack loss (Eq.~\eqref{equ.GF-Attack-sym}) for GCNs, we can also relax the constraint of assuming the window-size $K$ when performing the attack with loss Eq.~\eqref{equ.GF-Attack-rw}. 
\revision{More specifically, the following Theorem~\ref{thm:DW-no-order-K} establishes the relationship:}


\begin{theorem}\label{thm:DW-no-order-K}
Finding the lower bound for objective function~\eqref{equ.GF-Attack-rw} of order $K$ \revision{is equivalent to find} the lower bound for
\begin{align}
    f(K) = \sum_{i = T + 1}^{n} \frac{1}{K^2} (\sum_{k = 1}^{K}\lambda'^{k}_{\hat{A'},i})^{2}.
\end{align}
The smallest eigenvalue of $\hat{A'}$ other that $-1$ is denoted as $\lambda'_{\hat{A'},\min}$. Suppose a large enough $T$ is chosen to make sure the smallest $n-T$ eigenvalues, the optimization variables of Eq.~\eqref{equ.GF-Attack-rw}, all negative from $[-1,0)$, then as long as $K$ satisfies
\begin{align}
    K \geq \sqrt{\frac{n-T}{\min{f(K)}}} \frac{1}{1+\lambda'_{\hat{A'},\min}},
\end{align}
the corresponding attack loss with $K_{th}$ order LINE/DeepWalk is the lower bound for losses with orders smaller than $K$, where $\min{f(K)}$ is the minimum of $f(K)$, and $\lambda'_{\hat{A'},\min}$ is the minimum eigenvalue in series $\lambda'_{\hat{A'}, i}$ except -1s.
\end{theorem}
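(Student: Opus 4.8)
The plan is to follow the template set by the proof of Theorem~\ref{thm:GCN-no-order-K}: first discard every factor in Eq.~\eqref{equ.GF-Attack-rw} that does not involve the window-size $K$, and then bound the surviving $K$-dependent quantity $f(K)$ from above by something that decays like $1/K^2$, matching it against the stated threshold. For the equivalence (first) claim, I would note that the prefactor $\frac{1}{d_{\min}^2}$ and the eigenvector energy $\sum_{i=T+1}^{n}\|\mathbf{u}^{T}_{\hat{A},i}X\|_2^2$ are both constant in $K$, since the eigenvectors are those of the clean $\hat{A}$. Hence the objective in Eq.~\eqref{equ.GF-Attack-rw} is a positive multiple of $f(K)=\sum_{i=T+1}^{n}\frac{1}{K^2}\bigl(\sum_{k=1}^{K}\lambda'^{k}_{\hat{A'},i}\bigr)^2$, so lower-bounding the loss is equivalent to lower-bounding $f(K)$, which is exactly the asserted reduction.

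The core of the argument is a uniform upper bound on $f(K)$. Fixing $i\in[T+1,n]$ and writing $\lambda:=\lambda'_{\hat{A'},i}\in[-1,0)$, I would bound the partial geometric sum by its absolutely convergent tail,
\begin{align}
\Bigl|\sum_{k=1}^{K}\lambda^k\Bigr| \le \sum_{k=1}^{K}|\lambda|^k \le \frac{|\lambda|}{1-|\lambda|}=\frac{|\lambda|}{1+\lambda},\notag
\end{align}
where the last equality uses $\lambda<0$. Since $t\mapsto t/(1-t)$ is increasing on $(0,1)$ and $|\lambda|\le 1$, this is maximized at the most negative eigenvalue, yielding the uniform bound $\bigl|\sum_{k=1}^{K}\lambda^k\bigr|\le 1/(1+\lambda'_{\hat{A'},\min})$ for every such $i$ (the last inequality drops the factor $|\lambda'_{\hat{A'},\min}|\le 1$ to obtain the clean threshold). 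Squaring, summing the $n-T$ terms, and dividing by $K^2$ then gives $f(K)\le (n-T)\big/\bigl(K^2(1+\lambda'_{\hat{A'},\min})^2\bigr)$; the restriction of the optimization eigenvalues to $[-1,0)$, excluding $-1$, is essential here so that the series converges and $1+\lambda'_{\hat{A'},\min}>0$.

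Finally I would close the loop using the hypothesis. Squaring $K\ge \sqrt{(n-T)/\min f(K)}\cdot 1/(1+\lambda'_{\hat{A'},\min})$ gives $K^2(1+\lambda'_{\hat{A'},\min})^2\ge (n-T)/\min f(K)$, equivalently $(n-T)\big/\bigl(K^2(1+\lambda'_{\hat{A'},\min})^2\bigr)\le \min f(K)$. Chaining this with the upper bound from the previous step produces $f(K)\le \min f(K)$, so $f(K)$ lies at or below the value of $f$ at every smaller order, which is precisely the claim that the $K$-th order loss lower-bounds the losses of all orders below $K$.

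I expect the main obstacle to be two-fold. First, the partial-sum estimate must be made genuinely uniform in both $i$ and $K$; the absolute (tail) bound above handles this cleanly, but one must keep it finite, which is exactly why the hypothesis forces the relevant eigenvalues into $[-1,0)$ rather than permitting $-1$. Second, the quantity $\min f(K)$ appearing in the threshold is mildly self-referential and should be read as the minimum of $f$ over the orders being compared against; the substance of the theorem is that the $1/K^2$ decay of $f(K)$ eventually drives it beneath that minimum, and making this reading precise — rather than the routine squaring algebra — is where care is needed.
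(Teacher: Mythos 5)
Your overall route is the same as the paper's: discard the $K$-independent factors, bound the partial geometric sum by $\lvert\sum_{k=1}^{K}\lambda^k\rvert \le \tfrac{1}{1+\lambda'_{\hat{A'},\min}}$ uniformly in $i$, deduce $f(K)\le (n-T)/\bigl(K^2(1+\lambda'_{\hat{A'},\min})^2\bigr)$, and square the hypothesis on $K$ to force $f(K)\le\min f(K)$. The one genuine gap is your treatment of eigenvalues equal to $-1$. The theorem's hypothesis places the optimization eigenvalues in $[-1,0)$, which \emph{contains} $-1$, and the definition of $\lambda'_{\hat{A'},\min}$ as the minimum ``except $-1$s'' makes clear that such eigenvalues are expected to occur. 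You instead read the hypothesis as excluding $-1$ and lean on that exclusion to keep the geometric tail $\lvert\lambda\rvert/(1-\lvert\lambda\rvert)$ finite; for any $i$ with $\lambda'_{\hat{A'},i}=-1$ your bound is $1/0$ and the argument breaks. The paper handles exactly this case by splitting into even and odd $K$: for the $m$ eigenvalues equal to $-1$ the partial sum $\sum_{k=1}^{K}(-1)^k$ is $0$ (even $K$) or $-1$ (odd $K$), contributing at most $m/K^2$, which is then absorbed into the final bound using $(1+\lambda'_{\hat{A'},\min})^{-2}>1$, yielding the $(n-T)$ (rather than $(n-m-T)$) numerator in the stated threshold.

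The gap is easily patched without the paper's case analysis: for $\lambda=-1$ one has $\lvert\sum_{k=1}^{K}(-1)^k\rvert\le 1<\tfrac{1}{1+\lambda'_{\hat{A'},\min}}$, so your uniform bound in fact covers those terms too, and the rest of your algebra goes through verbatim. But as written, the step ``this is maximized at the most negative eigenvalue'' silently assumes no eigenvalue equals $-1$, which contradicts the theorem's own setup. Your reading of $\min f(K)$ as the minimum over the orders being compared against (i.e.\ orders strictly below $K$) is the correct resolution of the paper's self-referential notation and matches the intent of its proof.
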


\begin{proof}
Since $\lambda'_{\hat{A'}, i} \in [-1,0)$, we can directly have Eq.~\eqref{equ.GF-Attack-rw} equal to $\sum_{i = T + 1}^{n} \frac{1}{K^2} (\sum_{k = 1}^{K}\lambda'^{k}_{\hat{A'},i})^{2} \cdot \frac{1}{d^2_{\min}} \cdot \sum_{i = T + 1}^{n} \|\mathbf{u}^{T}_{\hat{A},i}X\|_2^2$. By eliminating the parts that irrelevant to order $K$, our aim is equivalent to finding the lower bound of $f(K) = \sum_{i = T + 1}^{n} \frac{1}{K^2} (\sum_{k = 1}^{K}\lambda'^{k}_{\hat{A'},i})^{2}$. We conduct category discussion \wrt $K$ here:
    
    \noindent{{\bf{When $K$ is even},}} \ie, $K = 2z, z\in \mathbb{N}$, for the top $m$ smallest $\lambda'_{\hat{A'}, i} = -1$, we have $f(K)$ the following:
    \begin{align}
    f(K) &= \sum_{i = T + 1}^{n} \frac{1}{K^2} (\sum_{k = 1}^{K}\lambda'^{k}_{\hat{A'},i})^{2} \label{equ.MS_0}\\
    \notag & = \revision{\sum_{i = T + 1}^{n-m} \frac{1}{K^2} (\sum_{k = 1}^{K}\lambda'^{k}_{\hat{A'},i})^{2}} + \revision{\sum_{i = n - m + 1}^{n} \frac{1}{K^2} (\sum_{k = 1}^{K}{(-1)^k})^{2}} \\
    \notag & = \revision{\sum_{i = T + 1}^{n-m} \frac{1}{K^2} (\sum_{k = 1}^{K}\lambda'^{k}_{\hat{A'},i})^{2}} + \revision{\sum_{i = n - m + 1}^{n} \frac{1}{K^2} * 0} \\
    & = \revision{\sum_{i = T + 1}^{n-m} \frac{1}{K^2} (\sum_{k = 1}^{K}\lambda'^{k}_{\hat{A'},i})^{2}}.\label{equ.MS_1}
    \end{align}
    Since $\lambda'_{\hat{A'}, i} \neq -1$ in Eq.~\eqref{equ.MS_1}, we can have \revision{the following from Maclaurin Series:}
    \begin{align}
    f(K) \notag &= \sum_{i = T + 1}^{n - m} \frac{1}{K^2} (\sum_{k = 1}^{K}\lambda'^{k}_{\hat{A'},i})^{2}\\
    \notag &< \revision{\sum_{i = T + 1}^{n - m} \frac{1}{K^2} (\sum_{k = 1}^{K}(-\lambda'_{\hat{A'},i})^{k})^{2}}\\
    \notag &< \sum_{i = T + 1}^{n - m} \frac{1}{K^2} (\frac{1}{1 + \lambda'_{\hat{A'},i}})^{2}
    \end{align}
    For the minimum eigenvalue $\lambda'_{\hat{A'},\min}$ in series $\lambda'_{\hat{A'}, i}$ except -1s, because $\frac{1}{1 + \lambda'_{\hat{A'},i}} < \frac{1}{1 + \lambda'_{\hat{A'},\min}}$, the following inequality holds
    \begin{align*}
        f(K) &< \revision{\sum_{i = T + 1}^{n - m} \frac{1}{K^2} (\frac{1}{1 + \lambda'_{\hat{A'},\min}})^{2}} \\
        &= \revision{(n-m-T) \frac{1}{K^2} (\frac{1}{1 + \lambda'_{\hat{A'},\min}})^{2}.}
    \end{align*}
    Assume $\min{f(K)}$ is the minimum of $f(K)$ for $K \in {1,2,\dots,K}$, which can be easily obtained in practice, it follows that 
    \begin{align*}
       \min{f(K)} \leq f(K) < \revision{(n-m-T) \frac{1}{K^2} (\frac{1}{1 + \lambda'_{\hat{A'},\min}})^{2}.}
    \end{align*}
    In order to find a $K$ that satisfies the adversarial attack loss~\eqref{equ.GF-Attack-rw} for $K_{th}$ order LINE/DeepWalk is the lower bound, we need to have
    \begin{align*}
       \min{f(K)} * K^2 \geq \revision{(n-m-T) (\frac{1}{1 + \lambda'_{\hat{A'},\min}})^{2}}
    \end{align*}
    Given $\min{f(K)} \geq 0$, which is obvious from Eq.~\eqref{equ.MS_0} as It turns out that as long as $K$ satisfies:
    \begin{align}
        K \geq \sqrt{\frac{n-m-T}{\min{f(K)}}} \frac{1}{1+\lambda'_{\hat{A'},\min}}, \label{equ.EvenK}
    \end{align}
    the adversarial attack loss~\eqref{equ.GF-Attack-rw} for $K_{th}$ order LINE/DeepWalk is the lower bound for the losses with orders less than $K$.
    
    \noindent{{\bf{When $K$ is odd},}} \ie, $K = 2z+1$, for the top $m$ smallest $\lambda'_{\hat{A'}, i} = -1$, we have $f(K)$ the following:
    \begin{align}
    f(K) \notag & = \revision{\sum_{i = T + 1}^{n-m} \frac{1}{K^2} (\sum_{k = 1}^{K}\lambda'^{k}_{\hat{A'},i})^{2}} + \revision{\sum_{i = n - m + 1}^{n} \frac{1}{K^2} (\sum_{k = 1}^{K}{(-1)^k})^{2}} \\
    \notag & = \revision{\sum_{i = T + 1}^{n-m} \frac{1}{K^2} (\sum_{k = 1}^{K}\lambda'^{k}_{\hat{A'},i})^{2}} + \revision{\sum_{i = n - m + 1}^{n} \frac{1}{K^2}(-1)^2} \\
    & = \revision{\sum_{i = T + 1}^{n-m} \frac{1}{K^2} (\sum_{k = 1}^{K}\lambda'^{k}_{\hat{A'},i})^{2}} + \frac{m}{K^2}.\label{equ.MS_2}
    \end{align}
    Then we can have the similar result as the situation $K$ is even from Maclaurin Series:
    \begin{align*}
        &f(K) < \revision{(n-m-T) \frac{1}{K^2} (\frac{1}{1 + \lambda'_{\hat{A'},\min}})^{2} + \frac{m}{K^2}} \\
        &= \left( (n-T) (\frac{1}{1 + \lambda'_{\hat{A'},\min}})^{2} - m (\frac{1}{1 + \lambda'_{\hat{A'},\min}})^{2} + m \right)\frac{1}{K^2}
    \end{align*}
    Since $(\frac{1}{1 + \lambda'_{\hat{A'},\min}})^{2} > 1$, then $-m(\frac{1}{1 + \lambda'_{\hat{A'},\min}})^{2} < -m$, thus we have 
    \begin{align*}
        f(K) &< \big( (n-T) (\frac{1}{1 + \lambda'_{\hat{A'},\min}})^{2} - m + m \big)\frac{1}{K^2} \\
        &= (n-T) \frac{1}{K^2} (\frac{1}{1 + \lambda'_{\hat{A'},\min}})^{2} .
    \end{align*}
    With the same analysis as $K$ is even, we can have the desired condition for an odd $K$ as
    \begin{align}
        K \geq \sqrt{\frac{n-T}{\min{f(K)}}} \frac{1}{1+\lambda'_{\hat{A'},\min}}. \label{equ.OddK}
    \end{align}
    While $n-T \geq n-m-T$, combining the result from~\eqref{equ.EvenK}, we can have the overall desired condition for $K$ is~\eqref{equ.OddK}, which concludes the proof.
\end{proof}
\begin{remark}
Similar to \textit{GF-Attack} on GCNs,  by choosing a relatively large order $K$ of the loss~\eqref{equ.GF-Attack-rw} for LINE/DeepWalk in practice, we can effectively attack the target LINE/DeepWalk model without the knowledge about the orders (window-size).
\end{remark}

\begin{algorithm}[!tb]
\caption{Algorithm in \textit{GF-Attack} under the RBA setting} \label{alg:Framework}
\begin{algorithmic}[1] 

\REQUIRE ~~\\ 
adjacency matrix $A$; 
feature matrix $X$; 
target vertex $t$; \\
number of top-$T$ smallest singular values/vectors selected $T$; order of graph filter $K$;
fixed budget $\beta$.\\

\ENSURE ~~\\ 
Perturbed adjacency matrix $A'$.

\STATE Initial the candidate flips set as $\mathcal{C} = \{(v, t)|v \neq t \}$, eigenvalue decomposition of $\hat{A} = U_{\hat{A}}\Lambda_{\hat{A}}U_{\hat{A}}^{T}$;

\FOR {$(v, t) \in \mathcal{C}$}

\STATE Approximate $\Lambda'_{\hat{A}}$ resulting by removing/inserting edge $(v, t)$ via Eq.~\eqref{equ:General_Eigen};
\STATE {Update ${Score_{(v, t)}}$ from loss Eq.~\eqref{equ.GF-Attack-sym} or Eq.~\eqref{equ.GF-Attack-rw} \wrt the assumption on the type of victim model;}

\ENDFOR
\STATE $\mathcal{C}_{sel}$ $\gets$ edge flips with top-$\beta$ $Score$;
\STATE $A' \gets A \pm \mathcal{C}_{sel}$;
\RETURN{$A'$}
\end{algorithmic}
\end{algorithm}

\subsection{Attack Algorithm}
Based on the general attack loss, the goal of our adversarial attack is to misclassify a target vertex $t$ from an attributed graph $G(\mathcal{V},\mathcal{E})$ given a downstream vertex classification task. We start by defining the candidate flips then the general attack loss is responsible for scoring the candidates.

Directly solving Eq.~\eqref{equ.GF-Attack-sym} or Eq.~\eqref{equ.GF-Attack-rw} is complex and time-consuming.
Though the search space for graph-structured data is discrete, searching over full graphs is still $\mathcal{O}(n^2)$, which also could be pretty complex for large graphs. To alleviate this issue, we first adopt the hierarchical strategy in \cite{ICML2018Adversarial} to decompose the single edge selection into two ends of this edge in practice.
\revision{
Then we let the candidate set $\mathcal{C}$ for edge selection contains all vertices (edges and non-edges) directly accessory to the target vertex, i.e. $\mathcal{C} = \{(v, t)|v \neq t \}$, which is consistent with~\cite{ICML2018Adversarial,icml2019adversarial}. Intuitively, the further away the vertices from target $t$, the less influence they impose on $t$.
Meanwhile, experiments in \cite{KDD2018Adversarial,icml2019adversarial} also show that their candidates from $\mathcal{C} = \{(v, t)|v \neq t \}$ can do significantly more damage compared to candidate flips chosen from the other parts of the graph. Thus, in our experiments we also choose to restrict our candidates within the same choice of set $\mathcal{C} = \{(v, t)|v \neq t \}$.
}

Overall, for a given target vertex $t$, we establish the targeted attack by sequentially calculating the corresponding \textit{GF-Attack} loss \wrt graph-shift filter $S$ for each flip in the candidate set as scores. Then with a fixed budget $\beta$, the adversarial attack is accomplished by selecting flips with top-$\beta$ scores as perturbations on the adjacency matrix $A$ of the clean graph. Details of the algorithm in \textit{GF-Attack} under RBA setting \revision{are} depicted in Algorithm~\ref{alg:Framework}.

\begin{table*}[!t]
\centering
\caption{Summary of the change in classification accuracy (in percent) compared to the clean/original graph. Single edge perturbation under the RBA and poisoning settings. Lower is better. We use \textit{Cheby} for ChebyNet and \textit{DW} for DeepWalk here to save space. \label{tab:results single edge}}
\resizebox{\textwidth}{!}{%
\begin{tabular}{ l c c c c c c c c c c c c c c c}
\toprule
    Dataset & \multicolumn{5}{c}{Cora} & \multicolumn{5}{c}{Citeseer} & \multicolumn{5}{c}{Pubmed}  \\
\cmidrule(lr){2-6}\cmidrule(l){7-11}\cmidrule(l){12-16}
Models & GCN & SGC & Cheby & DW & LINE & GCN & SGC & Cheby & DW & LINE & GCN & SGC & Cheby & DW & LINE\\
(unattacked) &   80.20 & 78.82  & 80.33 & 77.23 & 76.75	& 72.50 & 69.68 & 70.96 & 69.74 & 65.15 & 80.40 & 80.21 & 79.45 & 78.69 & 72.12 \\
\hline
\textit{Random} & -1.81 &-2.01 & -2.30 & -1.84 & -2.61 & -1.57 & -1.74  & -1.92 & -1.44 & -1.13 & -2.01 & -2.18 & -1.28 & -1.95 & -1.34 \\
\revision{\textit{Degree}} & -3.50 & -6.06  & -5.59 & -2.91 & -4.59 & -4.17 & -4.24 & -4.14 & -7.55 & -8.35 & -3.20 & -3.91 & -3.68 & -2.28 & -8.41 \\
\textit{RL-S2V} & -4.10 & -5.12  & -6.48 & -4.52 & -5.39 & -4.05 & -4.08  & -4.55 & -11.13 & -10.05 & -5.64 & -6.71 & -4.46 & -5.10 & -12.21\\
\textit{$\mathcal{A}_{class}$} & -3.89 & -6.54 & -8.10 & -8.63 & -7.12 & -5.42 & -6.14 & -5.96 & \textbf{-13.24} & -9.47 & -4.34 & -4.55 & -5.92 & -4.56 & -11.98 \\
\midrule
\textit{GF-Attack} & \textbf{-5.56}  & \textbf{-7.09} & \textbf{-9.10} & \textbf{-9.95} & \textbf{-9.74} & \textbf{-8.47} & \textbf{-9.04} & \textbf{-8.06} & -12.38 & \textbf{-10.91} & \textbf{-7.06} & \textbf{-7.20} & \textbf{-7.64} & \textbf{-7.14} & \textbf{-13.26}\\
\bottomrule
\end{tabular}
}
\end{table*}

\section{Experiments}\label{sec.exp}
\textbf{Datasets.}
We evaluate our approach on three real-world datasets: Cora, Citeseer, and Pubmed. In all three citation network datasets, vertices are documents with corresponding bag-of-words features and edges are citation links. The data preprocessing settings follow the benchmark setup in \cite{ICLR2017SemiGCN}.
Only the largest connected component (LCC) is considered to be consistent with~\cite{KDD2018Adversarial}.
For a statistical overview of datasets, please kindly refer to~\cite{KDD2018Adversarial}.

\textbf{Baselines.}
\revision{In the current literature}, few studies strictly follow the restricted black-box attack setting. They utilize the additional information to help construct the attackers, such as labels \cite{KDD2018Adversarial}, gradients \cite{ICML2018Adversarial}, etc. Therefore, we compare four baselines with our proposed attack framework under the RBA setting as follows:
\begin{itemize}[noitemsep,topsep=0pt,parsep=0pt,partopsep=0pt]
\item \textit{Random} \cite{ICML2018Adversarial}:  for each perturbation, randomly choosing insertion or removing of an edge in graph $G$. We report averages over 10 different seeds to alleviate the influence of randomness.
\item \textit{Degree} \cite{CIKM2012Gelling}:  for each perturbation, inserting or removing an edge based on degree centrality, which is equivalent to the sum of degrees in original graph $G$.
\item \textit{RL-S2V} \cite{ICML2018Adversarial}: a reinforcement learning-based attack method, which learns the generalizable attack policy for GCN under the RBA scenario.
\item \textit{$\mathcal{A}_{class}$} \cite{icml2019adversarial}: a matrix perturbation theory based black-box attack method designed for DeepWalk. Then \textit{$\mathcal{A}_{class}$} evaluates the targeted attacks on vertex classification by learning a logistic regression.
\end{itemize}

\textbf{Target Models.}
To validate the generalization ability of \textit{GF-Attack}, we choose four popular GEMs: GCN \cite{ICLR2017SemiGCN}, SGC \cite{sgc_icml19}, DeepWalk \cite{perozzi2014deepwalk} and LINE \cite{WWW2015Line} for evaluation. GCN and SGC are GCNs and the others are sampling-based GEMs. 
\revision{Considering that both GCN and SGC are fixed-filter GCNs, we additionally choose a representative parameterized-filter variant, ChebyNet~\cite{Defferrard2016ChebNet}, as a victim model to further evaluate the effectiveness of our framework. The attack loss for ChebyNet is consistent with GCN due to their theoretical connection as we analyzed in Section~\ref{sec.GF-GCN}. For ChebyNet, we set the order of Chebyshev polynomials $K$ as $2$.}
For DeepWalk, we set the window-size as $5$. For both LINE and DeepWalk, the number of negative sampling in skip-gram is set as $1$, and the embedding dimension is chosen as $32$. A logistic regression classifier is connected to the output embeddings of sampling-based methods for classification. Without other specification, all GCNs contain two layers.

\textbf{Attack Configuration.}
A small budget $\beta$ is applied to regulate all the attackers. To make this attack task more challenging, the budget $\beta$ is set to 1. Specifically, the attacker is limited to only adding/deleting a single edge given a target vertex $t$. For our method, we set the parameter $T$ in our general attack model as $n - T = 128$, which means that we choose the top-$T$ smallest eigenvalues for $T$-rank approximation in the embedding quality measure. Unless otherwise 
indication, the order of graph filter in \textit{GF-Attack} model is set as $K=2$. Following the setting in \cite{KDD2018Adversarial}, we split the graph into labeled (20\%) and unlabeled vertices (80\%). Further, the labeled vertices are splitted into equal parts for training and validation. The labels and classifier are invisible to the attacker due to the RBA setting. The attack performance is evaluated by the decrease of vertex classification accuracy following \cite{ICML2018Adversarial}. \revision{Without otherwise specification, the attack is conducted on GCNs under the evasion setting and on sampling-based GEMs under the poisoning setting.}

\begin{figure*}[htbp]
\centering
\subfigure {\includegraphics[width=0.24\linewidth]{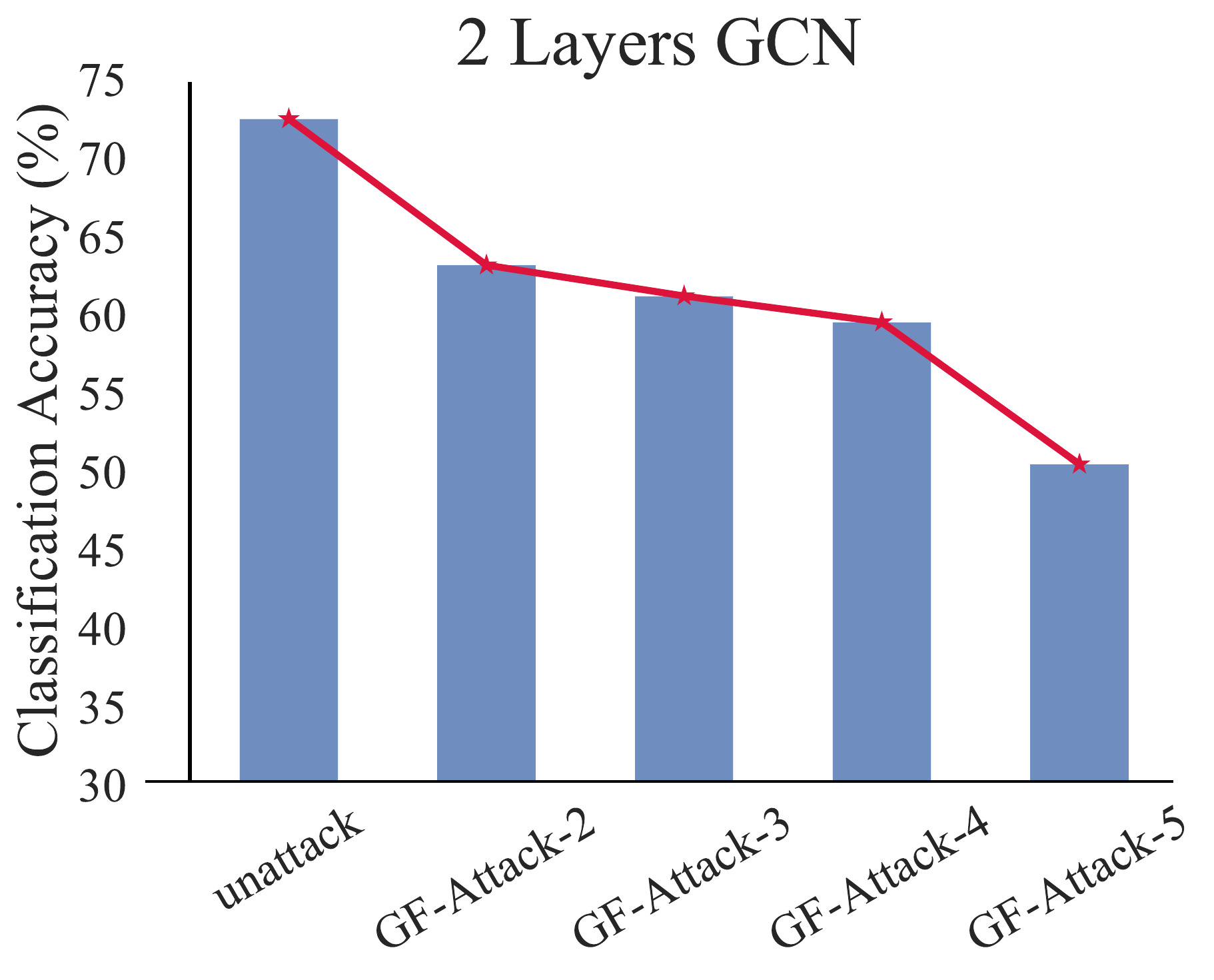}}
\subfigure {\includegraphics[width=0.24\linewidth]{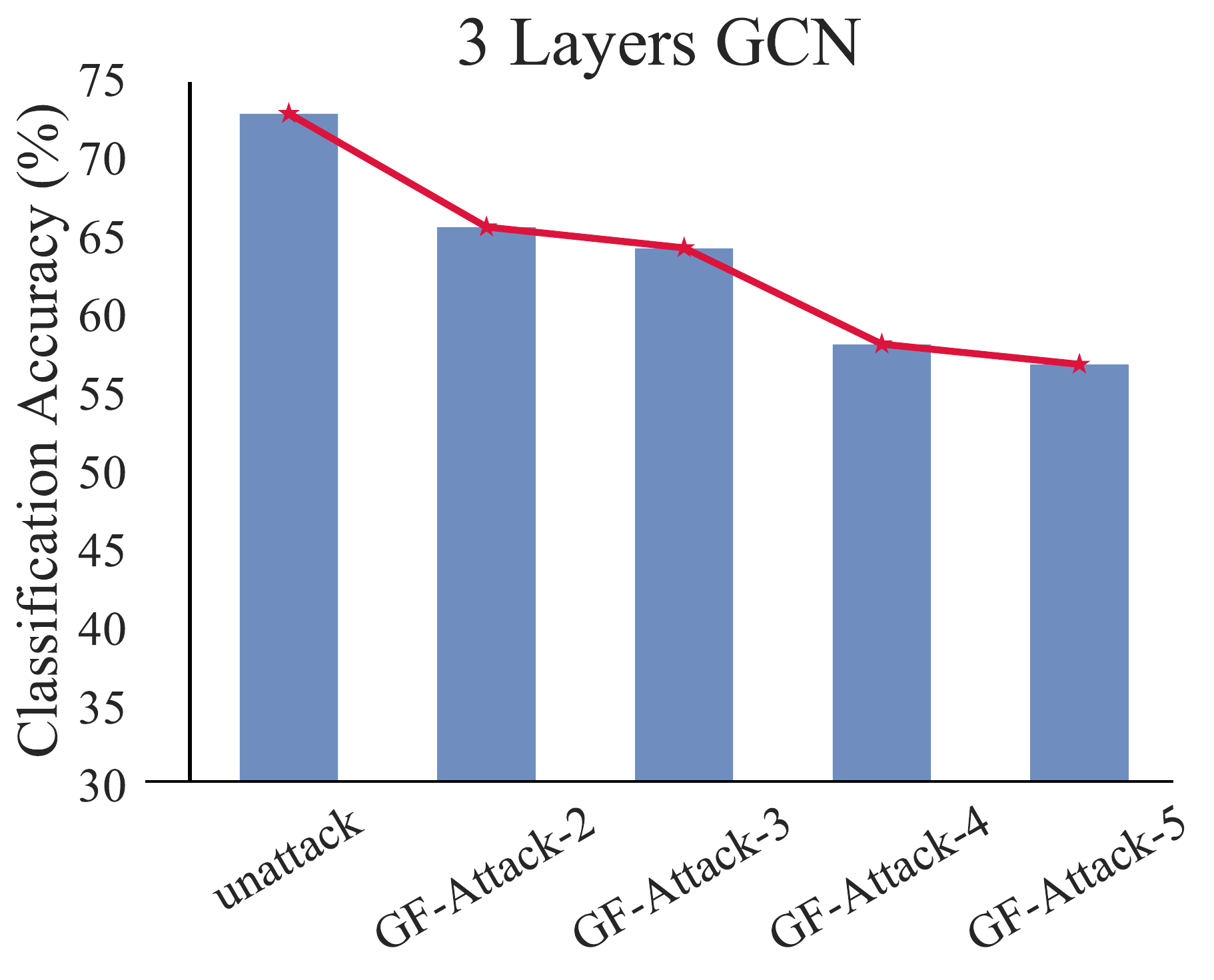}}
\subfigure {\includegraphics[width=0.24\linewidth]{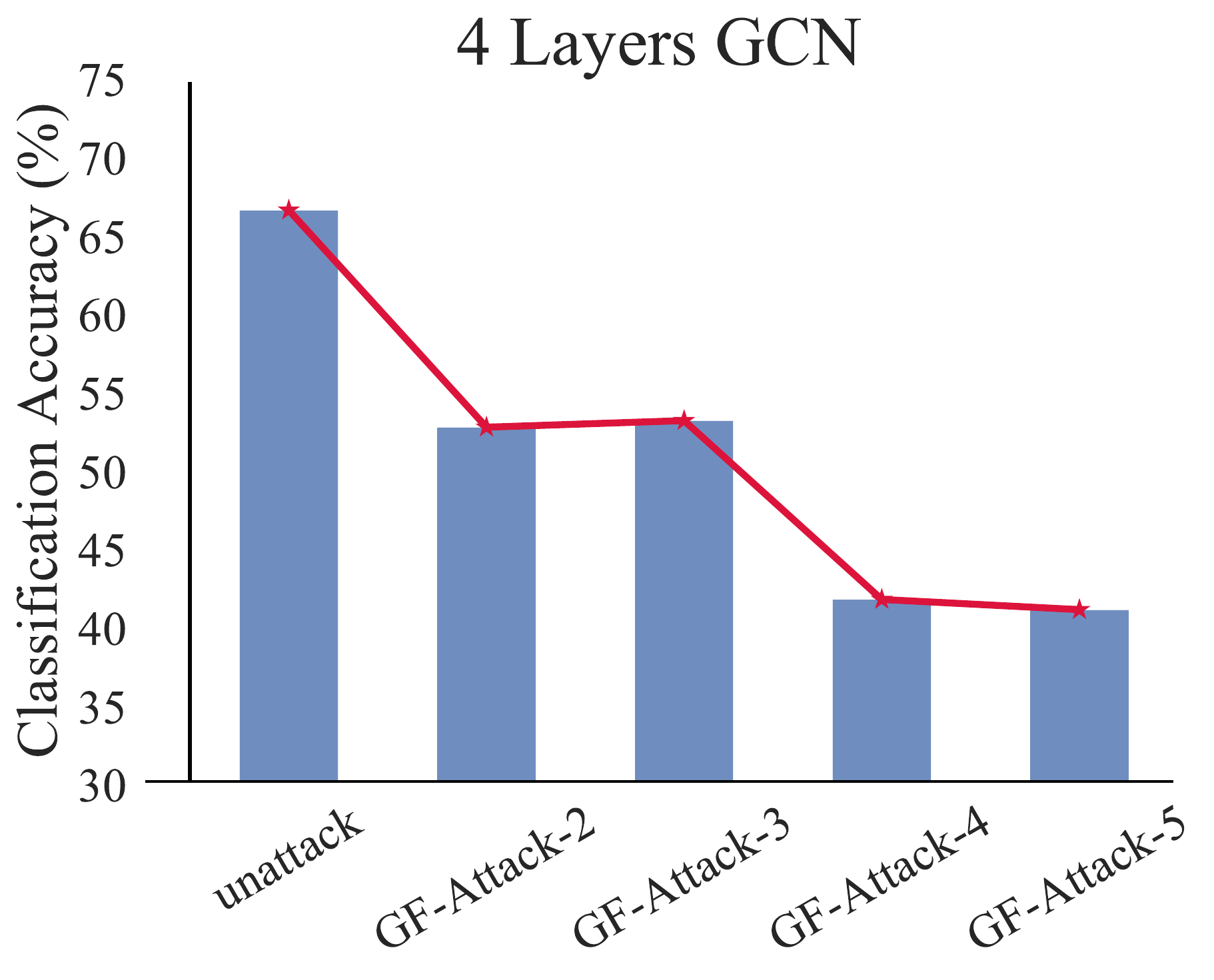}}
\subfigure {\includegraphics[width=0.24\linewidth]{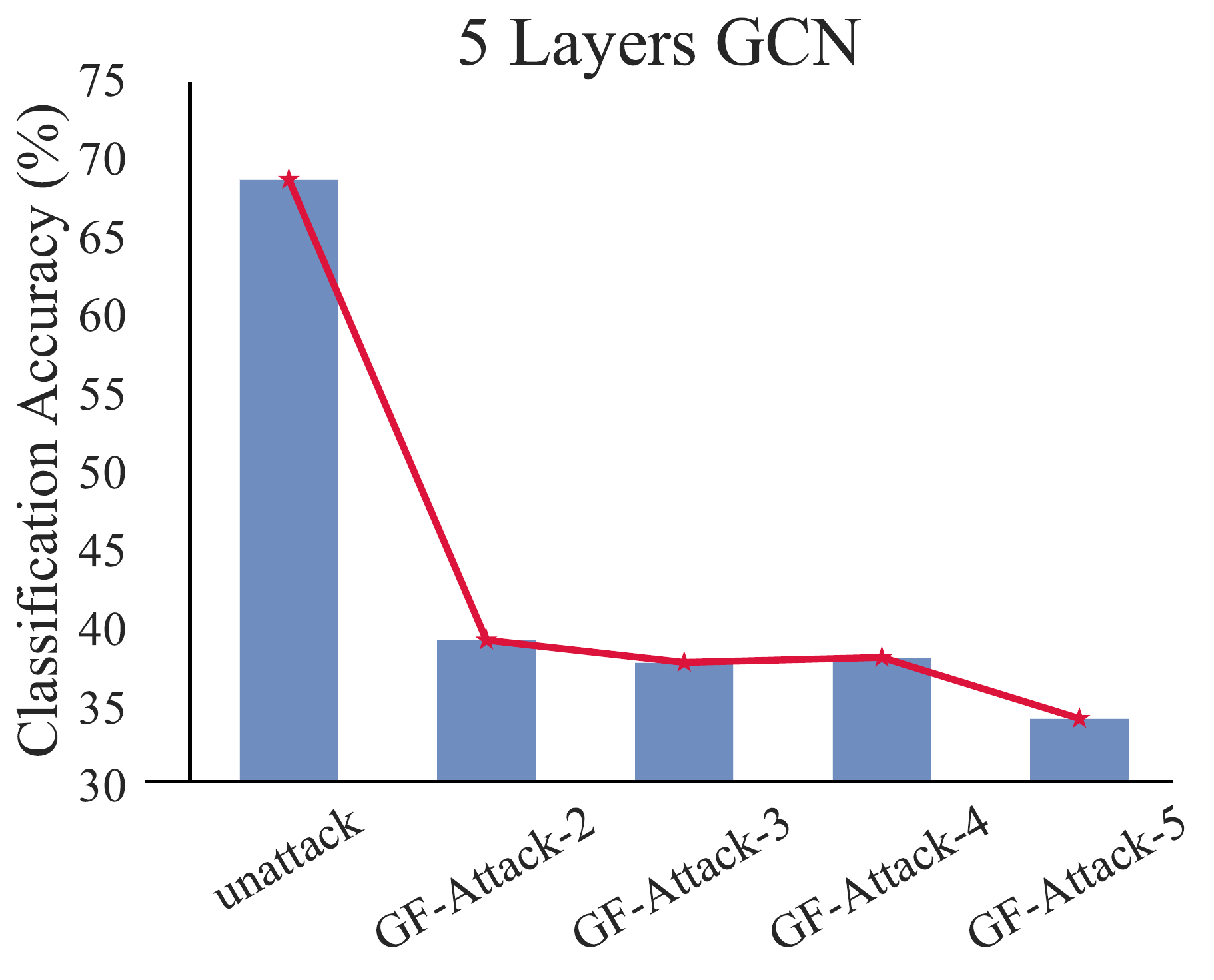}}
\caption{Comparison of the classification performance between order $K$ in \textit{GF-Attack} (x-axis) and the number of layers in GCN. Lower is better.}
	\label{fig:layer vs order GCN}
\end{figure*}

\begin{figure*}[htbp]
\centering
\subfigure {\includegraphics[width=0.24\linewidth]{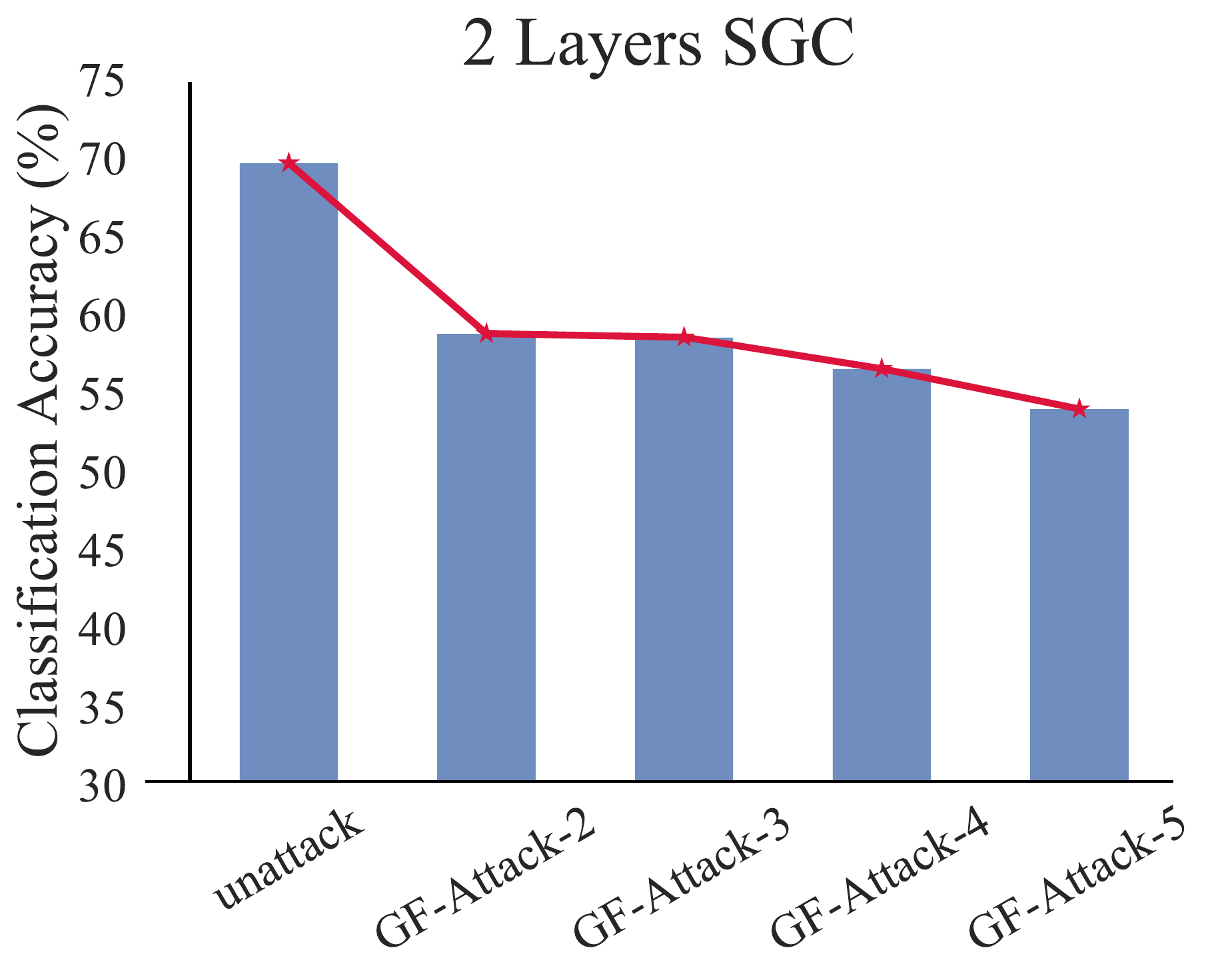}}
\subfigure {\includegraphics[width=0.24\linewidth]{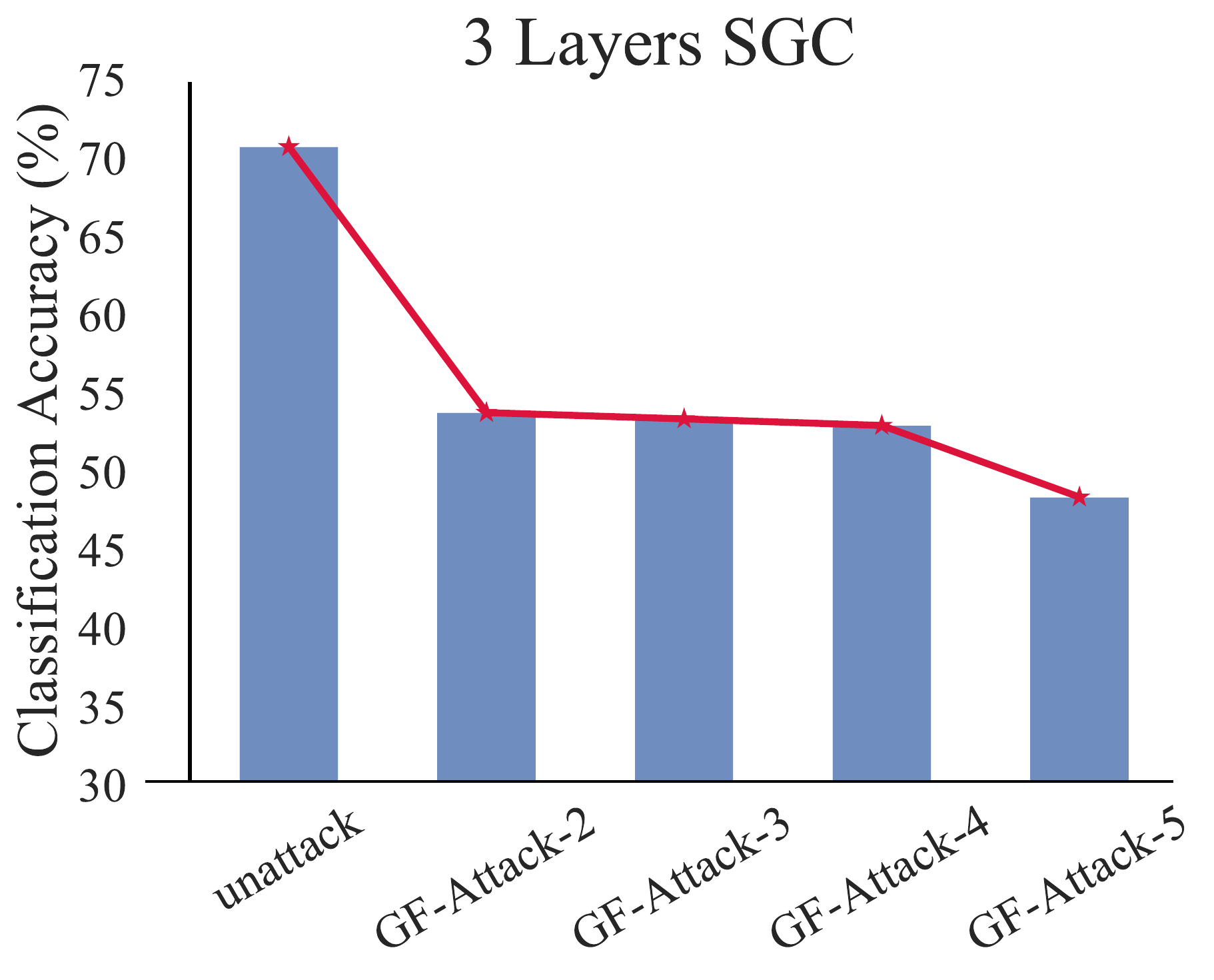}}
\subfigure {\includegraphics[width=0.24\linewidth]{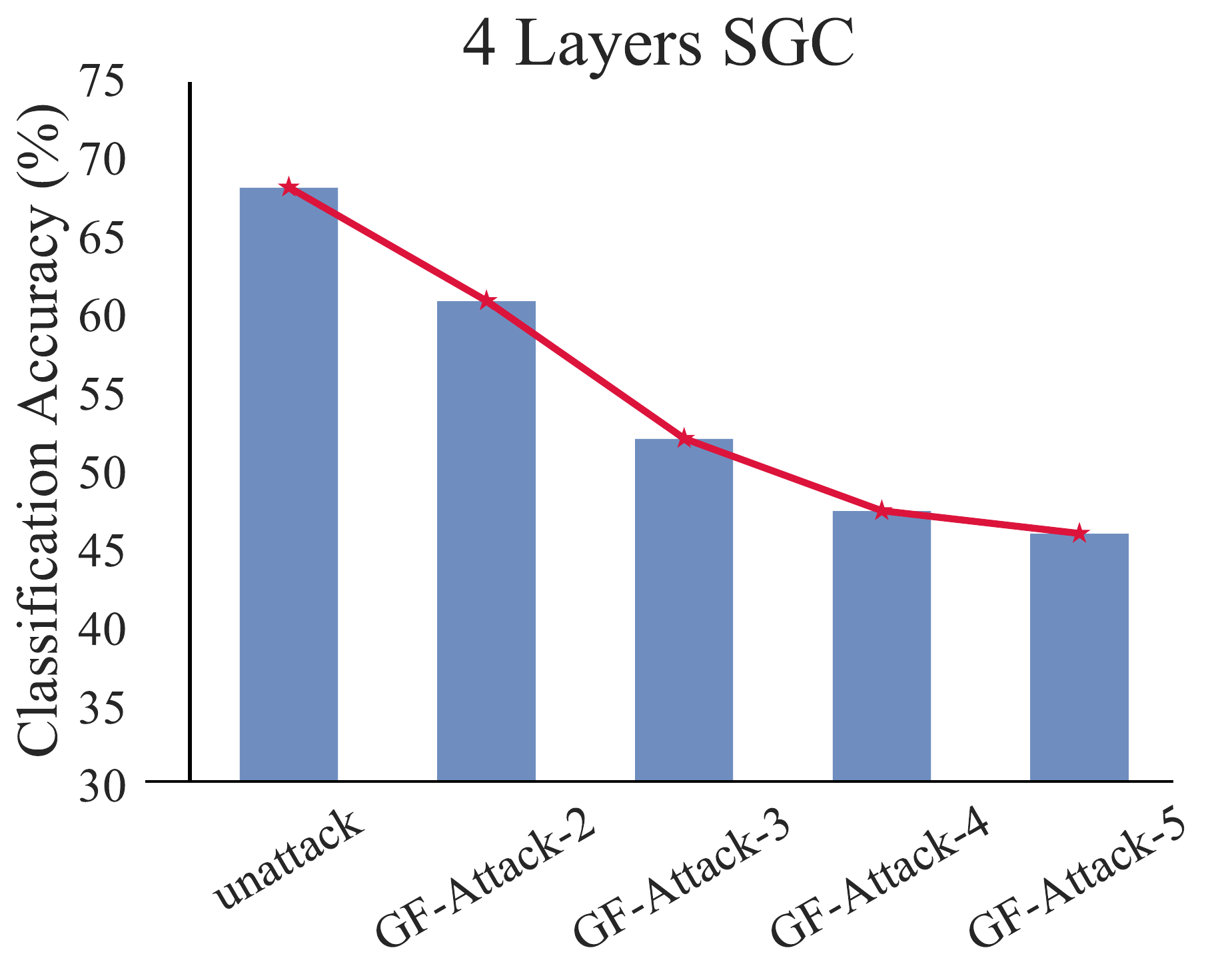}}
\subfigure {\includegraphics[width=0.24\linewidth]{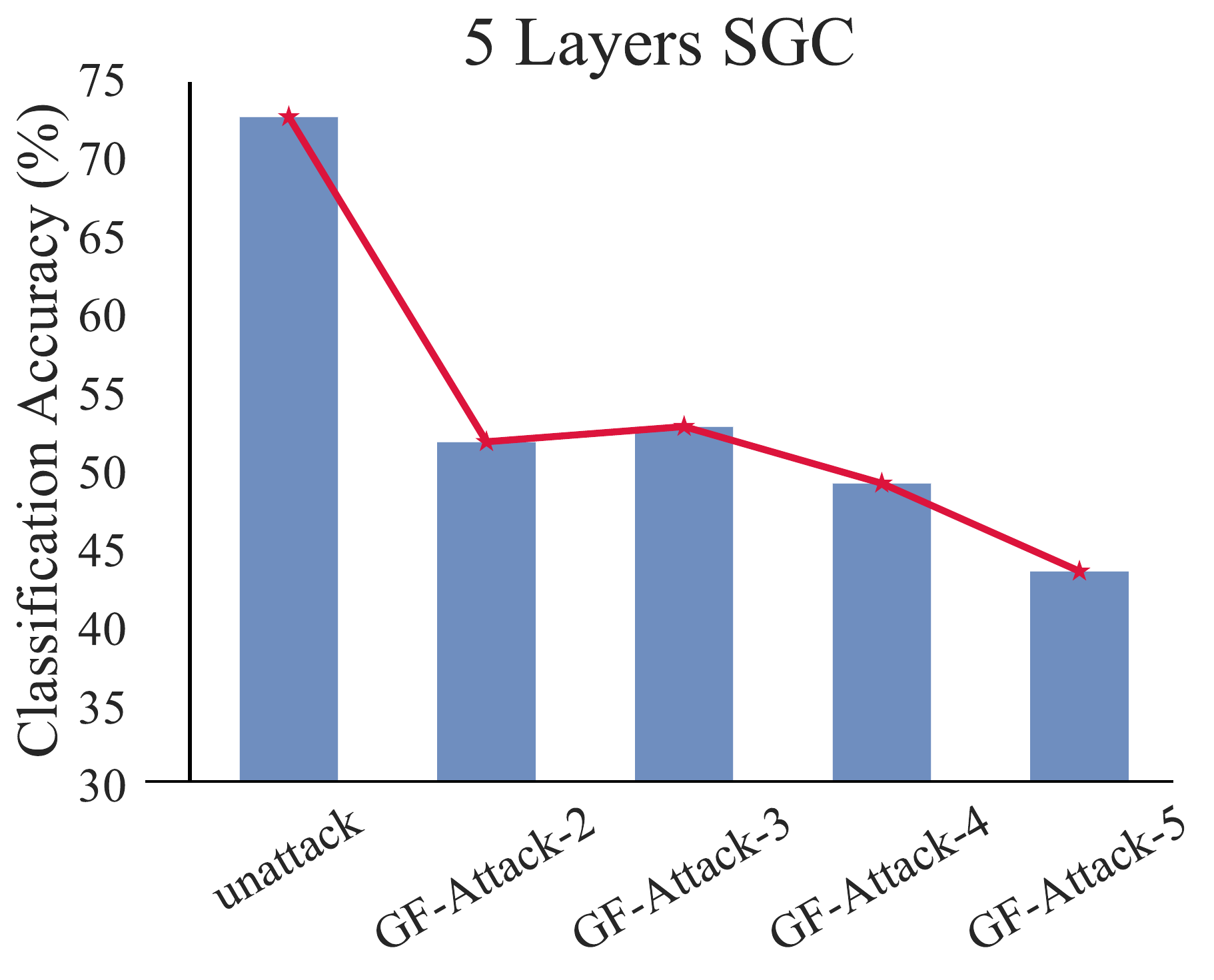}}
\caption{Comparison of the classification performance between order $K$ of \textit{GF-Attack} (x-axis) and the number of layers in SGC. Lower is better.}
	\label{fig:layer vs order SGC}
\end{figure*}

\begin{figure*}[htbp]
\centering
\subfigure {\includegraphics[width=0.195\linewidth]{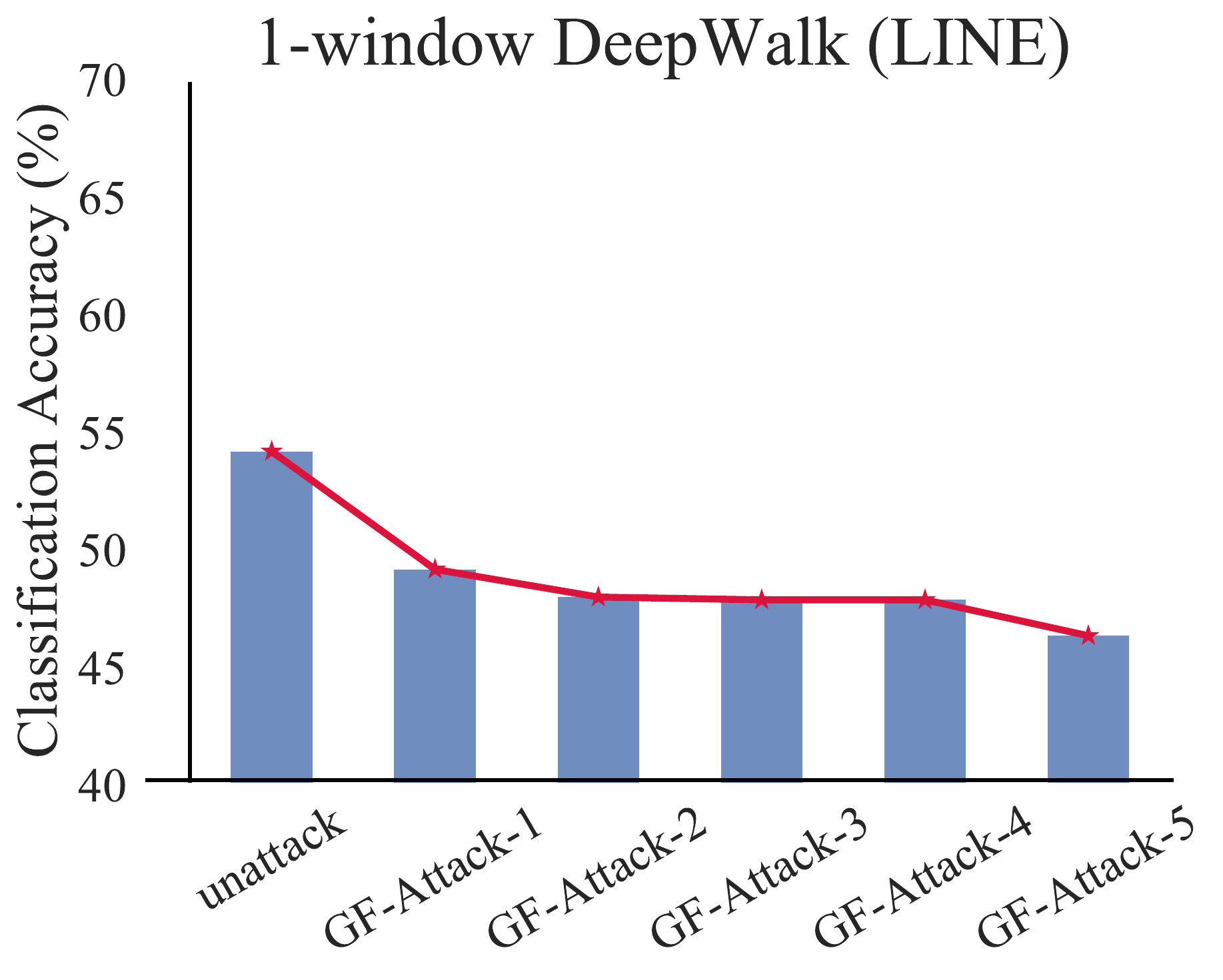}}
\subfigure {\includegraphics[width=0.195\linewidth]{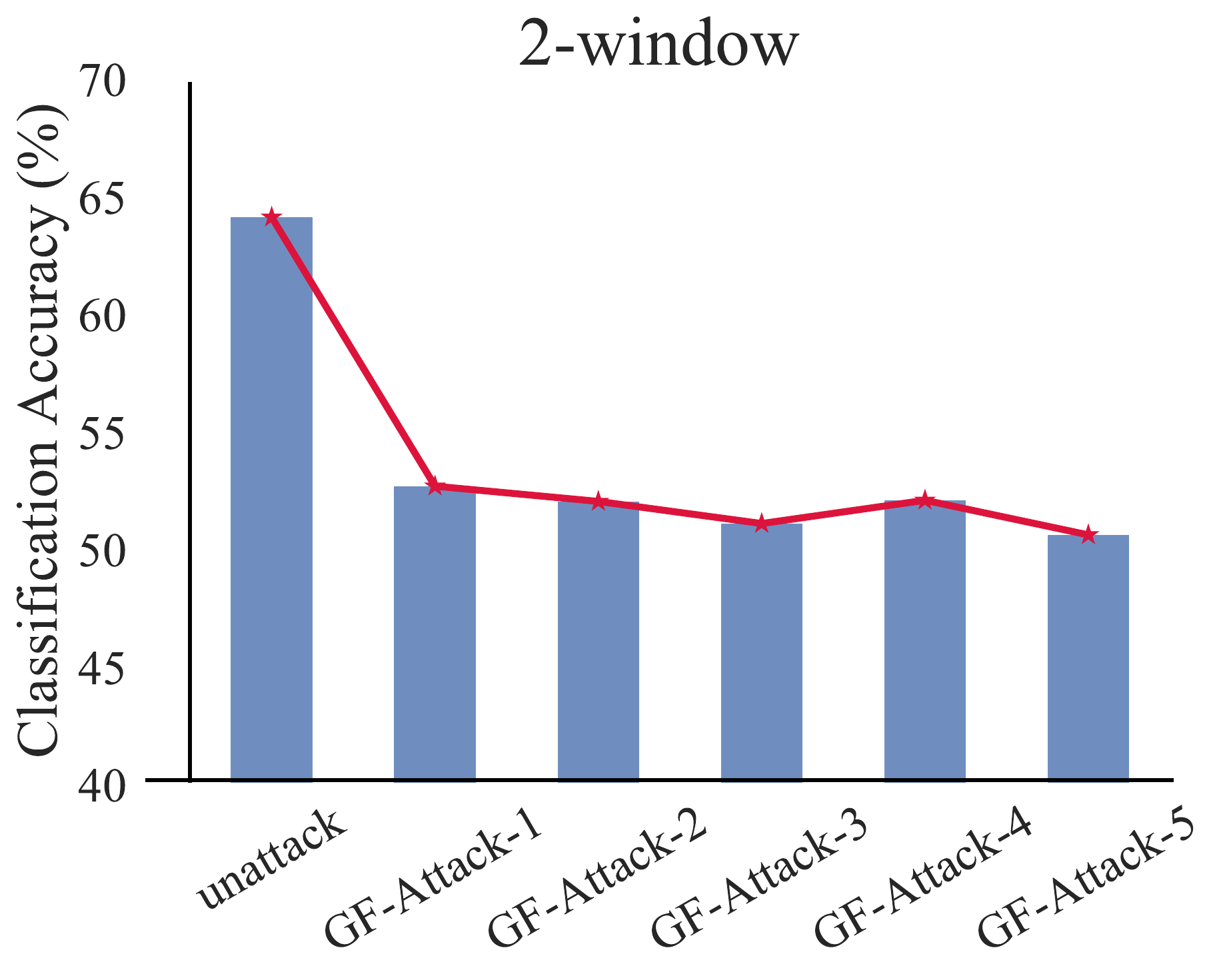}}
\subfigure {\includegraphics[width=0.195\linewidth]{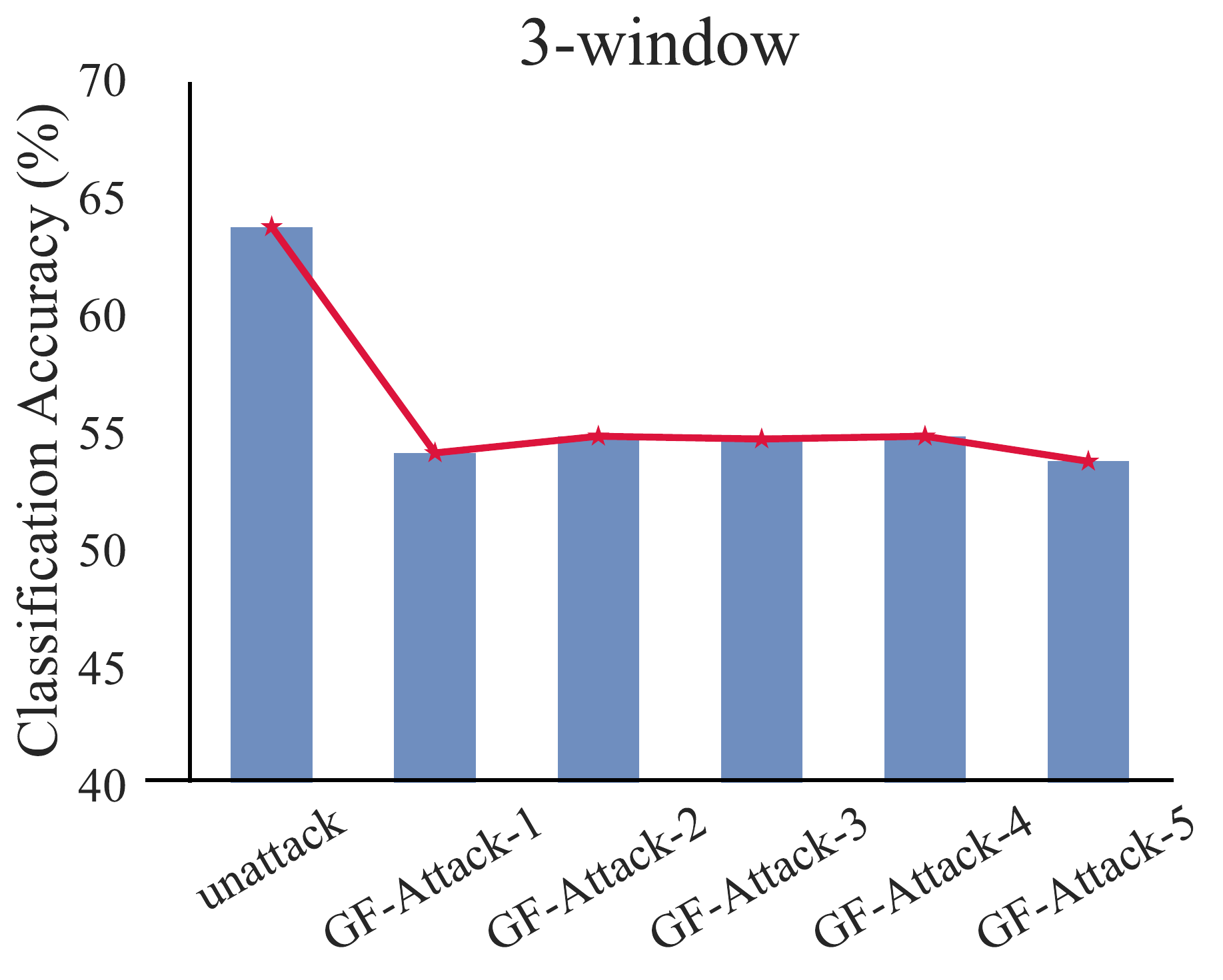}}
\subfigure {\includegraphics[width=0.195\linewidth]{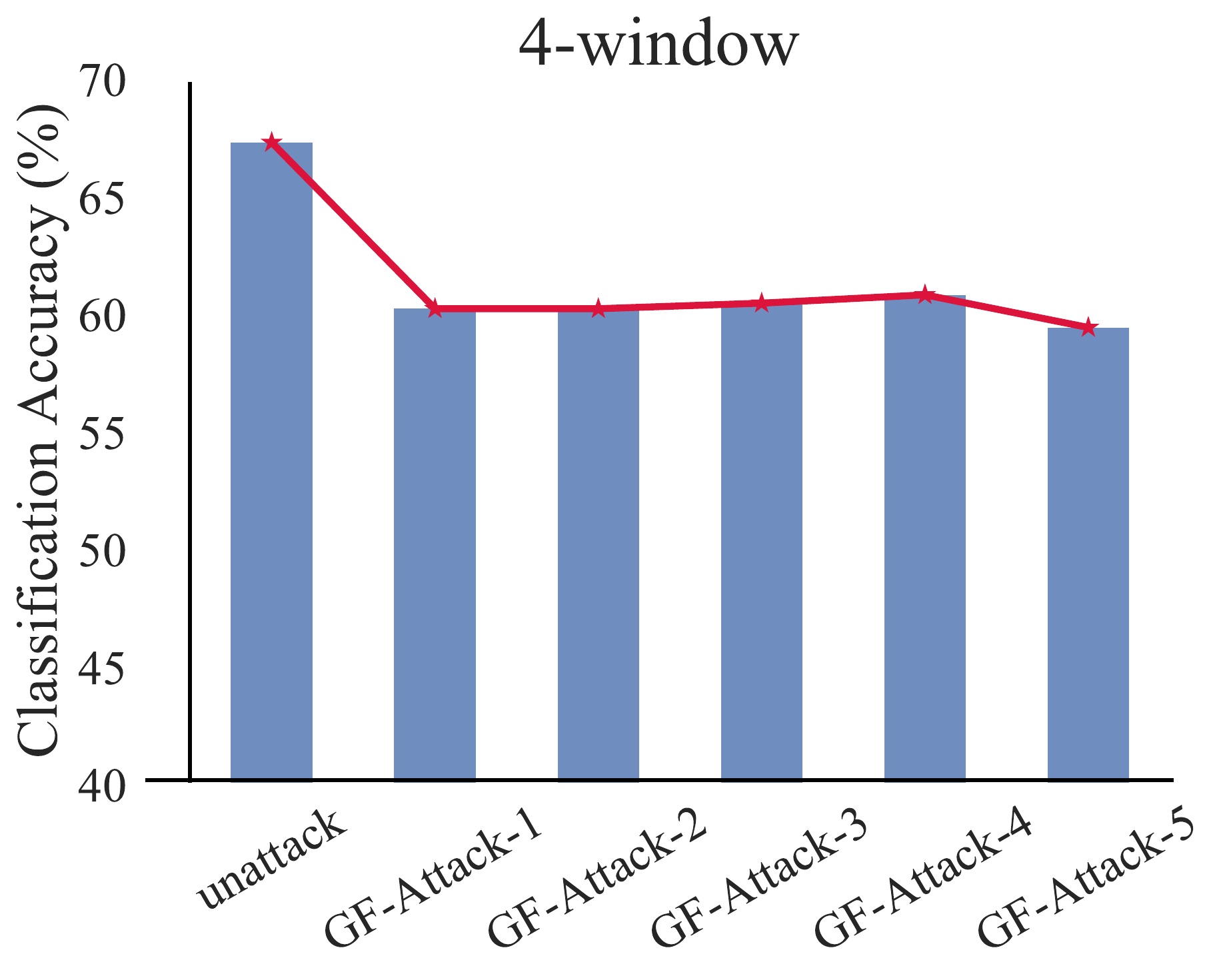}}
\subfigure {\includegraphics[width=0.195\linewidth]{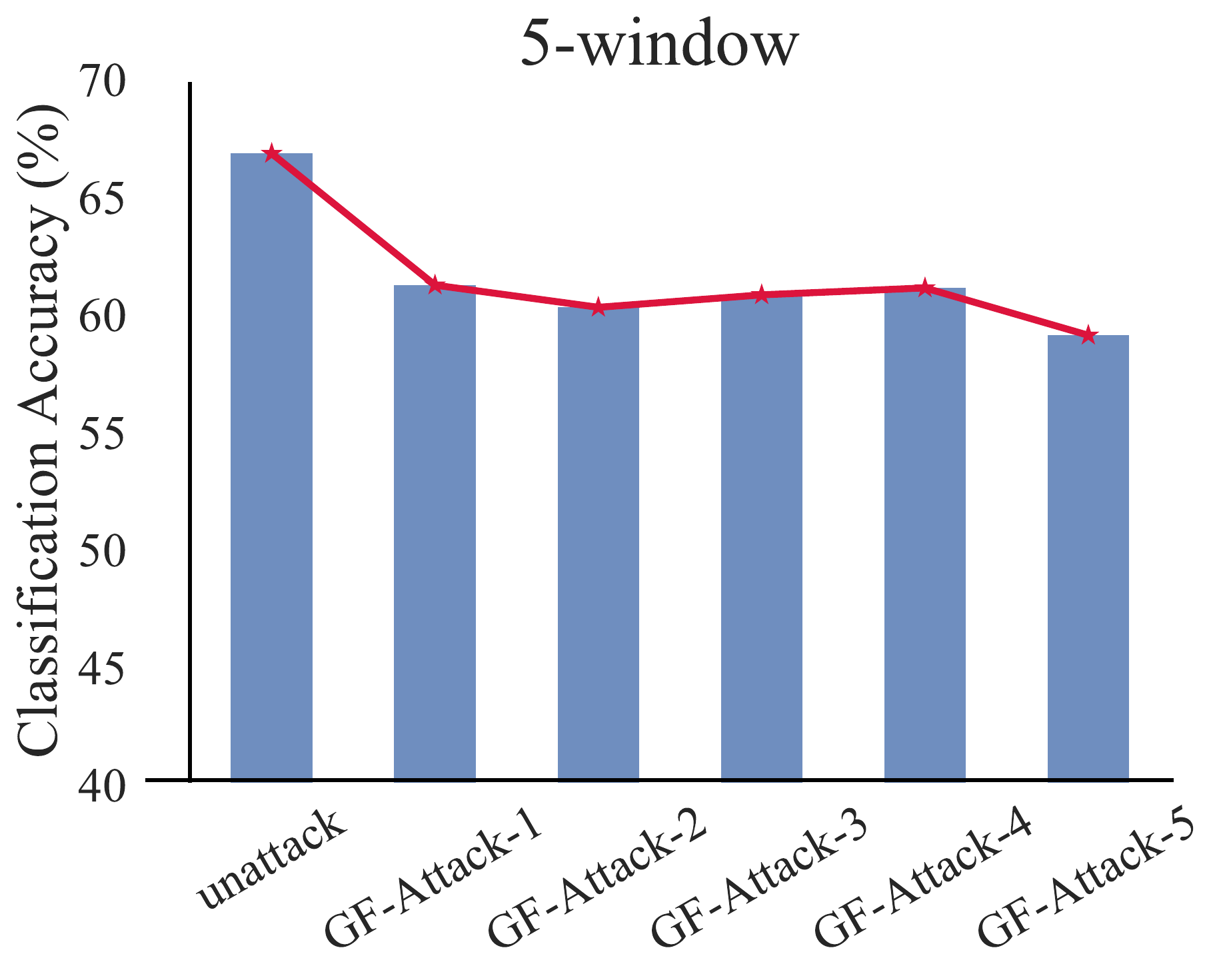}}
\caption{Comparison of the classification performance between order $K$ of \textit{GF-Attack} (x-axis) and the number of window-size in DeepWalk. Lower is better. Note that LINE is a 1-window special case of DeepWalk.}
	\label{fig:layer vs order DW}
\end{figure*}

\begin{figure*}[htbp]
\centering
\includegraphics[width=0.33\textwidth]{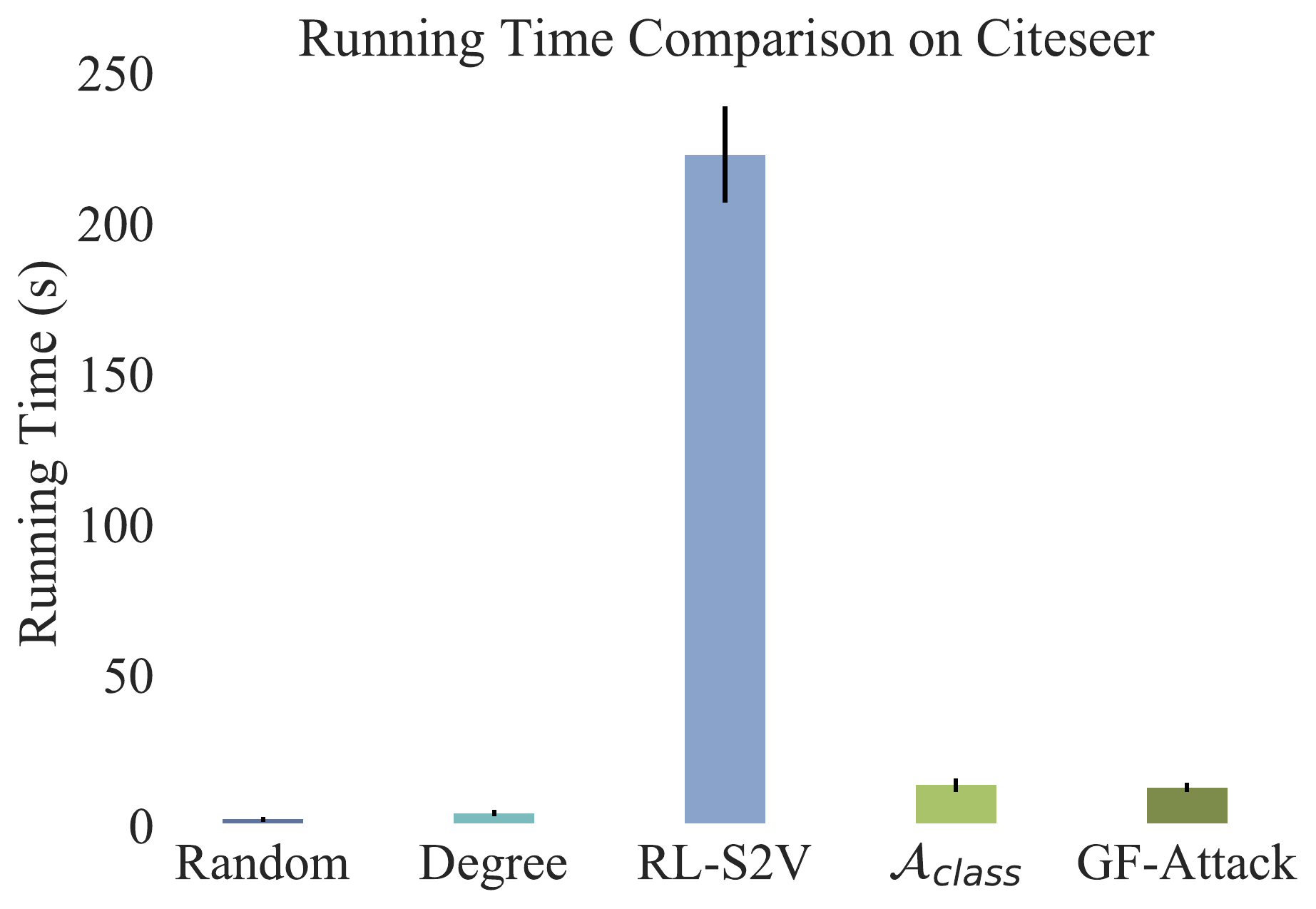}
\includegraphics[width=0.33\textwidth]{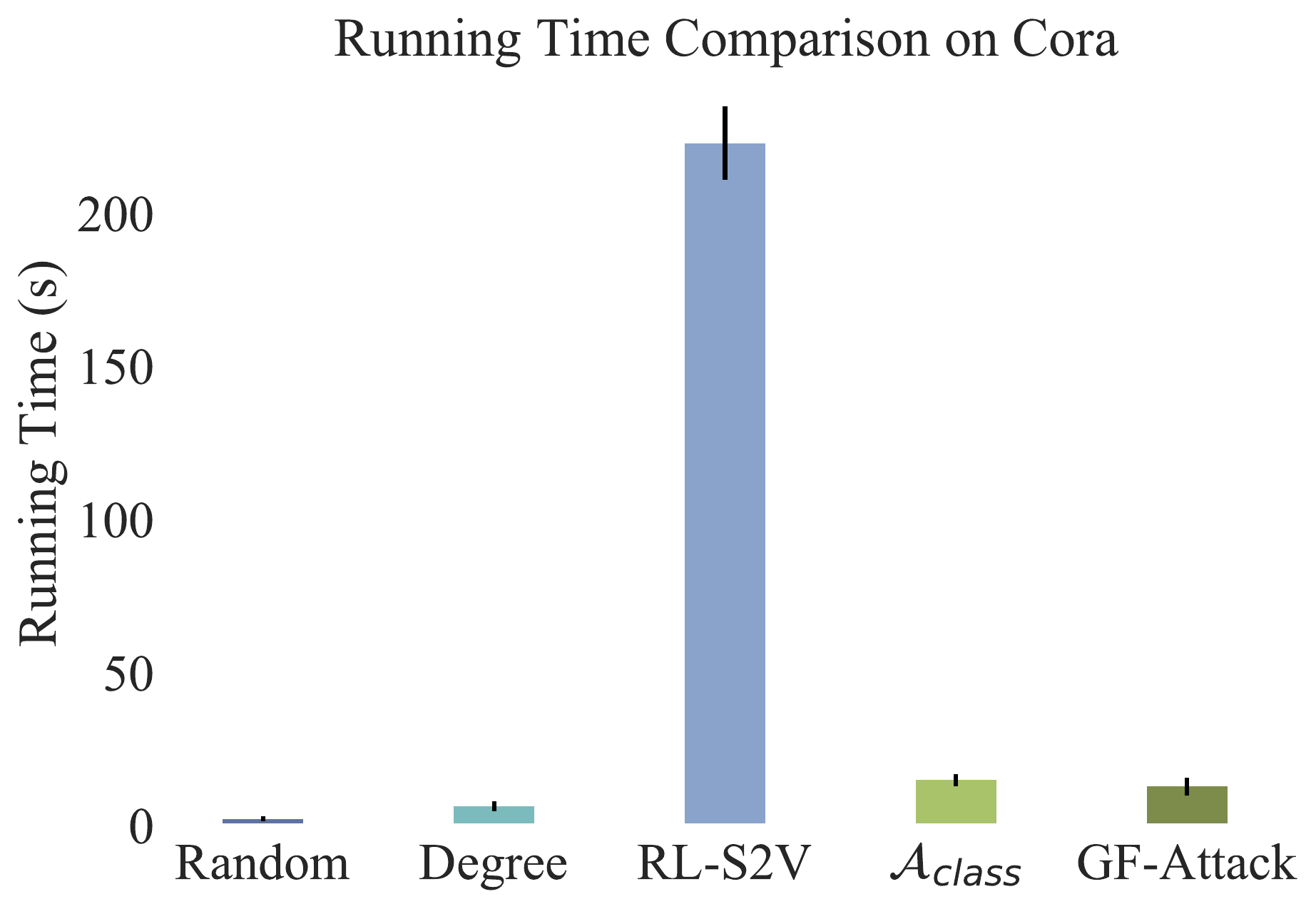}
\includegraphics[width=0.33\textwidth]{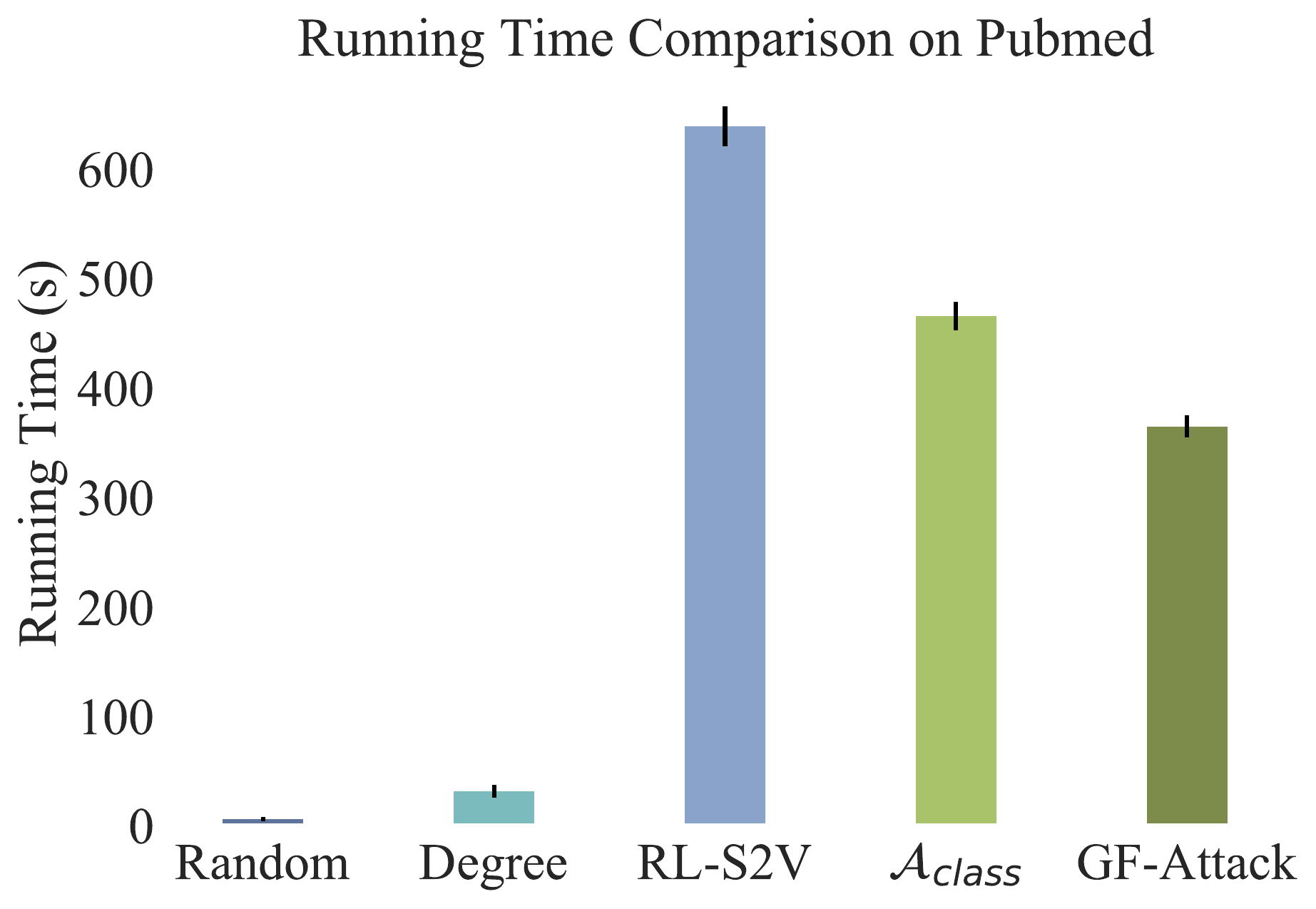}
\caption{Running time ($s$) comparison overall baseline methods on all datasets. We report the $10$ times average running time of processing a single vertex for each model and the error bars are at the top of each bar.} \label{fig:timeCost}
\vspace{4mm}
\end{figure*}

\begin{figure*}[!t]
\centering
\subfigure [GCN]{\includegraphics[width=0.24\linewidth]{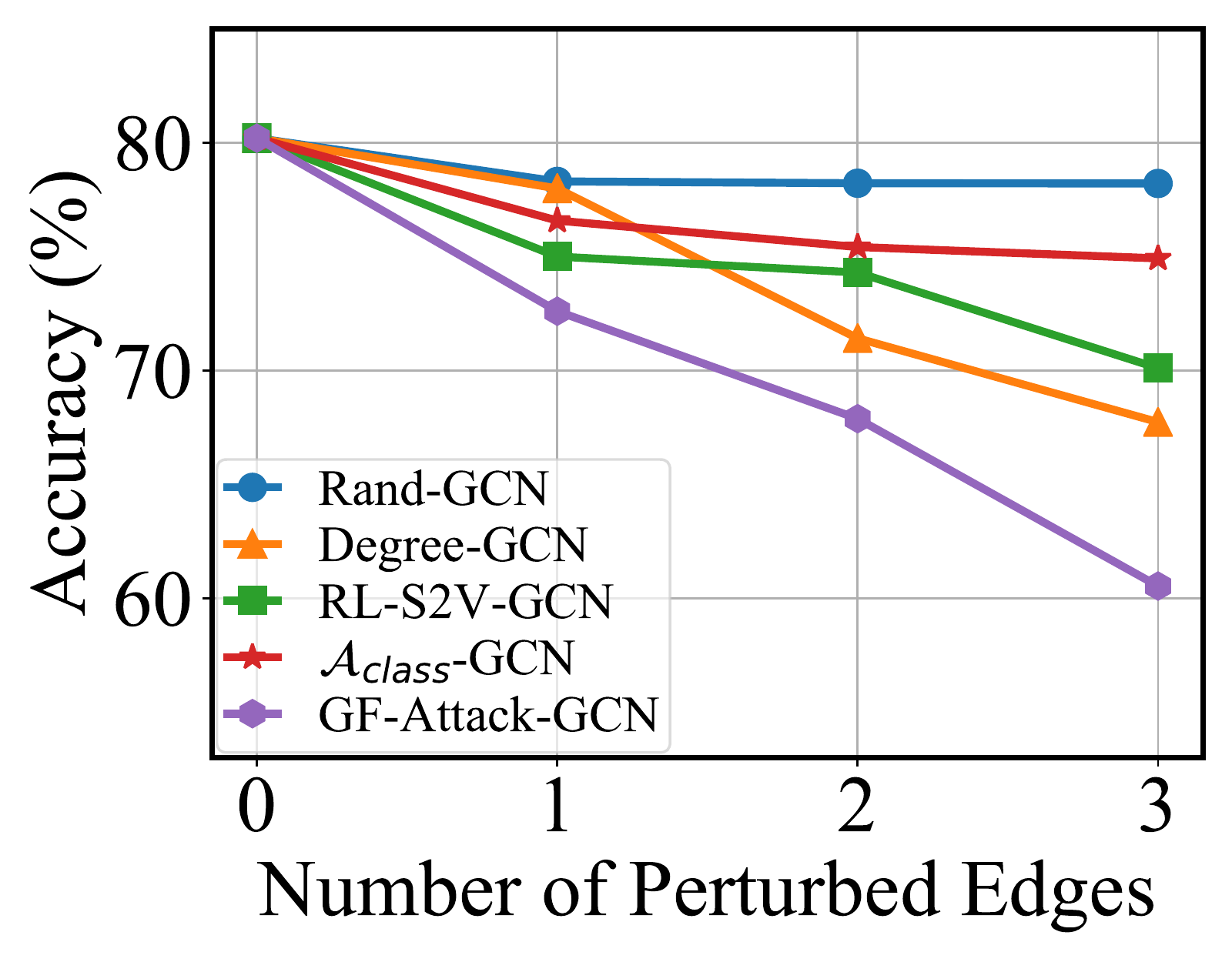}}
\subfigure [SGC]{\includegraphics[width=0.24\linewidth]{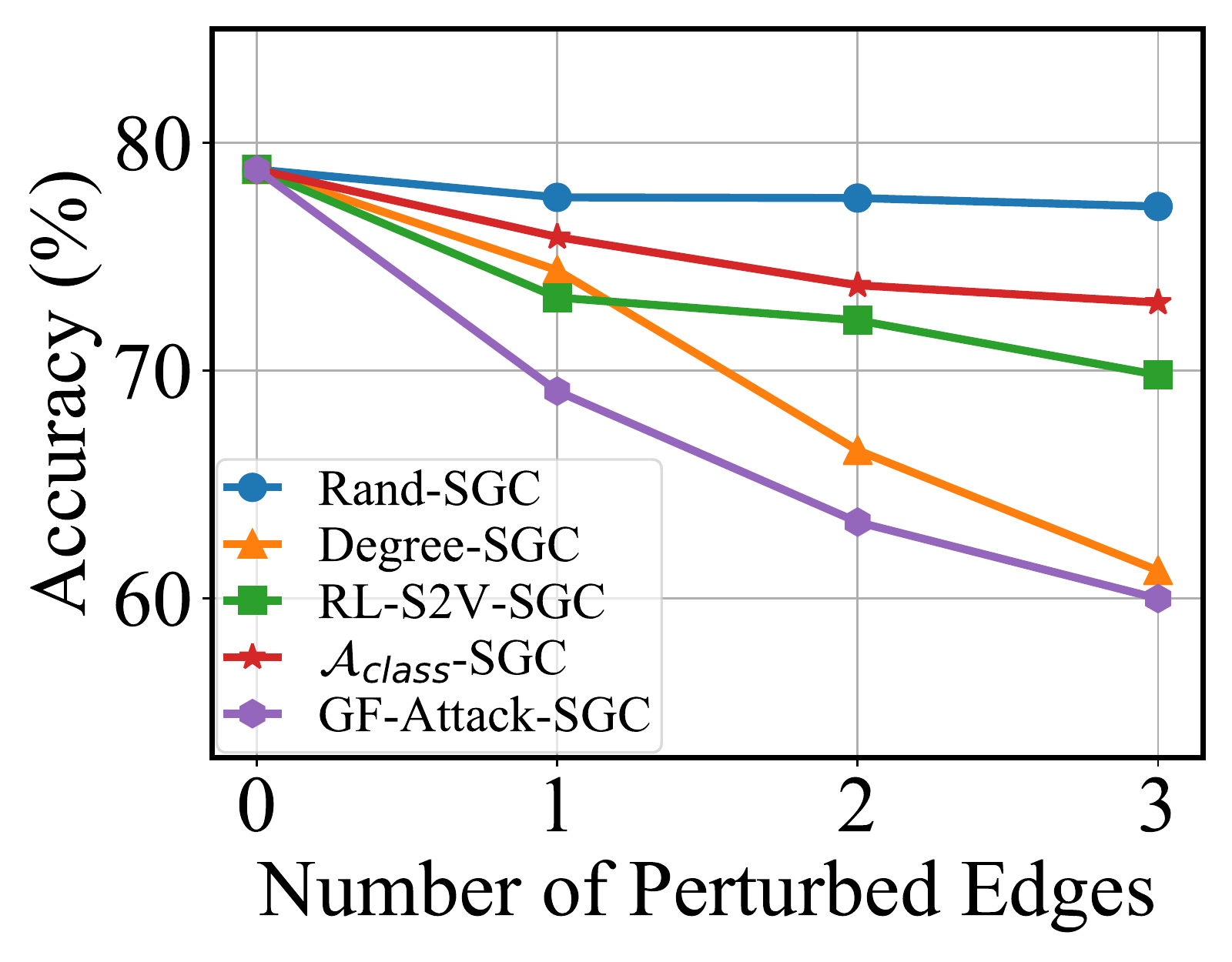}}
\subfigure [DeepWalk]{\includegraphics[width=0.24\linewidth]{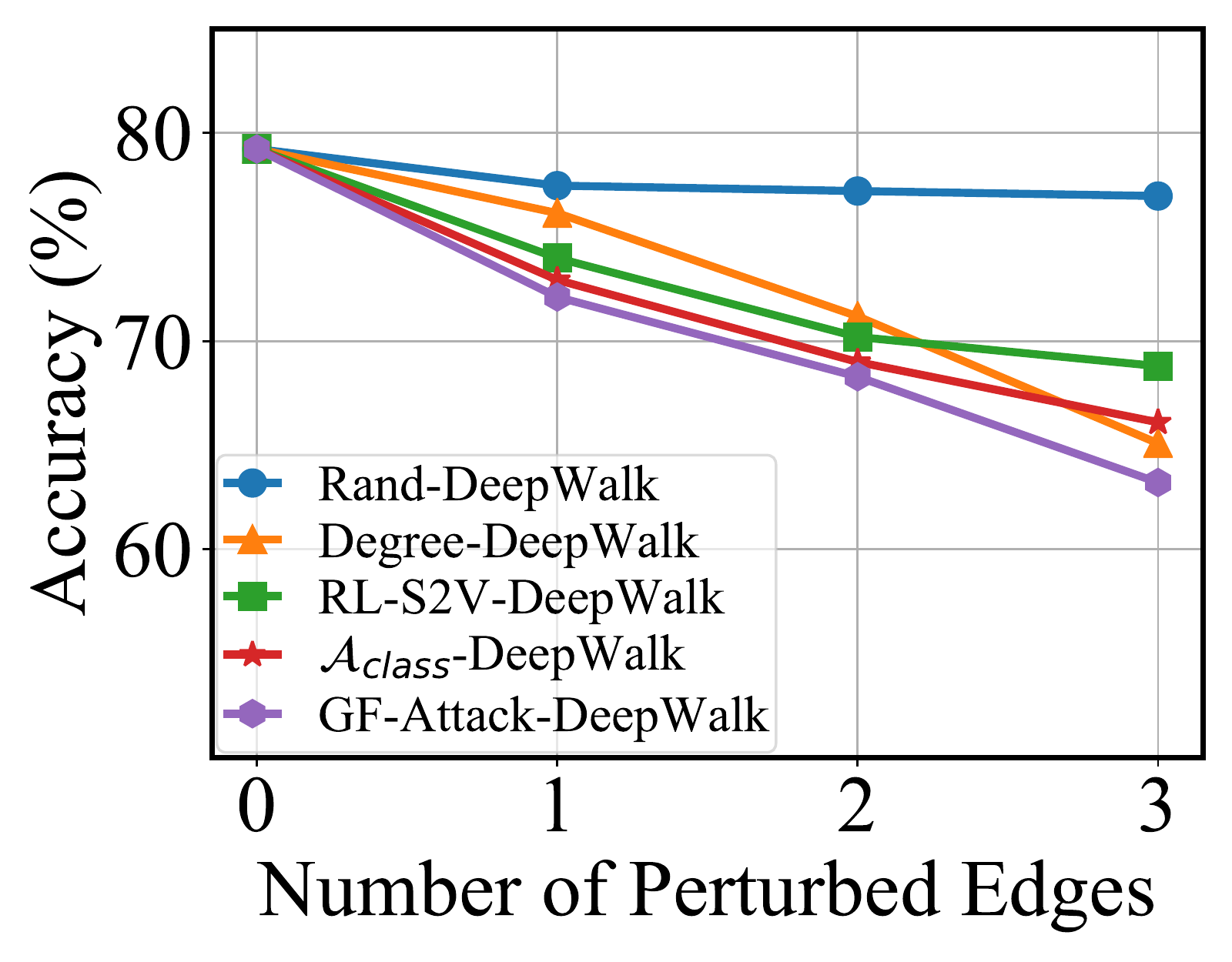}}
\subfigure [LINE]{\includegraphics[width=0.24\linewidth]{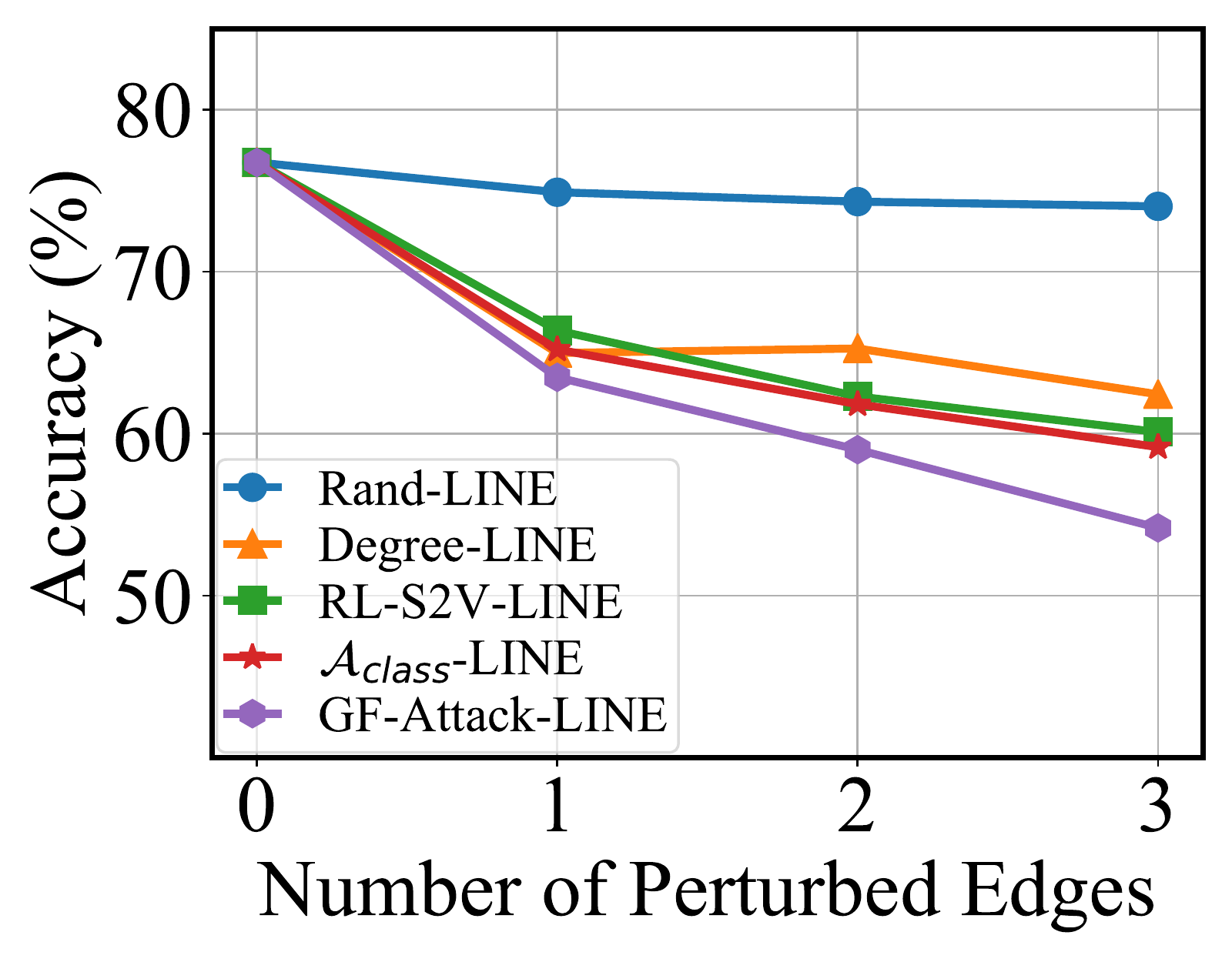}}
\caption{Multiple-edge attack results on Cora under RBA setting. Lower is better.}
\label{fig:multi-edge perturbation}
\end{figure*}

\begin{table}[!t]
\centering
\caption{Summary of the change in classification accuracy (in percent) compared to the clean/original graph. Single edge perturbation under the RBA and evasion settings. Lower is better. We use \textit{Cheby} for ChebyNet and \textit{DW} for DeepWalk here to save space. \label{tab:results-evasion}}
\resizebox{\columnwidth}{!}{%
\begin{tabular}{ l c c c c c c}
\toprule
    Dataset & \multicolumn{3}{c}{Cora} & \multicolumn{3}{c}{Citeseer}\\
\cmidrule(lr){2-4}\cmidrule(l){5-7}
Models & GCN & SGC & Cheby & GCN & SGC & Cheby\\
(unattacked) &   80.20 & 78.82  & 80.33	& 72.50 & 69.68 & 70.96\\
\hline
\textit{Random} & -1.22 & -1.90  & -1.05 & -1.73 & -1.86  & -1.80\\
\revision{\textit{Degree}} & -2.21 & -4.59  & -3.54 & -2.71 & -2.91  & -1.77\\
\textit{RL-S2V} & -3.25 & -3.74  & -4.18 & -2.30 & -3.80  & -2.78\\
\revision{\textit{$\mathcal{A}_{class}$}} & -2.83 & -4.03  & -3.54 & -1.92 & -3.78  & -3.67\\
\midrule
\textit{GF-Attack} &  \textbf{-4.76}  & \textbf{-5.23} & \textbf{-4.54} & \textbf{-4.14} & \textbf{-5.33} & \textbf{-4.96}\\
\bottomrule
\end{tabular}
}
\end{table}

\subsection{Attack Performance Evaluation}
In this section, we combine the original results from the AAAI version~\cite{chang2020restricted} and evaluate the overall attack performance of different attackers. \revision{Note that the sampling-based GEMs can only be attacked under the poisoning setting~\cite{icml2019adversarial}, since sampling-based GEMs rely on training to generate new embeddings for perturbed graphs. Thus here we choose to perform the attack on all victim models under this setting and damage GCNs under the evasion setting alone in Section~\ref{sec.evasion}. Meanwhile, we choose to use Eq.~\eqref{equ.GF-Attack-sym} as the attack loss for all victim models here. In contrast to our AAAI version~\cite{chang2020restricted} which uses different attack losses for different types of GEMs, we take a step further to
better demonstrate the effectiveness of \textit{GF-Attack} under a more black-box setting, since this new setting removes the assumption of what type the victim model is.}

\textbf{Attack on GCNs.}
Table \ref{tab:results single edge} summarizes the attack results of different attackers on GCNs. Our \textit{GF-Attack} outperforms other attackers on all datasets and all models, even on the more complex parameterized-filter model ChebyNet. Moreover, \textit{GF-Attack} performs quite well on 2 layers GCN with nonlinear activation. This implies the generalization ability of \textit{GF-Attack} on GCNs as discussed in Section~\ref{sec.GF-GCN}. 

\textbf{Attack on Sampling-based GEMs.}
Table~\ref{tab:results single edge} also summarizes the results of different attackers on the sampling-based GEMs. As expected, \textit{GF-Attack} achieves the best performance nearly on all victim models. It validates the effectiveness of \textit{GF-Attack} on attacking sampling-based GEMs. 

Another interesting observation is that the attack performance on LINE is much better than that on DeepWalk. This result may due to the deterministic structure of LINE, while the random sampling procedure in DeepWalk may help raise its resistance to adversarial attacks. Moreover, \textit{GF-Attack} on all graph filters successfully drop the classification accuracy on both 
GCNs and sampling-based GEMs, which again indicates \revision{the transferability of the adversary examples generated by our general framework in practice.}

\subsection{Evaluation of Multi-layer GCNs and Multi-window-size Sampling-based GEMs.}
To further investigate the transferability of our framework, we conduct attacks towards different multi-layer GCNs and multi-window-size sampling-based GEMs \wrt the order of graph filter under our \textit{GF-Attack} framework supplementary to the original AAAI version~\cite{chang2020restricted}. 

Figure~\ref{fig:layer vs order GCN}, Figure~\ref{fig:layer vs order SGC} and Figure~\ref{fig:layer vs order DW} present the attack results on $2$, $3$, $4$ and $5$ layers GCN and SGC, and DeepWalk with window-size $1$ (LINE), $2$, $3$, $4$ and $5$ on Citeseer. The number followed by \textit{GF-Attack} indicates the graph filter order $K$ used in the attack loss. 
From Figure~\ref{fig:layer vs order GCN} to Figure~\ref{fig:layer vs order DW}, we can have some interesting observations: 
\begin{itemize}
    \item All the adversarial losses with different orders $K$ can perform successful attacks on all models, which again indicates the effectiveness of \textit{GF-Attack}.
    
    \item Particularly, \textit{GF-Attack-5} achieves the best-attack performance in most cases. It implies that the higher-order filter contains more fruitful information and has positive effects on the attacks targeting simpler models. This finding is consistent with the Theorem~\ref{thm:GCN-no-order-K}.
    
    \item The attack performance on SGC seems better than GCN under most of the settings. We conjecture that the non-linearity between layers in GCN can enhance the robustness of GCN.
    
    \item The performance of the adversarial attack on DeepWalk is better when the window-size grows for window-size ranging from 2 to 5. This is consistent with the mechanism of DeepWalk since when the window-size is larger, vertices from the further neighborhood of the target vertex will participate in learning embeddings. 
\end{itemize}

\subsection{Evaluation under Multi-edge Perturbation Setting}
In this section, we evaluate the performance of attackers with multi-edge perturbation, i.e. $\beta \geq 1$, on all models supplementary to the original AAAI version~\cite{chang2020restricted}. 
The results of multi-edge perturbations on the Cora dataset under the RBA setting are reported in Figure~\ref{fig:multi-edge perturbation}.
Clearly, with the increase of the number of perturbed edges, the attack performance gets better for each attacker. \textit{GF-Attack} outperforms all the other baselines in all cases. It validates that \textit{GF-Attack} can still perform well when $\beta$ becomes larger. 

\subsection{Evaluation under Evasion Setting}\label{sec.evasion}
Since we mainly conduct analysis under the evasion setting in this work, we further investigate the performance of our framework under this setting with one-edge perturbation to demonstrate the ability of \textit{GF-Attack}. As shown in Table~\ref{tab:results-evasion}, we observe that the performance of all attack methods is degraded under the evasion attack setting, which implies that the GEMs could be misled by the adversarial examples during training under the poisoning setting. Further, \textit{GF-Attack} still consistently outperforms all baselines, though it is not specifically designed for the poisoning attacks.


\subsection{Computational Efficiency Analysis}
In this section, we empirically evaluate the computational efficiency of our \textit{GF-Attack}. 
\revision{A comparison of the average values of the running time for $10$ runs of our algorithm for all datasets is given in Figure~\ref{fig:timeCost}.}
While being less efficient than two native baselines (\textit{Random} and \textit{Degree}), our \textit{GF-Attack} is much faster than the novel baselines \textit{RL-S2V} and \textit{$\mathcal{A}_{class}$}. Combining the performance in Table~\ref{tab:results single edge}, it reads that \textit{GF-Attack} is not only effective in performance but also efficient computationally.

\section{Conclusion}\label{sec.conclusion}
In this paper, we consider the adversarial attack on different kinds of GEMs under the restricted black-box attack scenario. From the view of graph signal processing, we try to formulate the procedure of graph embedding methods as a general graph signal processing with the corresponding graph filters. Then we construct a restricted adversarial attack framework which aims to attack the graph filter only by the adjacency matrix and the feature matrix. Thereby, a general optimization problem is constructed by measuring the embedding quality and an effective algorithm is derived accordingly to solve it. Experiments show the vulnerability of different kinds of novel GEMs to our general attack framework.


%



\ifCLASSOPTIONcompsoc
  \section*{Acknowledgments}\label{sec.acknowledgment}
\else
  \section*{Acknowledgment}\label{sec.acknowledgment}
\fi

This work is supported in part by the National Key Research and Development Program of China (No. 2020AAA0106300, No. 2018AAA0102004),
the National Natural Science Foundation of China (No. 62102222, No.62006137, No. U1936219, No. 62141607), 
Tencent AI Lab Rhino-Bird Visiting Scholars Program (VS2022TEG001),
and the 2020 Tencent AI Lab Rhino-Bird Elite Training Program. We would like to thank Daniel Zügner from the Technical University of Munich for his valuable suggestions and discussions.

\ifCLASSOPTIONcaptionsoff
  \newpage
\fi




\printbibliography 

\end{document}